\DeclareMathOperator*{\argmin}{arg\,min}
\newcommand{\mbb}[1]{{\mathbb{#1}}}
\newcommand{\mc}[1]{{\mathcal{#1}}}
\newcommand{\bs}{\boldsymbol}
\newtheorem{proposition}{Proposition}
\DeclarePairedDelimiterX{\infdivx}[2]{[}{]}{%
  #1\;\delimsize\|\;#2%
}
\newcommand{\kl}{\mathrm{KL}\infdivx}
\begin{document}

\twocolumn[

\aistatstitle{Optimising Distributions with Natural Gradient Surrogates}

\aistatsauthor{Jonathan So \And Richard E. Turner}

\aistatsaddress{University of Cambridge \And University of Cambridge} ]

\begin{abstract}
    Natural gradient methods have been used to optimise the parameters of probability distributions in a variety of settings, often resulting in fast-converging procedures. Unfortunately, for many distributions of interest, computing the natural gradient has a number of challenges. In this work we propose a novel technique for tackling such issues, which involves reframing the optimisation as one with respect to the parameters of a \textit{surrogate} distribution, for which computing the natural gradient is easy. We give several examples of existing methods that can be interpreted as applying this technique, and propose a new method for applying it to a wide variety of problems. Our method expands the set of distributions that can be efficiently targeted with natural gradients. Furthermore, it is fast, easy to understand, simple to implement using standard autodiff software, and does not require lengthy model-specific derivations. We demonstrate our method on maximum likelihood estimation and variational inference tasks.
\end{abstract}

\section{INTRODUCTION}
\label{sec:introduction}
Many problems in scientific research require the optimisation of probability distribution parameters. Such problems are of fundamental importance in the fields of machine learning and statistics.  The goal of these optimisations is to minimise a function $f(\theta)$ where $\theta \in \Theta$ is the parameter vector of a probability distribution $q$. Notable examples include maximum likelihood estimation (MLE) and variational inference (VI).

When $f$ is differentiable, the archetypal optimisation method is gradient descent (GD).  GD takes steps in the direction in parameter space that gives the greatest decrease in $f$ for an infinitesimally small step size, which is equivalent to following the direction of the negative gradient.  While GD has many desirable properties, convergence can often be slow, and much effort has been spent in designing methods that offer improved convergence properties.

Natural gradient descent (NGD) is one such method for optimising probability distribution parameters \citep{amari1998natgrad}. In contrast with GD, NGD moves the parameters in the direction of steepest descent on a manifold of probability distributions, where steepness is defined with respect to a divergence measure between distributions. This notion of steepness is intrinsic to the manifold, and so it is not dependent on parameterisation, dispensing with a well-known pathology of GD.

There are some distributions for which the natural gradient can be computed efficiently. However, in its most general form, computing the natural gradient involves instantiating and inverting the Fisher information matrix, defined as
\begin{align}
    F(\theta) &= \mbb{E}_{q_\theta(x)}\Big[\nabla_\theta[\log q_\theta(x)]\nabla_\theta[\log q_\theta(x)]^\top\Big], \label{eqn:fisher}
\end{align}
posing several challenges.\footnote{We use e.g. $q$, to refer to both a distribution and its density function, with the intention made clear from context. We will use a subscript, e.g. $q_\theta$, when we wish to make the dependence on parameters explicit. Our gradient notation is explained in Appendix \ref{app:gradientnotation}.} First, for some distributions, the Fisher can become singular, in which case the natural gradient is undefined. Second, it may not be available in closed form, in which case it must be estimated using sampling or other numerical methods.  Finally, instantiating and inverting it explicitly costs $O(m^2)$ in memory and $O(m^3)$ in computation, for $m$ the number of parameters. For even moderately large $m$, this can be prohibitively expensive.

In this paper we propose a simple technique for tackling optimisation problems in which computing the natural gradient is problematic. Namely, we reframe the problem as an optimisation with respect to a \textit{surrogate} distribution $\tilde q$, for which computing natural gradients is easy, and perform the optimisation in that space. With a judiciously chosen surrogate, convergence can be rapid.

We find that a number of existing methods can be interpreted as applications of this technique: the natural gradient VI method of \citet{lin2020mcef} for exponential family mixtures, stochastic natural gradient expectation propagation \citep{hasenclever2017}, a fixed-point iteration scheme for optimising elliptical copulas \citep{hernandez2014copula}, as well as the typical use of natural gradients in large-scale supervised learning settings. We also describe a new method for applying the technique to a wide variety of problems, using exponential family (EF) surrogate distributions. Our method is easy to understand, simple to implement using standard autodiff software, and does not require the use of lengthy model-specific derivations.

We present a number of experiments in which we find that our method significantly reduces the time to convergence compared to existing best-practice methods, both in number of iterations and wall-clock time. Our experiments consist of a variety of MLE and VI tasks, but we note that the method is applicable more generally to optimisations involving probability distributions, or indeed to any optimisation in which we think a manifold of probability distributions may serve as a useful surrogate for the solution space.

Our main contributions can be summarised as follows:
\begin{enumerate}
    \item We propose a novel technique for optimising probability distribution parameters.
    \item We prove the validity of this technique under stated conditions.
    \item We find several examples of existing methods that can be viewed as applying this technique.
    \item We describe a simple new method for applying it, using known properties of EF distributions.
    \item We present a variety of MLE and VI experiments in which our method achieves significantly faster convergence than existing best-practice methods.
\end{enumerate}

This paper is structured as follows. In Section \ref{sec:background} we provide a brief overview of NGD and EF distributions. Section \ref{sec:method} covers items 1 and 4 of the contributions above, as well as an overview of item 2 (details found in Appendix \ref{app:validity}). Sections \ref{sec:experiments} and \ref{sec:relatedwork} cover items 5 and 3 respectively. Finally, in Section \ref{sec:discussion}, we discuss limitations of our method, and avenues for future research.

\section{BACKGROUND}
\label{sec:background}
In this section we begin with an overview of NGD. We then briefly describe a class of distribution families for which we can efficiently compute natural gradients, namely exponential families. NGD and EF distributions will together serve as a foundation for the method we introduce in Section \ref{sec:method}.

\subsection{Natural Gradient Descent}
\label{subsec:background_ngd}

NGD updates the parameters of a distribution $q$ by taking a step proportional to the gradient of the objective $f$, preconditioned by the inverse of the Fisher matrix of $q$.  That is, the update in parameters at step $t$ of the optimisation is given by
\begin{align}
    \theta_{t+1} &= \theta_t - \epsilon_t [F(\theta_t)]^{-1}\nabla f(\theta_t) \label{eqn:ngd_step}
\end{align}
where $F(\theta)$ is given by \eqref{eqn:fisher}. The step size $\epsilon_t$ may follow some pre-defined schedule, or be found by line search. It can be shown that as $\epsilon_t \rightarrow 0^+$, update \eqref{eqn:ngd_step} moves $\theta$ in the direction of steepest descent in $f$, on the manifold of probability distributions spanned by $q$, where steepness is defined with respect to a divergence measure between distributions \citep{ollivier2017natgrads}. Because this notion of steepness is intrinsic to the manifold, NGD is locally invariant to parameterisation.

When NGD is applied to MLE objectives, it is Fisher-efficient \citep{amari1998natgrad}, and can be seen as a robust approximation to Newton's method \citep{martens2015kfac}. These properties do not, in general, apply outside of this setting. Nevertheless, natural gradient methods have been applied in several other settings, often resulting in rapidly converging procedures \citep{kakade2001npg,hoffman2013svi,khan2017van,hasenclever2017}.

\subsection{Exponential Family Distributions}
\label{subsec:background_ef}

In this section we provide a brief introduction to EF distributions, whose properties support efficient natural gradient computation.

The EF of distributions defined by the vector-valued statistic function $t$ and base measure $\nu$, has density
\begin{align}
    q_\eta(x) &= \nu(x)\exp\big(t(x)^\top \eta - A(\eta)\big),
\end{align}
where $A$ is the log partition function, and $\eta$ are the \textit{natural parameters}. When the components of $t$ are linearly independent, the family is said to be \textit{minimal}.

There is an alternative parameterisation of $q$, given by 
\begin{align}
    \mu(\eta) &= \mbb{E}_{q_\eta(x)}[t(x)],
\end{align} where $\mu$ are known as the the \textit{mean parameters} of $q$. In minimal families the correspondence between $\eta$ and $\mu$ is one-to-one, and we denote the reverse map as $\eta(.)$.\footnote{Due to our overloaded use of $\mu$ and $\eta$ to denote both specific parameters and maps between parameterisations, we will use e.g. $\mu(.)$ to disambiguate the latter if the intention is not clear from context.} In this paper we will only consider minimal EFs.

Notable examples of EF distributions include the multivariate normal, gamma, categorical, and Dirichlet distributions to name but a few.

Let $f_\eta$ and $f_\mu$ be functions related by $f_\eta = f_\mu \circ \mu$. It is a remarkable property of EFs that the natural gradients of $f_\eta$ and $f_\mu$ with respect to $\eta$ and $\mu$, respectively, are given by
\begin{align}
    [F(\eta)]^{-1}\nabla f_\eta(\eta) &= \nabla f_\mu\boldsymbol(\mu(\eta)\bs) \label{eqn:ef_natgrad_natparams} \\
    [F(\mu)]^{-1}\nabla f_\mu(\mu) &= \nabla f_\eta\bs(\eta(\mu)\bs).\label{eqn:ef_natgrad_meanparams}
\end{align}
That is, the natural gradient in one parameterisation is simply given by the regular gradient with respect to the other \citep{hensman2012expfam}. Provided that we have an efficient way of converting between natural and mean parameters, this allows us to compute the natural gradient without explicitly instantiating or inverting $F(\theta)$.

\section{METHOD}
\label{sec:method}

The exponential families introduced in Section \ref{sec:background} provide us with a relatively rich set of distributions for which we can efficiently compute natural gradients.  However, there are many distributions outside of this set for which computing natural gradients remains difficult. In this section, we present details of our technique, which expands the set of distributions that we can efficiently target with natural gradients.

\subsection{Surrogate Natural Gradient Descent}
\label{subsec:method_sngd}

The main idea in this paper is simple.  When faced with an optimisation objective $f(\theta)$, where $\theta \in \Theta$ are the parameters of a distribution $q$ for which computing the natural gradient update is problematic, we solve the problem in two steps. First, we re-parameterise $f$ as a function of some other parameters $\tilde{\theta} \in \tilde{\Theta}$, related by $\theta = g(\tilde\theta)$, and define the reparameterised objective
\begin{align}
    \tilde{f}(\tilde{\theta}) = f\bs(g(\tilde\theta)\bs).
\end{align}
Second, we \textit{interpret} $\tilde{\theta}$ as the parameters of a surrogate distribution $\tilde q$, for which computing natural gradients is easy, and then perform NGD in $\tilde{f}$ with respect to $\tilde q$ and $\tilde\theta$. That is, we perform a sequence of updates
\begin{align}
    \tilde{\theta}_{t+1} &= \tilde{\theta}_t - \epsilon_t [\tilde{F}(\tilde{\theta}_t)]^{-1}\nabla\tilde{f}(\tilde\theta_t), \label{eqn:sngd_update}
\end{align}
where $\tilde{F}(\tilde\theta)$ is the Fisher information matrix of $\tilde q$. We refer to $q$ and $\tilde q$ as the \textit{target} and \textit{surrogate} distributions respectively. Update \eqref{eqn:sngd_update} has the straightforward interpretation of performing preconditioned GD in a reparameterised objective. Upon converging to a local minimiser $\tilde\theta^*$ of $\tilde f$, a solution to the original problem is obtained by $\theta^* = g(\tilde\theta^*)$. We call this technique \textit{surrogate natural gradient descent} (SNGD).

As a simple example, when $q$ is a multivariate Student's $t$ distribution with known degrees of freedom $\nu$, we can choose $\tilde q$ to be multivariate normal. If $\tilde\theta$ contains the mean and scale matrix parameters of $\tilde q$, and $\theta$ the mean and covariance matrix of $q$, then a natural choice for $g$ is simply the identity map on $\tilde\Theta = \Theta = \mathbb{R}^d\times\mathbb{S}_{++}^d$, where $\mathbb{S}_{++}^d$ is the set of $d$-dimensional positive-definite matrices.

It is natural to question which properties SNGD shares with NGD under $q$. Given that SNGD \emph{is} NGD (with respect to $\tilde q$), it remains locally invariant to parameterisation, and performs steepest descent in a statistical manifold (that of $\tilde q$). What is not guaranteed, is that the statistical manifold of $\tilde q$ remains useful for the optimisation of $\tilde f$. For example, in MLE settings, SNGD will not in general retain the asymptotic efficiency of NGD under $q$. However, what it gains is tractability, and as we demonstrate in Section \ref{sec:experiments}, the practical performance benefits can be significant. In some cases, remarkably, SNGD can actually outperform NGD under $q$, with respect to both the original parameters $\theta$, and the reparameterisation defined by $g$.

\subsection{Choice of Surrogate}
\label{subsec:method_choosingsurrogates}

The performance of SNGD relies crucially on appropriate choices for $\tilde q$ and $g$. A natural assumption would be that $\tilde q_{\tilde\theta}$ should be an approximation to $q_{g(\tilde\theta)}$, and in some cases this can be a useful guide. However, often an effective surrogate can be found that has support over an entirely different space than that of $q$; in fact, this is true for most of the examples in this paper, as can be seen in Table \ref{tbl:models}.

A more general principle is that a surrogate should be chosen such that $\tilde\theta$ has similar \emph{local} effects on Kullback-Leibler (KL) divergences in $q$ than it does in $\tilde q$. More specifically, the effect of a small (infinitesimal) perturbance $\delta$ in $\tilde\theta$ should have a similar impact on the KL divergence from $\tilde q_{\tilde\theta+\delta}$ to $\tilde q_{\tilde\theta}$ as it does on that from $q_{g(\tilde\theta+\delta)}$ to $q_{g(\tilde\theta)}$.\footnote{See Appendix \ref{app:choosingsurrogates} for a definition of the KL divergence.} That is, at $\delta =\mathbf{0}$, we would like:
\begin{align}
    \nabla_\delta\Big(\kl[\big]{\tilde q_{\tilde\theta+\delta}}{\tilde q_{\tilde\theta}}\Big) &= \nabla_\delta\Big(\kl[\big]{ q_{g(\tilde\theta+\delta)}}{q_{g(\tilde\theta)}}\Big).\label{eqn:klapprox_1}
\end{align}
If equality \eqref{eqn:klapprox_1} is satisfied then SNGD will move in the same direction in $\tilde\theta$ as NGD under (reparameterised) $q$ (see Appendix \ref{app:choosingsurrogates} for details). Finding tractable surrogates for which equality holds exactly will not typically be possible, but this motivates choosing $\tilde q$, $g$, for which it is approximately true.

\subsection{Equivalence with Optimisation of $f$}
\label{subsec:method_equivalence}

In Appendix \ref{app:validity} we prove a number of results regarding the validity of SNGD, which we summarise here.

Let $\Theta$ and $\tilde\Theta$ be open subsets of $\mbb{R}^i$ and $\mbb{R}^j$ respectively, with $i \le j$. Let $g: \tilde\Theta \rightarrow \Theta$ be twice differentiable, with Jacobian of rank $i$ everywhere, and with $g(\tilde\Theta) = \Theta$. Finally, let $f : \Theta \rightarrow \mbb{R}$ be twice continuously differentiable, and define $\tilde f = f \circ g$.

Under these conditions, optimising $\tilde f$ is equivalent to optimising $f$ in the following sense: finding a local minimiser of $\tilde f$ also gives us a local minimiser for $f$, and all local minima of $f$ are attainable through $\tilde f$. Furthermore, $\tilde f$ does not have any non-strict saddle points that are not also present at the corresponding points in $f$, and so $\tilde f$ does not introduce any additional spurious attractors \citep{lee2016gd}.

\subsection{Exponential Familly Surrogates}

In this paper we choose $\tilde q$ from the set of EF distributions described in Section \ref{sec:background}, allowing us to perform update \eqref{eqn:sngd_update} efficiently without explicitly instantiating or inverting $\tilde F(\tilde\theta)$.  Algorithm \ref{alg:sngd} provides pseudocode for an implementation of SNGD when $\tilde\Theta$ is either the mean or natural domain of an EF. It assumes the existence of an autodiff operator $\texttt{grad}$, and an overloaded function $\texttt{dualparams}$, which maps from natural to mean parameters, or vice-versa, depending on the type of its argument. A full explanation of Algorithm \ref{alg:sngd} can be found in Appendix \ref{app:algorithms_sngd}.

\begin{algorithm}
    \caption{SNGD with EF surrogate}
    \label{alg:sngd}
    \begin{algorithmic}
        \Require objective $f : \Theta \rightarrow \mbb{R}$
        \Require parameter mapping $g : \tilde{\Theta} \rightarrow \Theta$
        \Require initial surrogate parameters $\tilde\theta_0 \in \tilde\Theta$
        \Require step size schedule $\{\epsilon_t \in \mbb{R}_+ : t = 0, 1, ... \}$
        
        \State $\tilde f_\text{dual}(.) := f(g(\texttt{dualparams}(.)))$
        \State $t \leftarrow 0$
        \While{not converged}
            \State $\tilde{\nabla} \leftarrow \texttt{grad}[\tilde f_\text{dual}](\texttt{dualparams}(\tilde\theta_t))$
            \State $\tilde\theta_{t+1} \leftarrow \tilde\theta_t - \epsilon_t\tilde{\nabla}$
            \State $t \leftarrow t + 1$
        \EndWhile \\
        \Return{$g(\tilde\theta_t)$}
    \end{algorithmic}
\end{algorithm}

The domains of valid natural and mean parameters of an EF are open convex sets. It is possible that finite-length NGD steps in these parameters can move outside of the valid domain. These steps can be corrected by using a backtracking linesearch, an approach taken in previous work on natural gradients \citep{khan2017ccvi}. However, we find that when this is required, the backtracking typically occurs so infrequently as to have a relatively small impact on performance.

\subsection{Auxiliary Parameters}
\label{subsec:method_auxiliaryparams}

In some cases, a chosen surrogate's parameters, $\tilde\theta$, may not be sufficient to fully specify $\theta$. To handle such cases, we can generalise SNGD by augmenting $\tilde\theta$ with an additional vector of parameters, $\lambda \in \Lambda$, that are \emph{not} optimised with natural gradients. Our reparameterised objective is then given by 
\begin{align}
    \tilde{f}(\tilde{\theta}, \lambda) = f\bs(g(\tilde\theta, \lambda)\bs)\label{eqn:ftilde_auxiliary}.
\end{align}
We optimise \eqref{eqn:ftilde_auxiliary} by updating $\tilde\theta$ using natural gradients as before, and $\lambda$ using standard gradient-based techniques, either jointly or in alternation. For example, we can apply the sequence of updates
\begin{align}
    \tilde{\theta}_{t+1} &= \tilde{\theta}_t - \epsilon_t [\tilde{F}(\tilde{\theta}_t)]^{-1}\nabla_{\tilde\theta}\tilde{f}(\tilde\theta_t, \lambda_t), \label{eqn:sngd_auxiliary_theta_update} \\
    \lambda_{t+1} &= \lambda_t - \varepsilon_t\nabla_{\lambda}\tilde{f}(\tilde\theta_t, \lambda_t), \label{eqn:sngd_auxiliary_lambda_update}
\end{align}
where $\epsilon_t$, $\varepsilon_t$ are the step sizes for $\tilde\theta$, $\lambda$, respectively at time $t$. Update \eqref{eqn:sngd_auxiliary_lambda_update} corresponds to GD in $\lambda$, but we may equally use any other first-order optimiser.

Revisiting our example from Section \ref{subsec:method_sngd}, if $q$ is a multivariate Student's $t$ distribution, but now with \emph{unknown} degrees of freedom $\nu$, we can again directly map from $\tilde\theta$, which contains the location and scale matrix parameters of $\tilde q$, to the mean and covariance of $q$. $\nu$ has no analogue in the multivariate normal distribution however, and so we can capture this with $\lambda = (\nu)$. $g$ can then simply be the identity map on $\tilde\Theta\times\Lambda = \Theta = \mathbb{R}^d\times\mathbb{S}_{++}^d\times \mathbb{R}_+$.\footnote{In practice it would be preferable to choose $\lambda = (\log\nu)$, with $g$ adjusted accordingly, so that optimisation of $\lambda$ is unconstrained.}

This extension expands the set of distributions that can be targeted by SNGD. Pseudocode for this extension is given in Appendix \ref{app:algorithms_sngd_auxiliaryparams}, with examples appearing in the experiments of Sections \ref{subsec:experiments_skewelliptical} and \ref{subsec:experiments_ellipticalcopula}.

\section{RESULTS}
\label{sec:experiments}

\begin{table*}[t]
  \caption{Summary of the surrogate-target pairs appearing in this paper, with links to the relevant sections. For the sake of readability we have omitted the conventional multivariate prefix from distribution names, as this information is conveyed by the support.  Parameter mappings for these examples are found in Appendix \ref{app:mappings}.}
  \label{tbl:models}
  \centering
  \begin{tabular}{llllc}
    \toprule
    \multicolumn{2}{c}{\textbf{TARGET}} & \multicolumn{2}{c}{\textbf{SURROGATE}} \\
    \cmidrule(r){1-2} \cmidrule(r){3-4}
    \textbf{DISTRIBUTION} & \textbf{SUPPORT} & \textbf{DISTRIBUTION} & \textbf{SUPPORT} & \textbf{SECTION} \\
    \midrule
    Negative binomial& $\mbb{N}$ & Gamma  & $\mbb{R}_+$ & \ref{subsec:experiments_negbin} \\
    Negative binomial mixture & $\mbb{N}$ & Gamma mixture model & $\mbb{R}_+\times\{1, ..., k\}$ & \ref{subsec:experiments_mixture} \\
    % Student's $t$ & $\mbb{R}^d$ & Normal & $\mbb{R}^d$ & \ref{subsec:experiments_mvt} \\
    Skew-normal & $\mbb{R}^d$ & Normal & $\mbb{R}^d$ & \ref{subsec:experiments_skewelliptical} \\
    Skew-normal mixture & $\mbb{R}^d$ & Normal mixture model & $\mbb{R}^d\times\{1, ..., k\}$ & \ref{subsec:experiments_mixture} \\
    Skew-$t$ & $\mbb{R}^d$ & Normal & $\mbb{R}^d$ & \ref{subsec:experiments_skewelliptical} \\
     % & $\mbb{R}^d$ & MCEF Student's $t$ & $\mbb{R}^d\times\mbb{R}_+$ & \ref{subsec:experiments_skewelliptical} \\
     % & $\mbb{R}^d$ & MCEF Skew-Normal & $\mbb{R}^d\times\mbb{R}$ & \ref{subsec:experiments_skewelliptical} \\
    Elliptical copula & $[0,1]^d$ & Zero-mean normal & $\mbb{R}^d$ & \ref{subsec:experiments_ellipticalcopula} \\
    % Kent & $S^2$ & Normal & $\mbb{R}^2$ & \ref{subsec:experiments_kentdistribution} \\
    % Generalised Dirichlet & $ \Delta^{d-1}$ & Dirichlet & $\Delta^{d-1}$ & \ref{subsec:experiments_skewelliptical} \\
    \bottomrule
  \end{tabular}
\end{table*}

In this section we present a number of experiments demonstrating the utility of SNGD. Natural gradient methods have been used in a variety of settings, but perhaps most extensively for MLE \citep{amari1998natgrad,bernacchia2018natgrad,martens2015kfac,ren2019natgrad,roux2007topmoumoute} and VI \citep{hoffman2013svi,khan2017ccvi,khan2017van,khan2018vogn,lin2020mcef,salimbeni2018natgrad}. We therefore used a variety of MLE and VI tasks as test cases.

In MLE the goal is to find parameters $\theta$ of distribution $q$ that maximise the (log) likelihood of observed data $\{x_i\}_{i=1}^n$ under $q$. That is, the objective function for MLE tasks is of the form
\begin{align}
    f(\theta) &= -\sum_{i=1}^n\log q_\theta(x_i) .\label{eqn:mle_objective}
\end{align}
In VI we are given a generative model $p(x)p(\mathcal{D}\mid x)$, and observed data $\mathcal{D}$, and the goal is to find parameters $\theta$ of distribution $q$ that minimise the Kullback-Leibler (KL) divergence from $q$ to the posterior $p(x|\mathcal{D})$. The objective function for VI tasks is of the form
\begin{align}
    f(\theta) &= -\mathbb{E}_{q_\theta(x)}\bigg[\log\frac{p(\mathcal{D},x)}{q_\theta(x)}\bigg]. \label{eqn:vi_objective}
\end{align}
It can be shown that minimising \eqref{eqn:vi_objective} is equivalent to maximising a lower bound on the log marginal likelihood: $\log p(\mathcal{D})$. In the VI experiments of this section, gradients of \eqref{eqn:vi_objective} were estimated by applying the reparameterisation trick \citep{kingma2014aevb} to the target distribution.

In each experiment, we compared SNGD with a number of baselines. For tasks that were small scale, and had objectives that could be computed deterministically, we compared SNGD with two baselines: GD and BFGS \citep{nocedal2006opt}. In these cases, we used exact line search to determine step sizes for each method, in order to compare search direction independent of hyperparameter settings. 
For larger tasks, or where only stochastic estimates of $f$ were available (minibatched or VI experiments), we used Adam as the standard baseline, and used grid search to determine the best hyperparameters for each method.

Further details of all experiments are given in Appendix \ref{app:experiment_details}. A summary of surrogate-target pairs used for SNGD in these experiments can be found in Table \ref{tbl:models}, and details of parameter mappings can be found in Appendix \ref{app:mappings}.

We display average training curves for each experiment, with error bars corresponding to 2 standard errors, computed across 10 random seeds. In Appendix \ref{app:experiment_results} we plot Pareto frontiers, showing the best performance achieved by \emph{any} hyperparameter setting, as a function of both iteration count and wall-clock time.

In this section we use $n$ to denote the number of training data points, and $d$ for the dimensionality of the random variable under $q$. For brevity, we omit the conventional multivariate prefix from distribution names in this section, as this is implicit for $d > 1$.

\subsection{Negative Binomial Distribution}
\label{subsec:experiments_negbin}

The negative binomial distribution is widely used for modelling discrete data \citep{fisher1941negbin,lloydsmith2005superspreading,lloydsmith2007negbin,orooji2021countregression,kendall2023covid}. Maximum likelihood estimates of the negative binomial parameters are not available in closed form, and must be found numerically.

Let $\theta = (r, s)$ be the parameters of the negative binomial distribution with probability mass function
\begin{align}
    q(x) &= \binom{x+r-1}{x}(1-s)^xs^r.\label{eqn:negbin_pmf}
\end{align}
The gamma distribution can be seen as a continuous analogue of the negative binomial, and as it is a EF, it is a natural choice of surrogate when targeting the negative binomial with SNGD.

Let $\tilde q$, therefore, be the gamma distribution with parameters $\alpha$, $\beta$ and probability density function
\begin{align}
    \tilde q(x) &= \frac{\beta^\alpha}{\Gamma(\alpha)}x^{\alpha-1}\exp^{-\beta x}.
\end{align}
\citet{guenther1972negbin} used the gamma distribution to approximate the CDF of the negative binomial, using the mapping $(r, s) = (\alpha/(1-\beta), \beta)$. Let $g$ be the equivalent mapping, but defined in terms of \textit{mean} parameters of $\tilde q$. To ensure that $r$ and $s$ correspond to valid negative binomial parameters, we restrict $\Theta$ to the subset of gamma mean parameters for which $\beta < 1$.\footnote{It can be shown that $\Theta$ remains an open convex set.} Finally, let $f$ be defined as \eqref{eqn:mle_objective}, where $\{x_i\}_{i=1}^n$ is the dataset of \citet{fisher1941negbin}, consisting of counts of ticks observed on a population of sheep ($n$=82, $d$=1). 

Using these definitions, we applied Algorithm \ref{alg:sngd}, modified to use a line search as discussed at the start of this section. We compared SNGD with our standard baselines: GD and BFGS. Additionally, this task was sufficiently small that we were able to compare with NGD under $q$ by estimating and inverting the Fisher directly.

Figure \ref{fig:negbin_experiment} shows that SNGD outperformed all of the baselines. Remarkably, this included NGD under $q$; this surprising result is explored in detail in Appendix \ref{app:ngdcomparison}. There, we show that this was true even when NGD was performed with respect to the reparameterisation defined by $g$, implying that this is not simply an artefact of SNGD using a more effective parameterisation; rather, it is the change of parameterisation \emph{and} distribution that results in such rapid convergence.

\begin{figure}[ht]
    \centering
    \begin{subfigure}[t]{.49\linewidth}
        \centering
        \includegraphics[height=.9\linewidth]{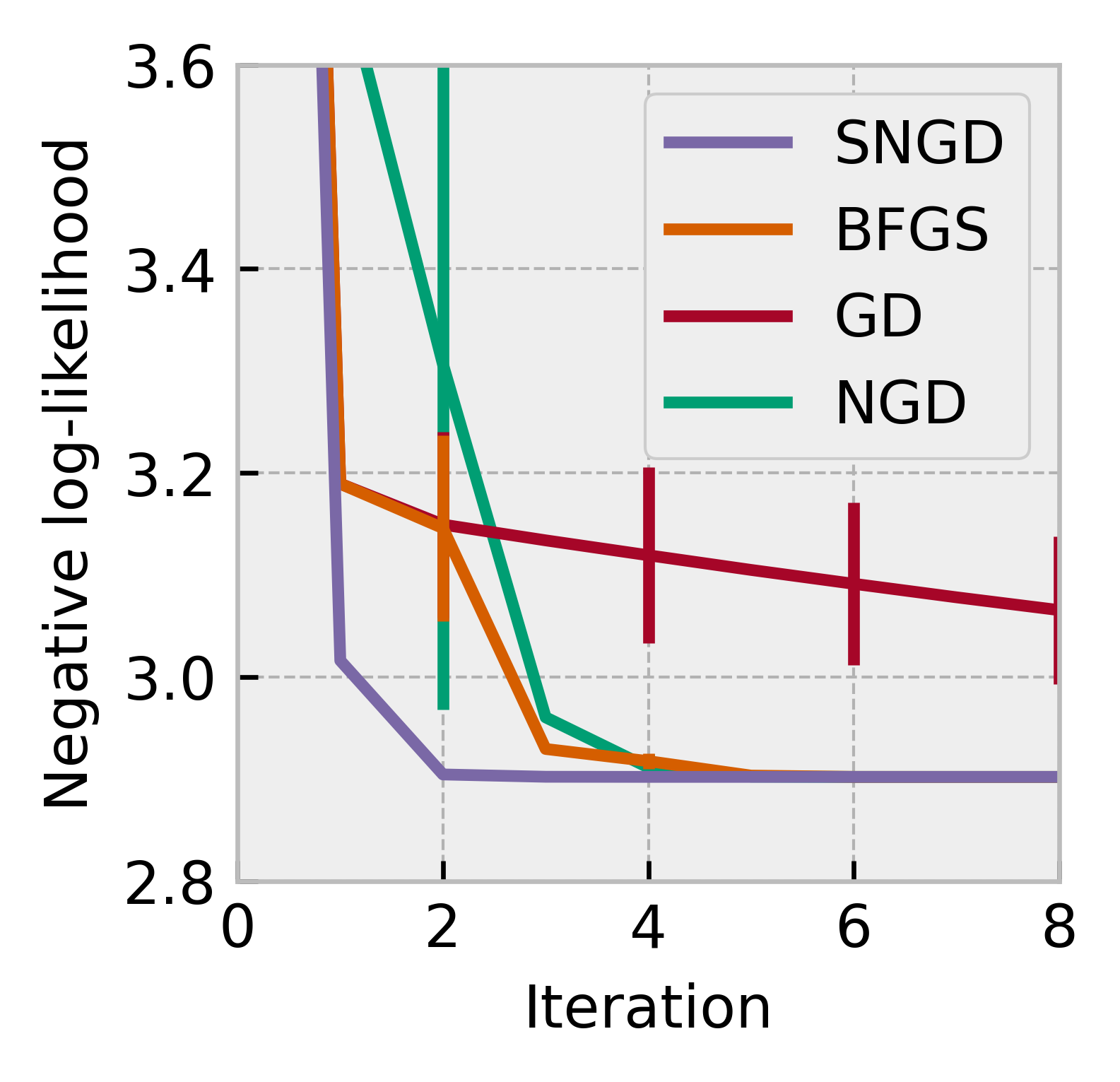}
    \end{subfigure}
    \hfill
    \begin{subfigure}[t]{.49\linewidth}
        \centering
        \includegraphics[height=.9\linewidth]{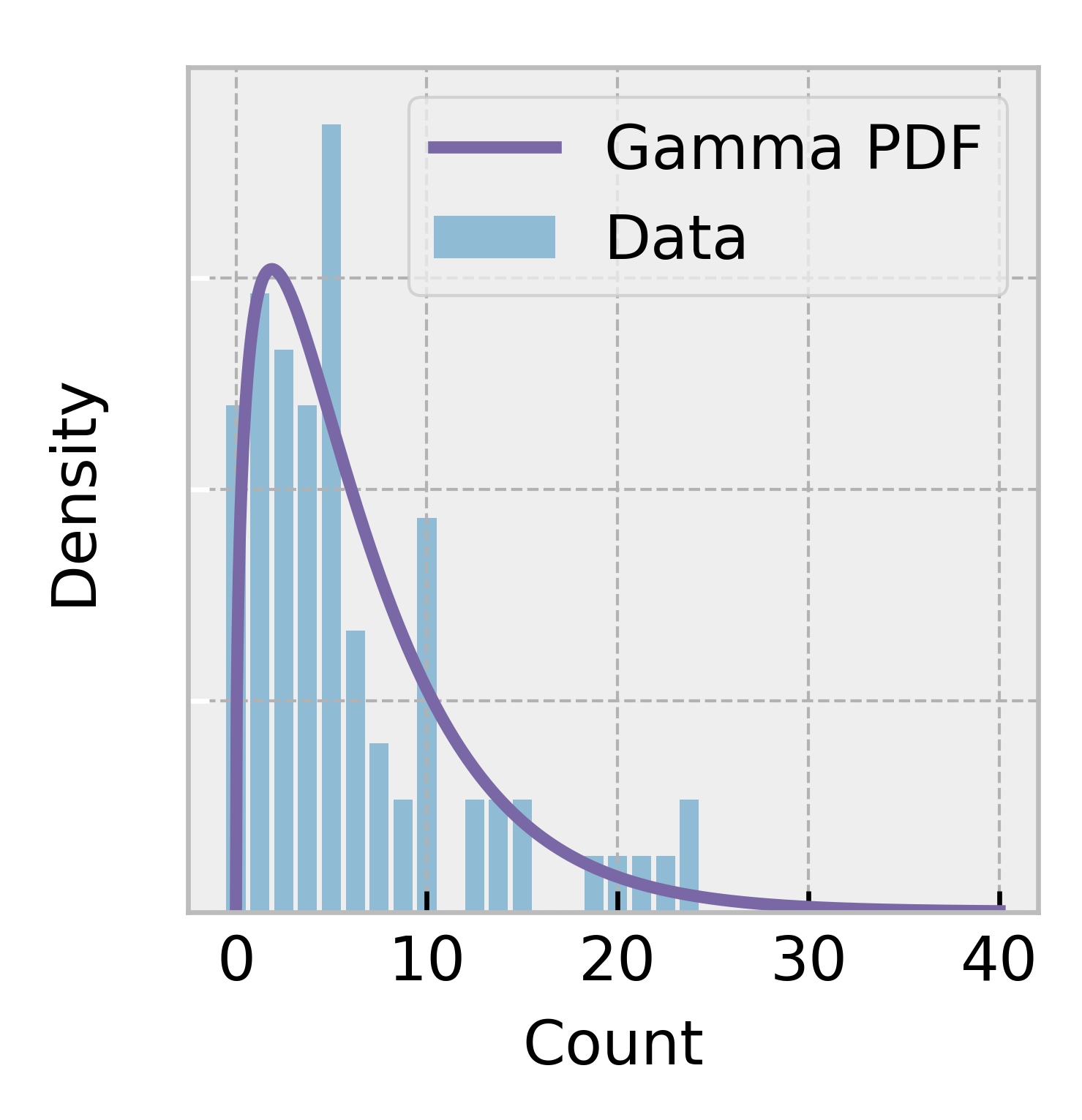}
    \end{subfigure}
    \caption{Negative binomial MLE on the sheep dataset. (left) Training curves. (right) Histogram of the observed counts, overlaid with the PDF of the gamma surrogate of SNGD at convergence.}
    \label{fig:negbin_experiment}
\end{figure}

\subsection{Skew-Elliptical Distributions}
\label{subsec:experiments_skewelliptical}

\begin{figure}[t]    
    \begin{subfigure}[t]{.49\linewidth}
        \centering
        \includegraphics[height=.9\linewidth]{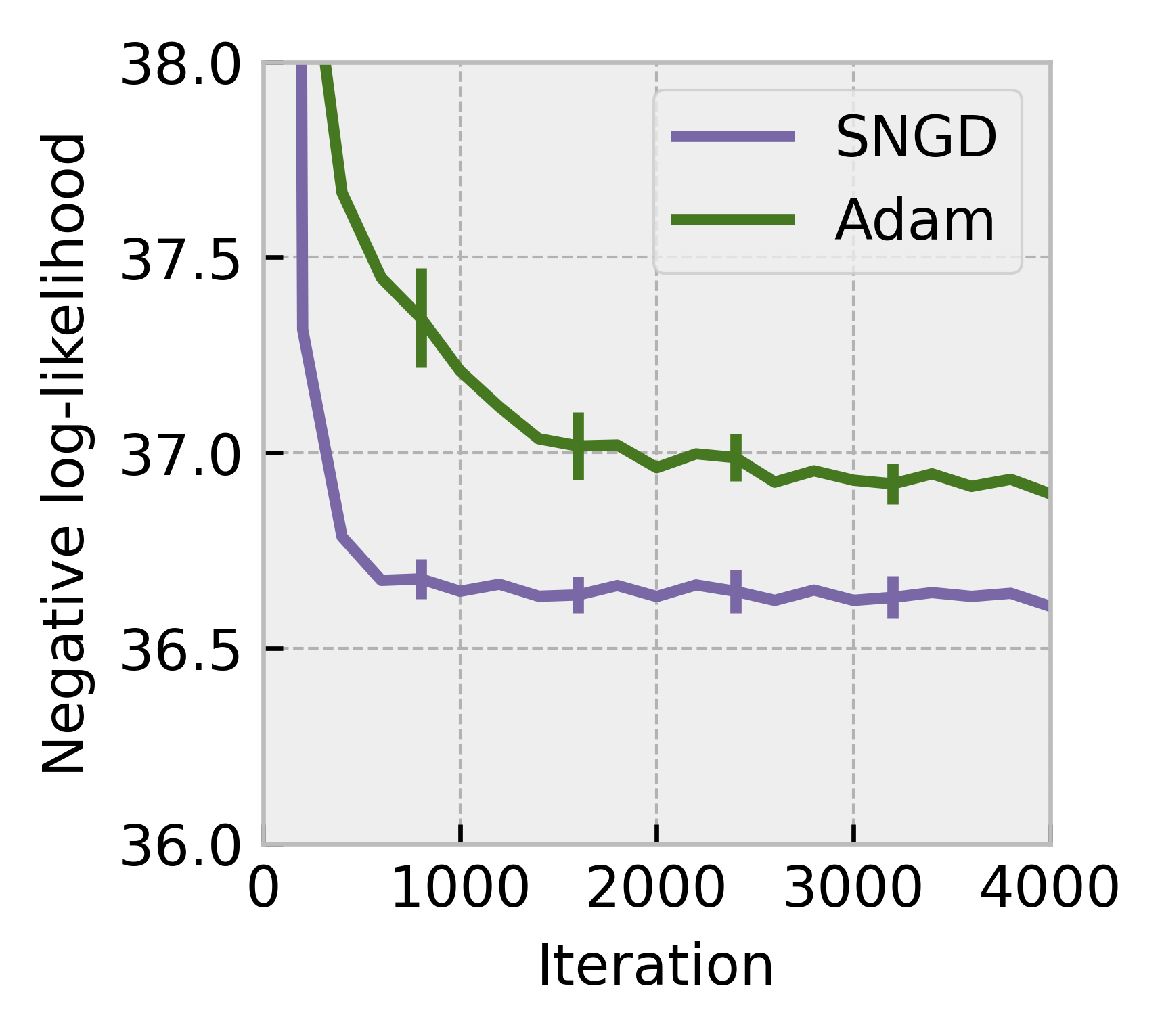}
    \end{subfigure}
    \hfill
    \begin{subfigure}[t]{.49\linewidth}
        \centering
        \includegraphics[height=.9\linewidth]{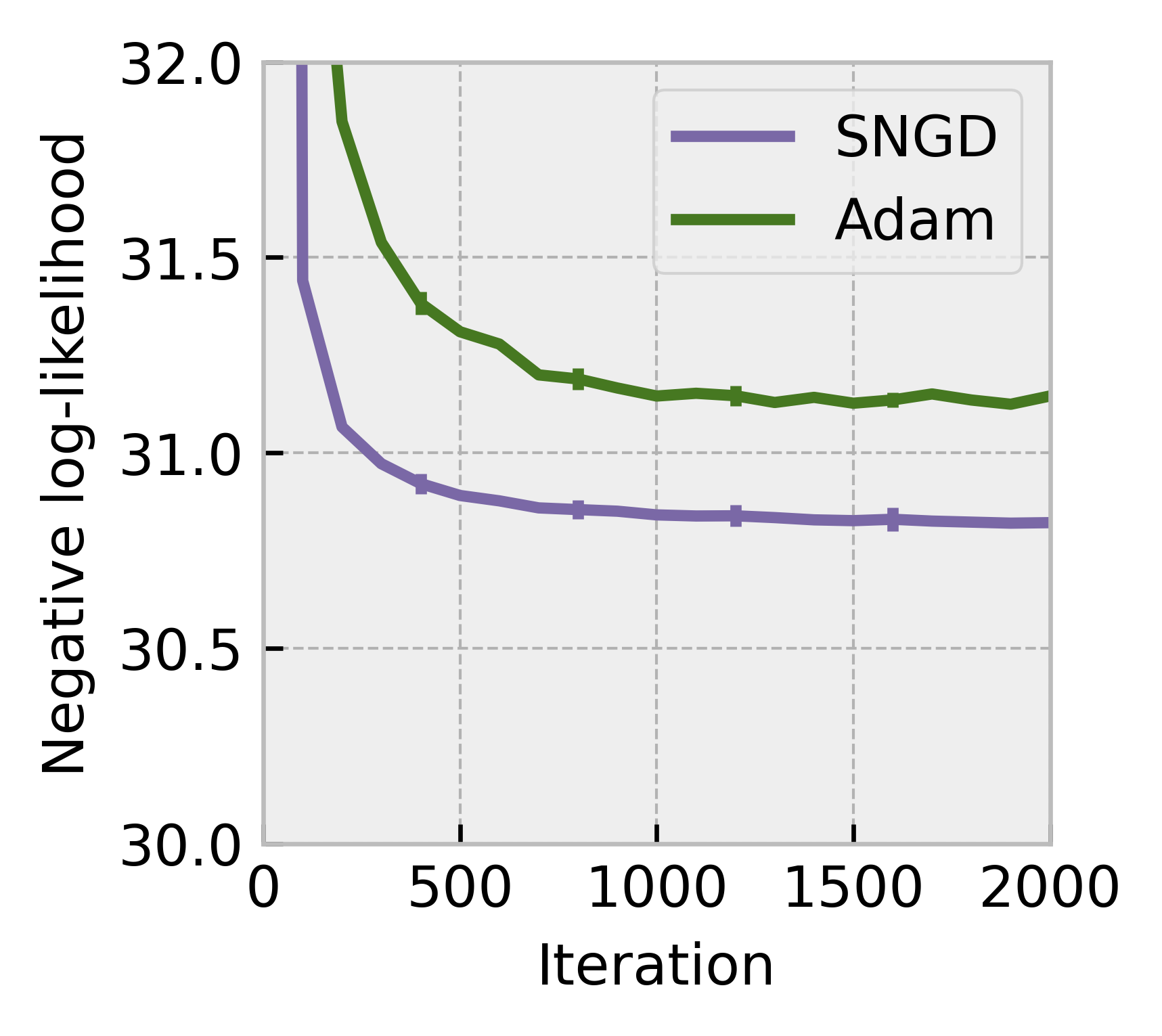}
    \end{subfigure}
    \caption{Training curves for MLE on the miniboone dataset ($n$=32,840, $d$=43) using (left) skew-normal, and (right) skew-$t$ distributions.}
    \label{fig:skewelliptical_mle_trainingcurves}
\end{figure}

The skew-elliptical distributions are a flexible family of unimodal multivariate distributions, allowing for features such as asymmetry and heavy tails \citep{azzalini2013skewnorm}. In this section we consider in particular the skew-normal and skew-$t$ distributions. The skew-normal can be viewed as an asymmetric generalisation of the normal distribution. Similarly, the skew-$t$ can be viewed as an asymmetric generalisation of the Student's $t$ distribution.

The normal distribution is a natural surrogate for these distributions. We used it as such in a number of experiments, mapping its mean and covariance to parameters playing similar roles in the target distributions, with any other parameters captured by $\lambda$, as outlined in Section \ref{subsec:method_auxiliaryparams}.

We performed two tasks on real data in our experiments. In the first, the skew-elliptical distributions were used for density estimation on the UCI miniboone dataset ($n$=32,840, $d$=43) \citep{roe2010miniboone}, fitting their parameters via MLE. In the second, they were used as approximate posteriors in VI for a Bayesian logistic regression model on the UCI covertype dataset \citep{blackard1998covertype}; in this task we used a small subsample of observations in order to retain some uncertainty in the posterior ($n$=500, $d$=53). We also performed a further MLE experiment using high-dimensional synthetic data ($n$=10,000, $d$=1,000).

Note that the numbers of free parameters in these optimisations were $O(d^2)$ due to the covariance-like parameters of the target distributions. The skew-normal distribution had 1,032, 1,537, and 502,500 parameters in the miniboone, covertype, and synthetic experiments, respectively. For the skew-$t$ distribution those numbers were 1,033, 1,537, and 502,501.

Figures \ref{fig:skewelliptical_mle_trainingcurves}, \ref{fig:skewelliptical_vi_trainingcurves} and  \ref{fig:skewelliptical_synthetic_trainingcurves} show training curves from the miniboone MLE, covertype VI, and synthetic MLE experiments, respectively. In all cases SNGD significantly outperformed Adam. In the particular case of the skew-normal VI experiment, it was also possible to apply the natural gradient method of \citet{lin2020mcef} based on \emph{minimal conditional exponential family} (MCEF) distributions, and so we included this as an additional baseline; its performance was virtually identical to that of SNGD, however, we note that this method can itself be viewed as applying SNGD, a point we discuss further in Section \ref{sec:relatedwork}.

\begin{figure}[t]    
    \begin{subfigure}[t]{.49\linewidth}
        \centering
        \includegraphics[height=.9\linewidth]{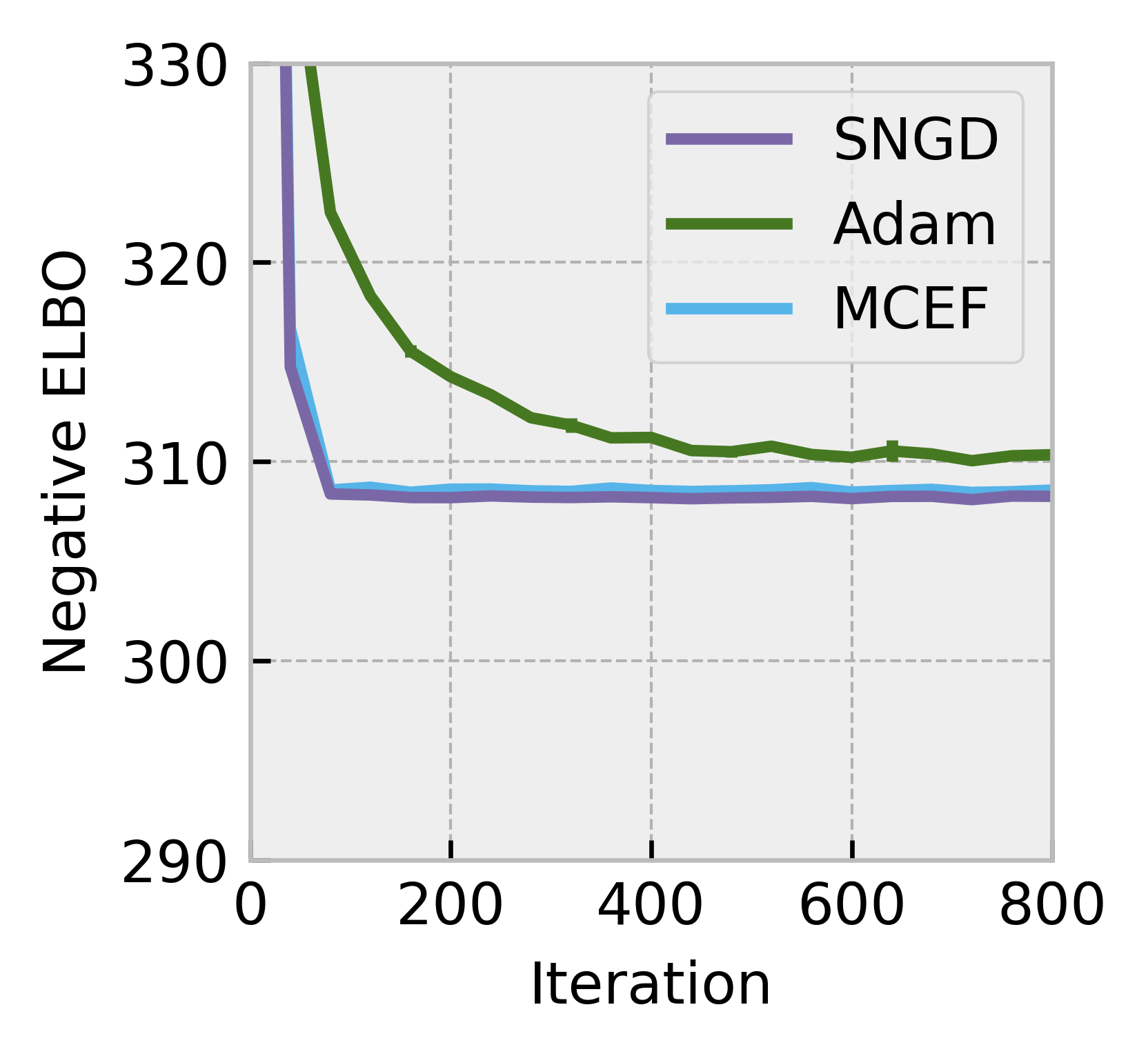}
    \end{subfigure}
    \hfill
    \begin{subfigure}[t]{.49\linewidth}
        \centering
        \includegraphics[height=.9\linewidth]{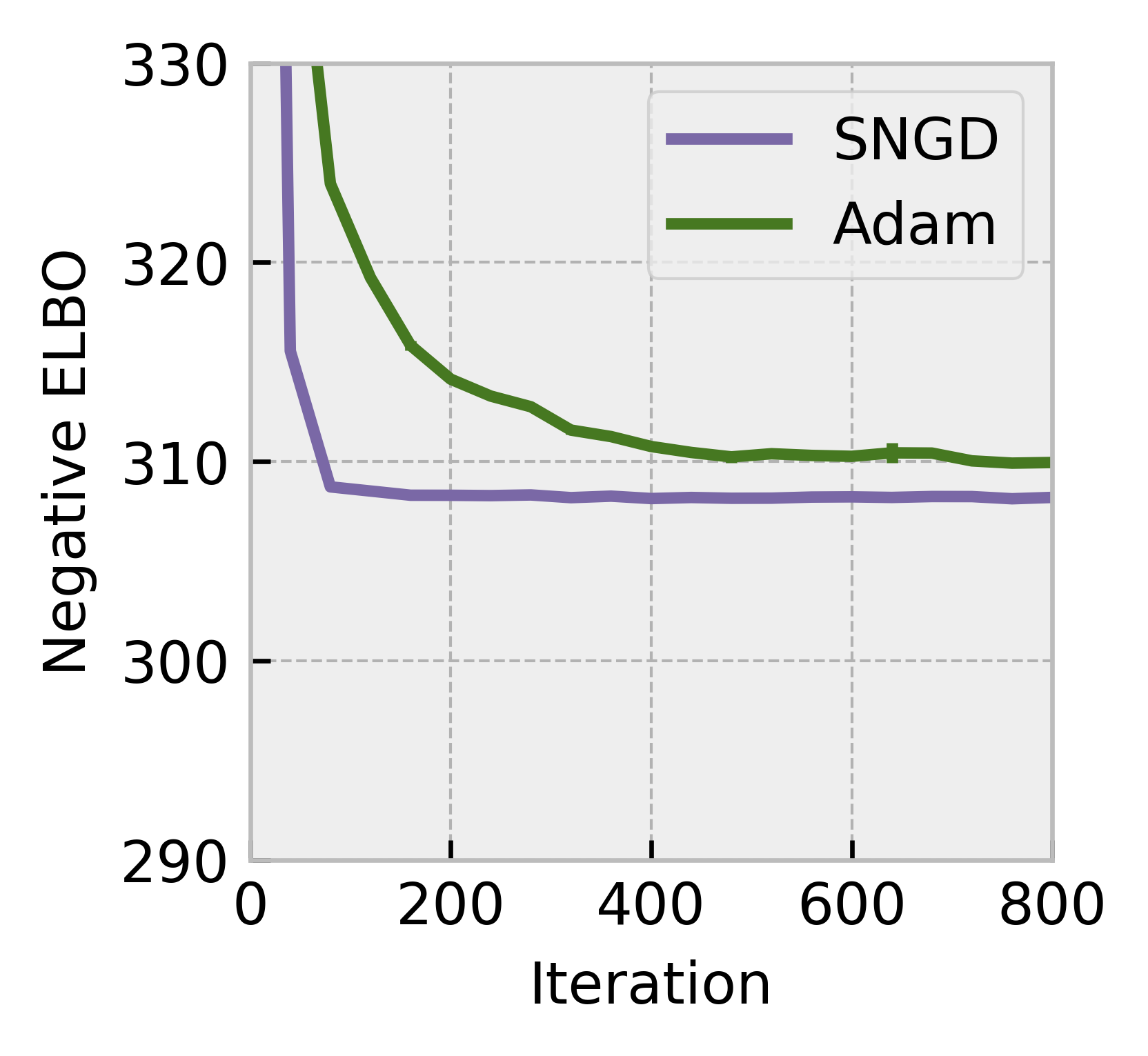}
    \end{subfigure}
    \caption{Training curves for Bayesian logistic regression VI on the covertype dataset ($n$=500, $d$=53) using (left) skew-normal, and (right) skew-$t$ approximations. MCEF corresponds to the natural gradient VI method of \citet{lin2020mcef}.}
    \label{fig:skewelliptical_vi_trainingcurves}
\end{figure}

\begin{figure}[t]    
    \begin{subfigure}[t]{.49\linewidth}
        \centering
        \includegraphics[height=.9\linewidth]{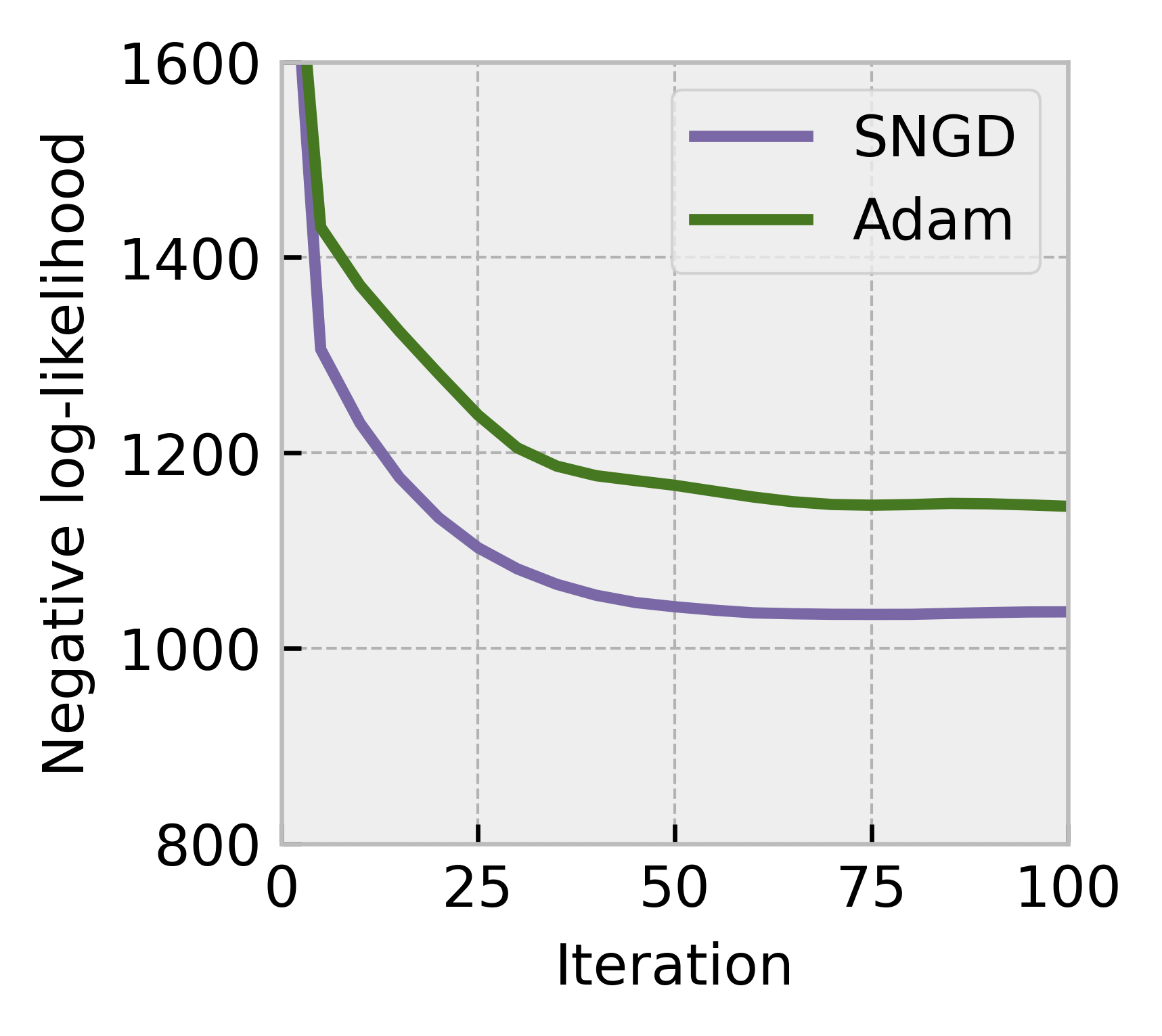}
    \end{subfigure}
    \hfill
    \begin{subfigure}[t]{.49\linewidth}
        \centering
        \includegraphics[height=.9\linewidth]{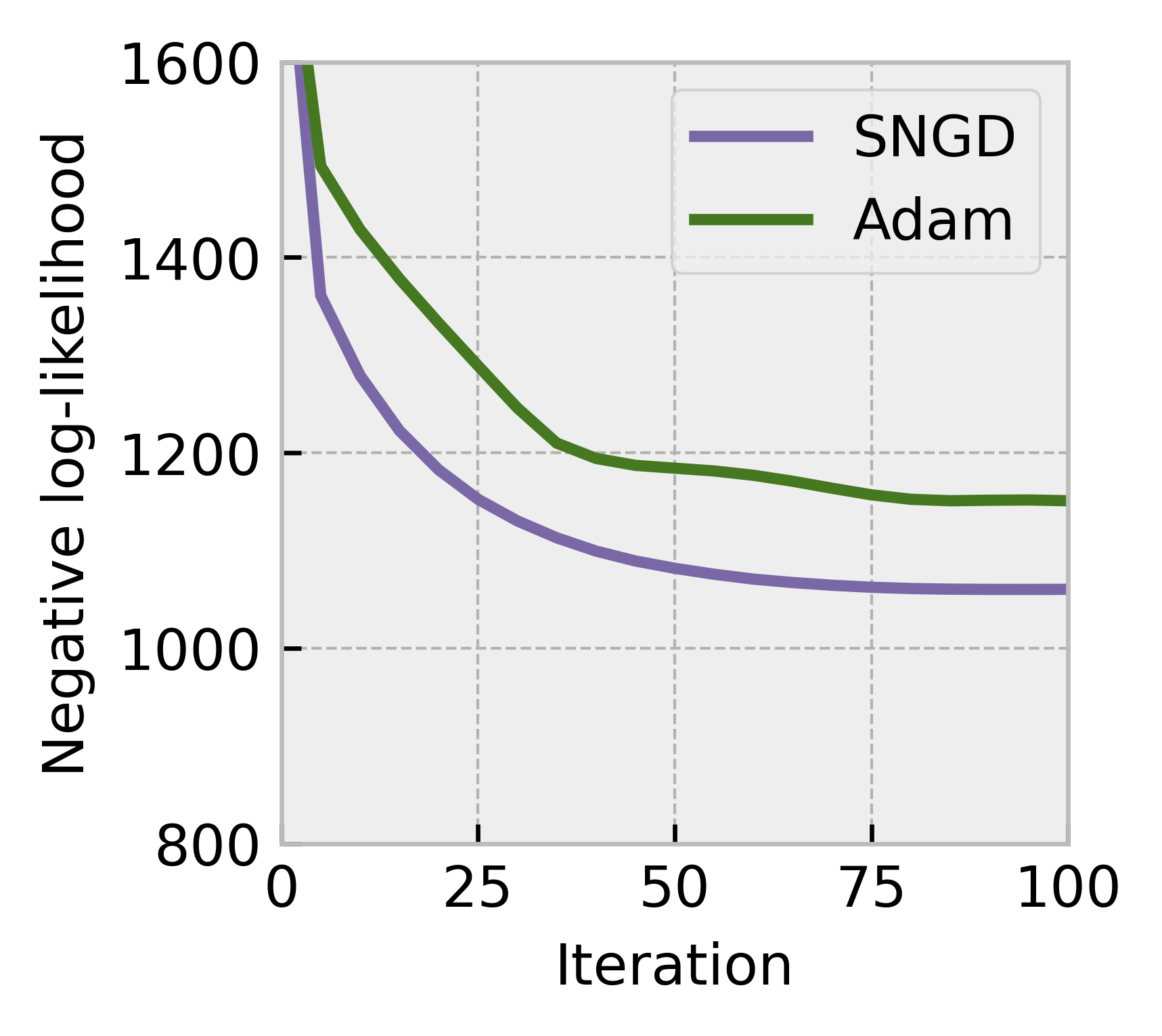}
    \end{subfigure}
    \caption{Training curves for MLE on a synthetic dataset ($n$=10,000, $d$=1,000) using (left) skew-normal, and (right) skew-$t$ distributions.}
    \label{fig:skewelliptical_synthetic_trainingcurves}
\end{figure}

\subsection{Elliptical Copulas}
\label{subsec:experiments_ellipticalcopula}

Copula models define a distribution on the unit hypercube $[0,1]^d$, for which each of the marginals is uniform on $[0,1]$. If $x \in [0,1]^d$ is distributed according to copula $q$, and $y_i = P_i^{-1}(x_i)$, where $P_i^{-1}$ is the inverse CDF of a distribution $q_i$, then $y_i$ will have marginal distribution $q_i$. The $y_i$ will be dependent in general, with dependence structure determined by $q(x)$.

Copula models are widely used in finance for modelling high-dimensional variables, in part because they allow marginal distributions to be modelled separately from the dependence structure \citep{dewick2022copulas}.

The class of elliptical copulas are defined as those copulas which can be used to generate elliptical distributions \citep{frahm2003copulas}. Common examples are the Gaussian and $t$ copulas. Elliptical copulas are parameterised by a correlation matrix $R$, as well as a (possibly empty) set of additional parameters.

A simple way to target elliptical copulas with SNGD is to use a zero-mean normal surrogate, mapping its correlation matrix to $R$. Any additional parameters may then be captured by $\lambda$, as outlined in Section \ref{subsec:method_auxiliaryparams}.

\begin{figure}[t]
    \centering
    \begin{subfigure}[t]{.49\linewidth}
        \centering
        \includegraphics[height=.9\linewidth]{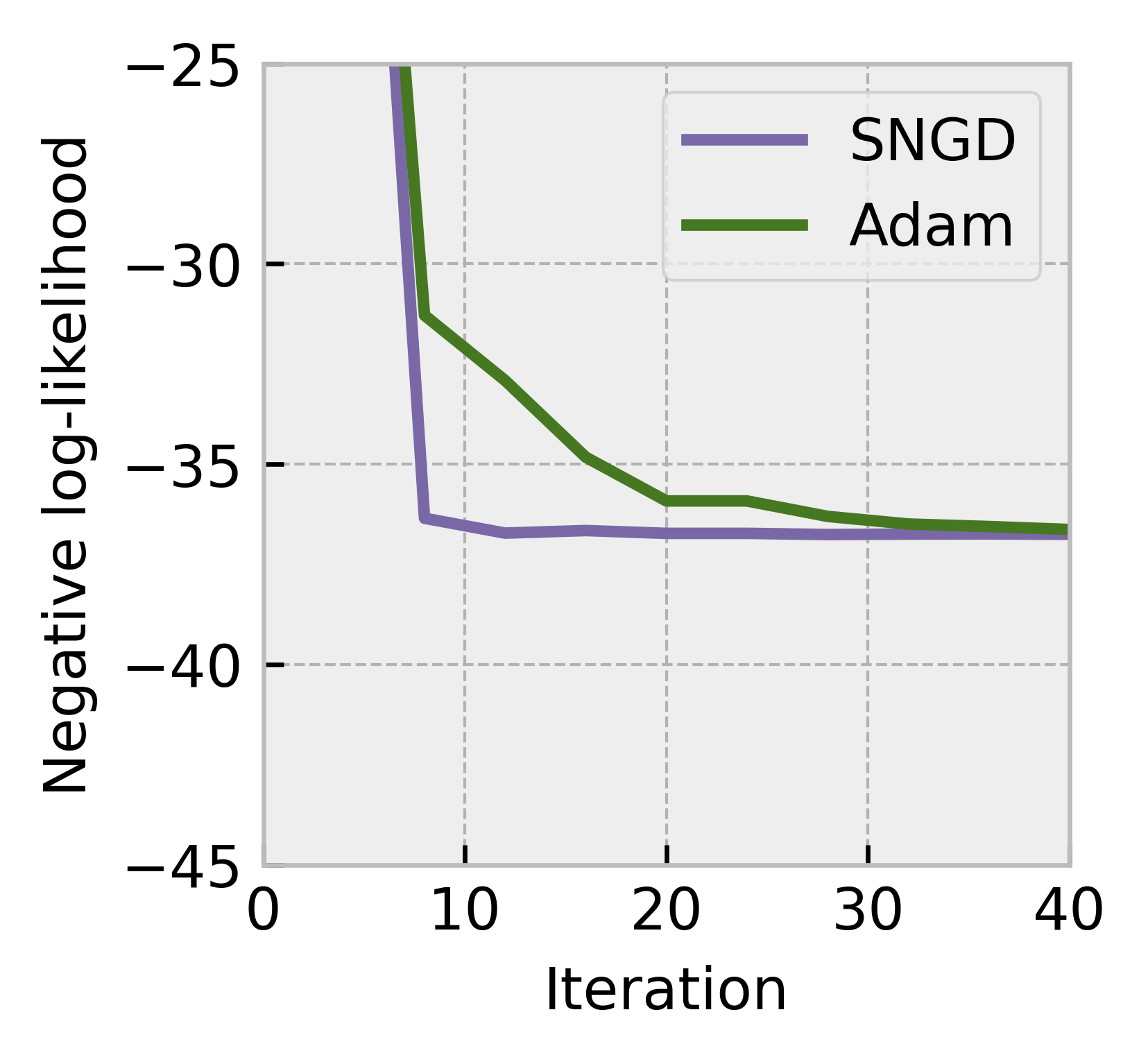}
    \end{subfigure}
    \hfill
    \begin{subfigure}[t]{.49\linewidth}
        \centering
        \includegraphics[height=.9\linewidth]{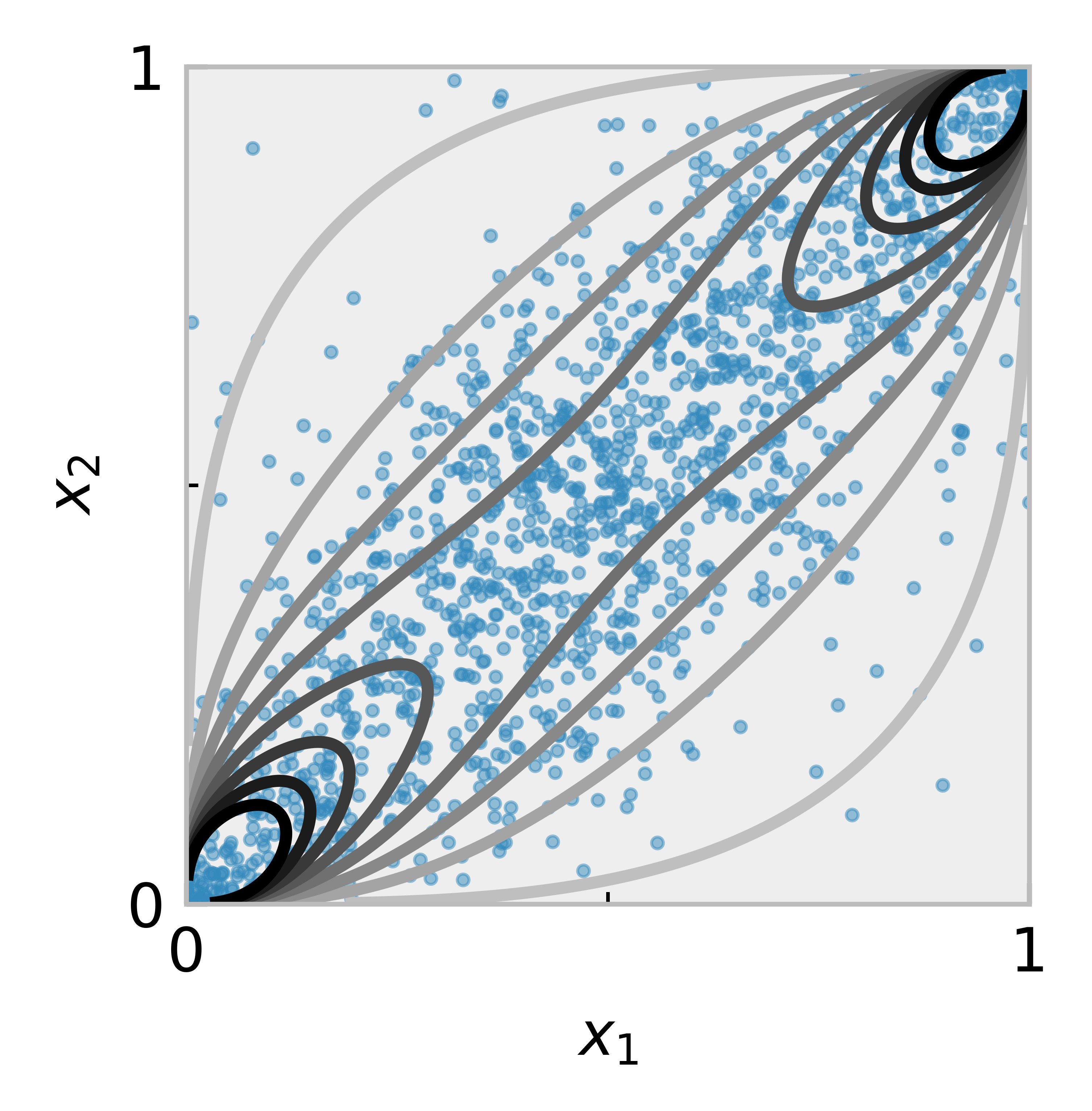}
    \end{subfigure}
    \caption{$t$-copula MLE using 5 years of daily stock return data ($n$=1,515, $d$=93). (left) Training curves. (right) Contours of a 2D marginal density from the fitted copula, overlaid with the training data.}
    \label{fig:copula_experiment}
\end{figure}

We used this approach to perform MLE of the $t$-copula on 5 years of daily stock returns from the FTSE 100 universe ($n$=1,515, $d$=93), with marginals estimated as (univariate) Student's $t$ distributions. In this experiment $q$ had 4,279 free parameters. Figure \ref{fig:copula_experiment} shows that SNGD converged significantly faster than Adam.

\subsection{Mixture Distributions}
\label{subsec:experiments_mixture}

A mixture distribution expresses a complicated density as a convex combination of $k$ simpler densities:
\begin{align}
    q(x) &= \sum_{i=1}^k \pi_i q_{\theta_i}(x), \label{eqn:mixturedistribution}
\end{align}
where $\pi \in \Delta^{k-1}$, and $q_{\theta_i}$ is known as the $i$-th \emph{component} distribution. For each mixture, we can also define a corresponding mixture \emph{model}, a joint distribution, with density
\begin{align}
    q(z, x) &= \pi_z q_{\theta_z}(x), \label{eqn:mixturemodel}
\end{align}
where the mixture component identity, $z \in \{1, ..., k\}$, is treated as a random variable. Note then, that $q(x) = \sum_{i=1}^k q(z = i, x)$. That is, a mixture model has a mixture as its \emph{marginal}. A mixture of EFs is not in general itself an EF. However, it can be shown that the corresponding mixture model (joint distribution) \textit{is} an EF (see Appendix \ref{app:efmixture}).

When an EF is an appropriate surrogate for applying SNGD to a given target family, it is therefore straightforward to consider the extension to mixtures of that family by using an EF mixture \emph{model} as a surrogate. We emphasise that in doing so, the surrogate distribution has support over more variables ($z$ and $x$) than the target distribution (just $x$).

In Section \ref{subsec:experiments_negbin} we demonstrated the use of a gamma surrogate for optimising negative binomial distribution parameters with SNGD. It is therefore straightforward to use a gamma mixture model to target a negative binomial mixture. As an experiment, we performed MLE of a 5 component negative binomial mixture, using a dataset consisting of the number of daily COVID-19 hospital admissions in the UK over a 3 year period ($n$=1,120, $d$=1). Figure \ref{fig:negbinmix_trainingcurves} shows that SNGD significantly outperformed our baselines.

\begin{figure}[t]
    \centering
    \begin{subfigure}[t]{.49\linewidth}
        \centering
        \includegraphics[height=.9\linewidth]{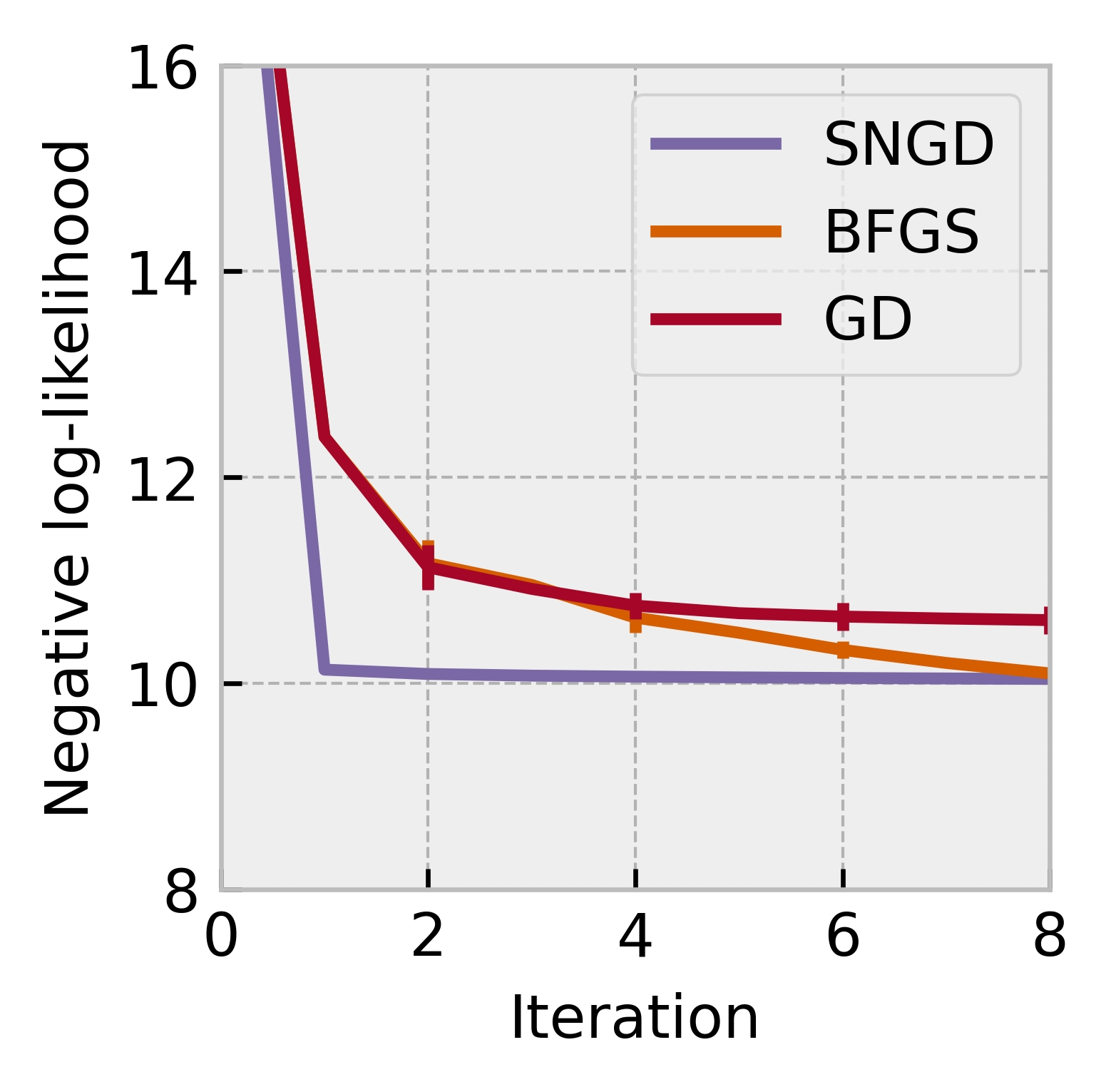}
    \end{subfigure}
    \hfill
    \begin{subfigure}[t]{.49\linewidth}
        \centering
        \includegraphics[height=.9\linewidth]{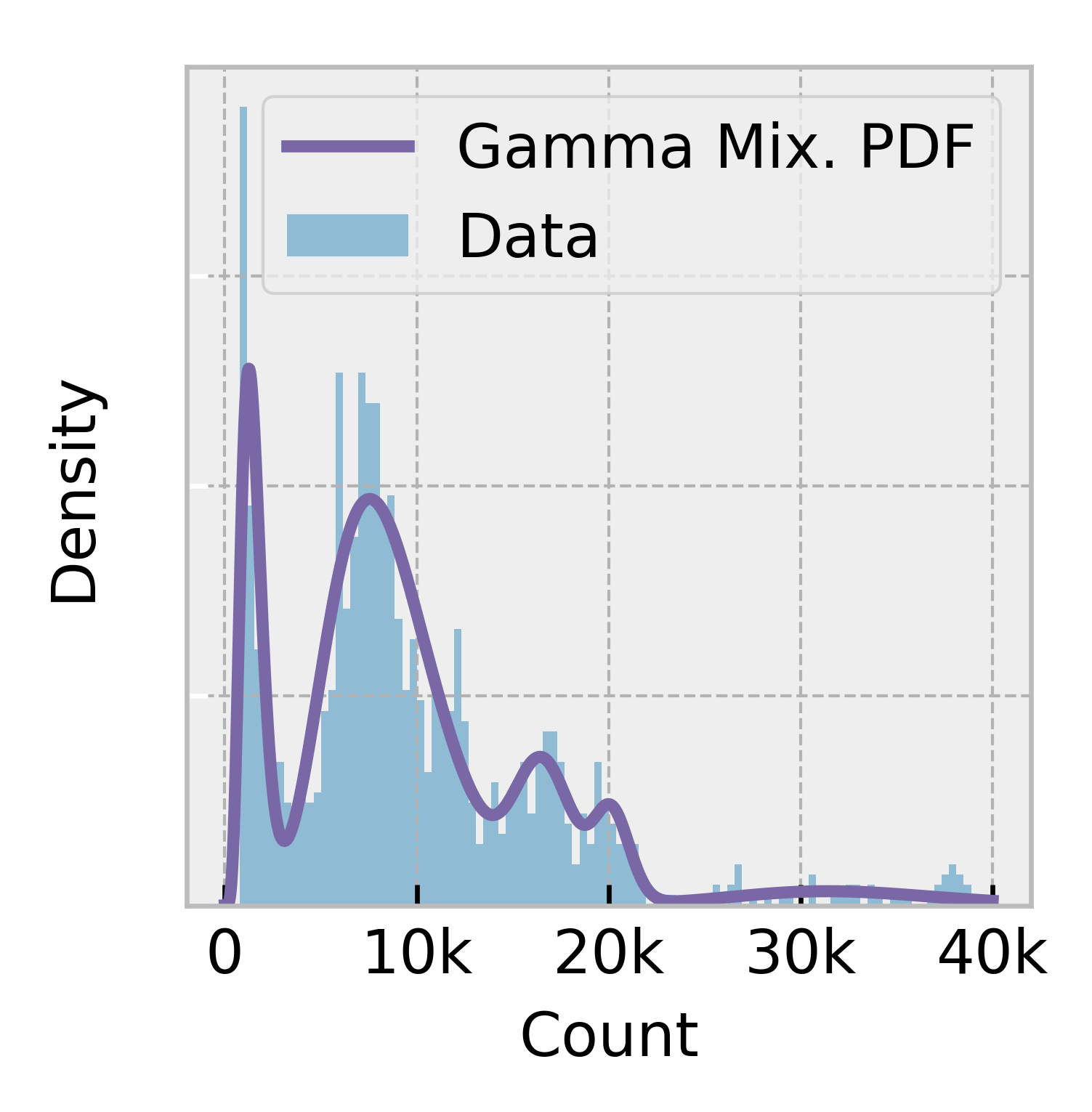}
    \end{subfigure}
    \caption{Negative binomial mixture MLE ($k$=5) using the number of daily COVID hospital admissions in the UK over a 3 year period ($n$=1,120, $d$=1). (left) Training curves. (right) Histogram of the observed counts, overlaid with the PDF of (a marginal of) the gamma mixture model surrogate of SNGD at convergence.}
    \label{fig:negbinmix_trainingcurves}
\end{figure}

As a further example, similarly, we can use a normal mixture model as the surrogate for a skew-normal mixture. Figure \ref{fig:skewnormalmix_posterior} shows the qualitative improvement a skew-normal mixture can offer over a normal mixture on the Bayesian logistic regression VI task of \citet{murphy2013ml} ($n$=60, $d$=2). Figure \ref{fig:skewnormalmix_trainingcurves} shows training curves for this task, as well as for the UCI covertype VI task of Section \ref{subsec:experiments_skewelliptical}. In the latter, $q$ had 4,613 and 7,689 free parameters for $k$=3, 5, respectively.

\begin{figure}[ht]
    \centering
    \addtolength{\tabcolsep}{-4pt}
    \begin{tabular}{cccc}
    {\includegraphics[width=.23\linewidth]{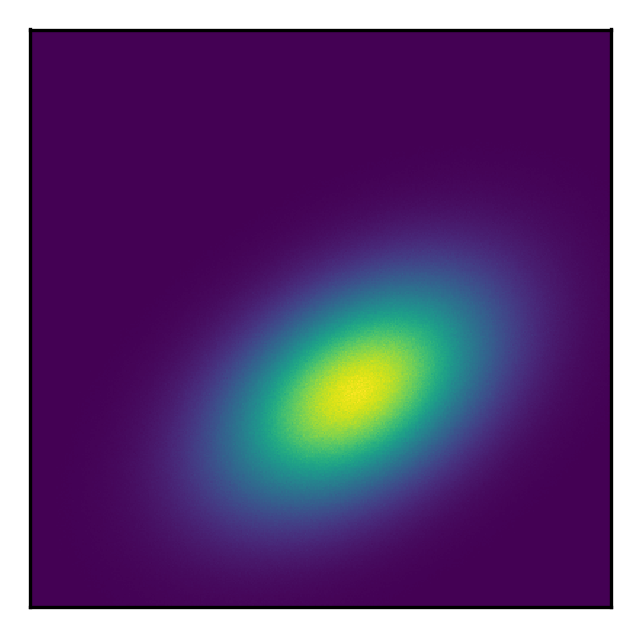}} &
    {\includegraphics[width=.23\linewidth]{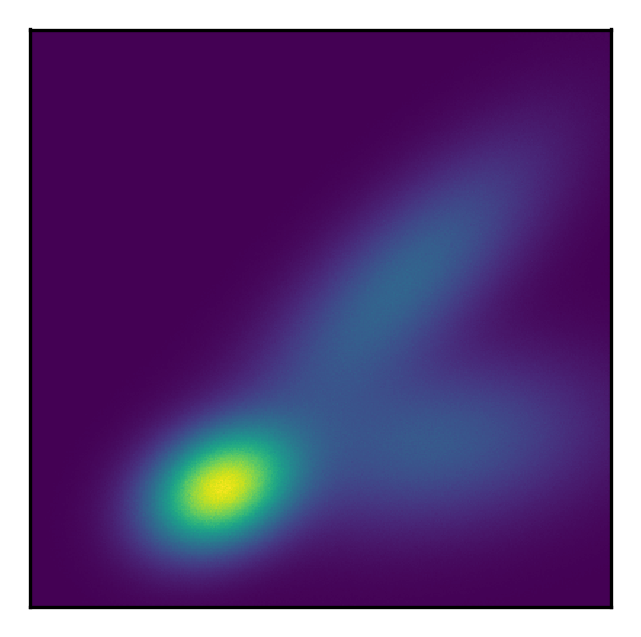}} &
    {\includegraphics[width=.23\linewidth]{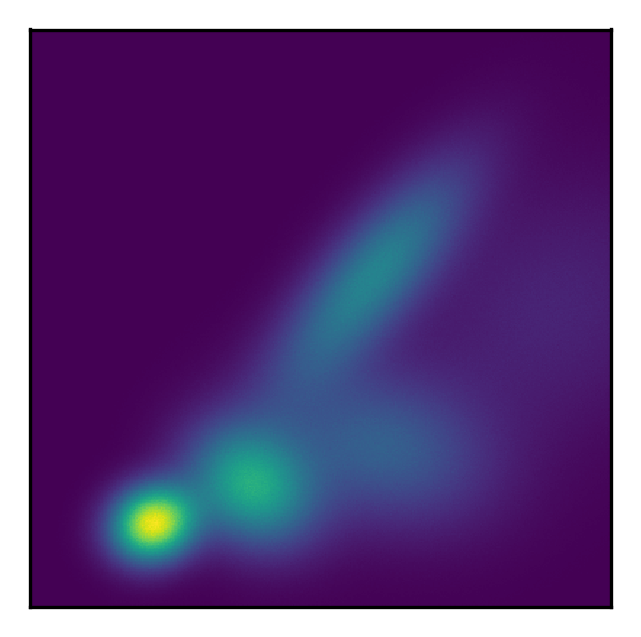}} &
    {\includegraphics[width=.23\linewidth]{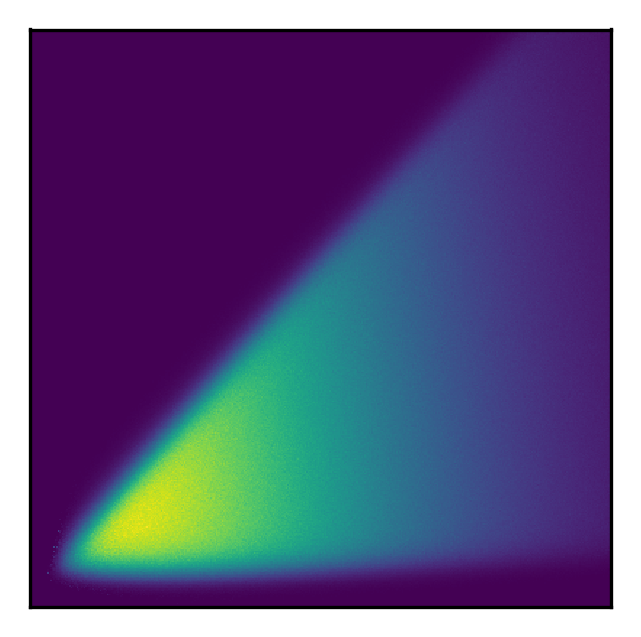}} \\
    {\includegraphics[width=.23\linewidth]{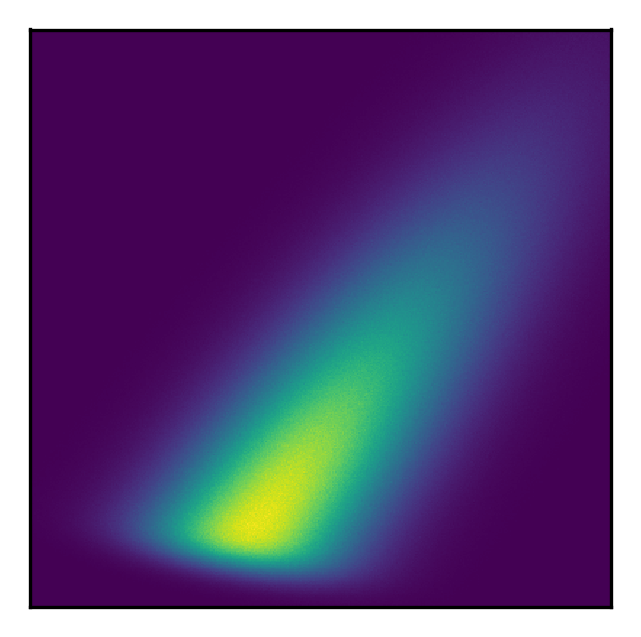}} &
    {\includegraphics[width=.23\linewidth]{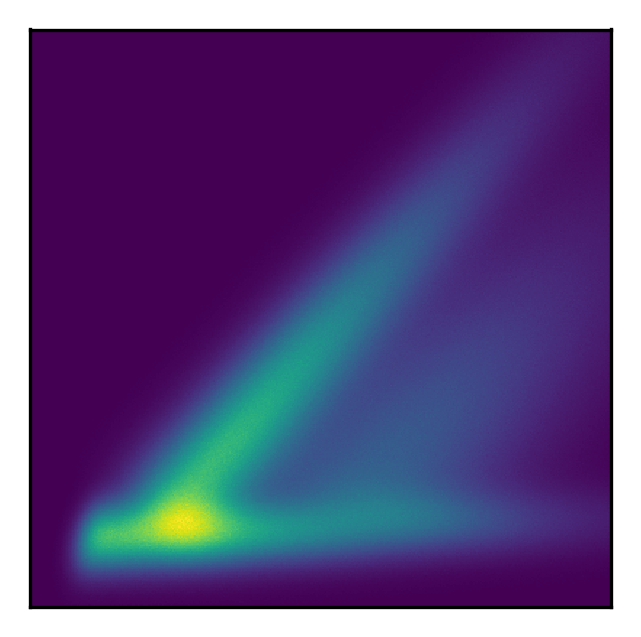}} &
    {\includegraphics[width=.23\linewidth]{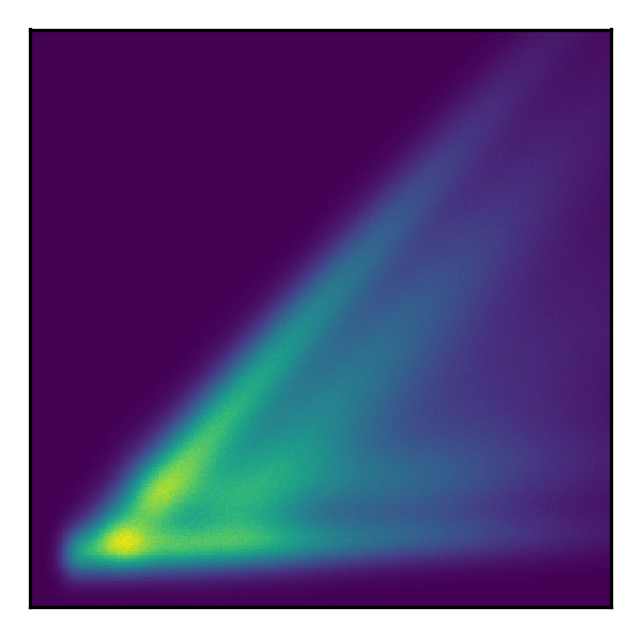}} &
    {\includegraphics[width=.23\linewidth]{figures/girolami2d-mcmc.png}} \\
    $k=1$ & $k=3$ & $k=5$ & GT
    \end{tabular}
    \addtolength{\tabcolsep}{4pt}
    \caption{Comparison of (top) normal and (bottom) skew-normal mixture posterior approximations in the Bayesian logistic regression VI task of \citet{murphy2013ml}. Ground truth (GT) generated using MCMC samples.}
    \label{fig:skewnormalmix_posterior}
\end{figure}

\begin{figure}[ht]
    \centering
    \begin{subfigure}[t]{.49\linewidth}
        \centering
        \includegraphics[height=.9\linewidth]{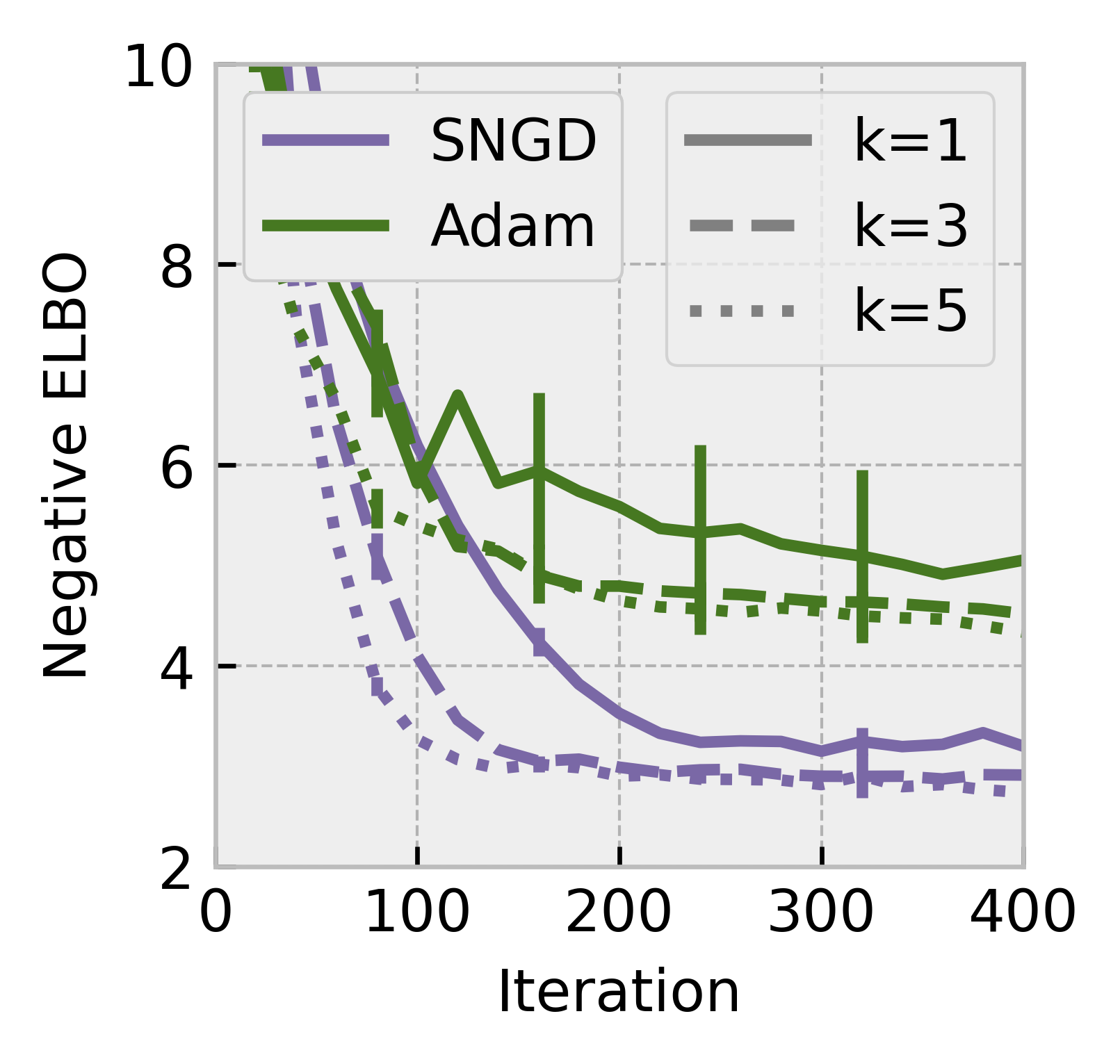}
    \end{subfigure}
    \hfill
    \begin{subfigure}[t]{.49\linewidth}
        \centering
        \includegraphics[height=.9\linewidth]{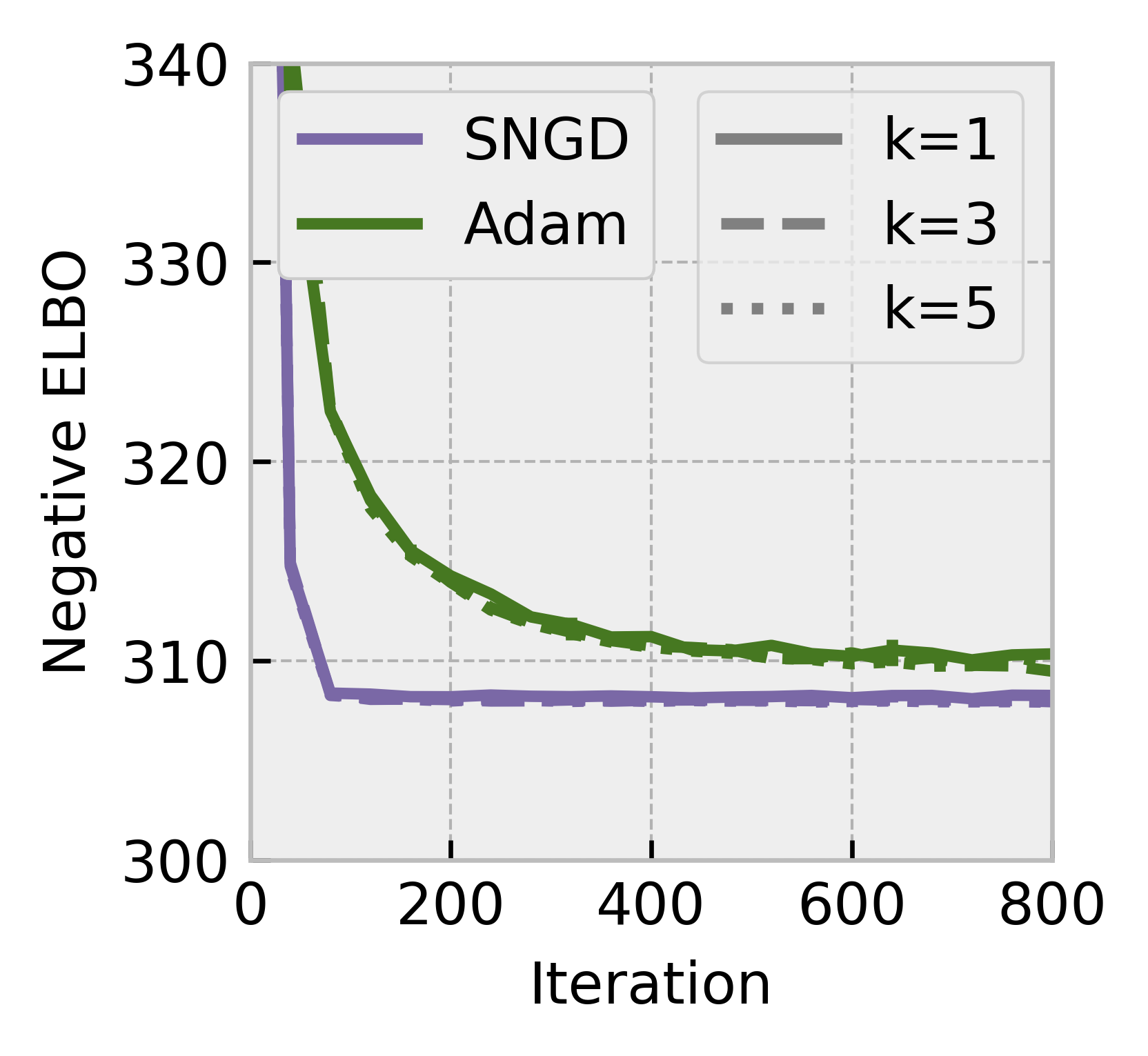}
    \end{subfigure}
    \caption{Training curves for Bayesian logistic regression VI with a skew-normal mixture approximation. (left) the toy dataset of \citet{murphy2013ml} ($n$=60, $d$=2), and (right) UCI covertype ($n$=500, $d$=53). $k$ is the number of mixture components.}
    \label{fig:skewnormalmix_trainingcurves}
\end{figure}

\section{RELATED WORK}
\label{sec:relatedwork}

Several existing methods can be interpreted as examples of the general technique presented in this paper.

\citet{lin2020mcef} introduced the class of MCEF distributions, and showed that natural gradients with respect to a particular parameterisation of MCEF distributions can be efficiently computed using an identity analogous to \eqref{eqn:ef_natgrad_natparams}. They also showed that several standard distributions can be expressed as \emph{marginals} of MCEF distributions, including the multivariate Student's $t$ and multivariate skew-normal distributions. They then used NGD with respect to the MCEF joint as a surrogate for its marginal in a number of VI tasks. This mismatch between the surrogate and target distributions (our terminology) was not explicitly discussed by \citet{lin2020mcef}, possibly because the two were so closely related (by marginalisation). SNGD is more general in that it makes no such assumption about the relationship between surrogate and target distributions. In particular, it is not restricted to targets that can be expressed as the marginal of a MCEF (or any other) distribution.

The stochastic natural gradient expectation propagation (SNEP) algorithm of \citet{hasenclever2017} solves a saddle-point optimisation problem, with the inner optimisation being over a set of parameters $\{\eta_i\}$ known as the \textit{site parameters}. The site parameters jointly parameterise a set of distributions for which computing natural gradients is not straightforward. SNEP instead treats each $\eta_i$ as if it were the natural parameter vector of a EF distribution in its own right, and performs NGD in the dual (mean parameter) space. These (pseudo)distributions acted as surrogates for the more problematic set of distributions that were the ultimate targets of the optimisation.

\citet{hernandez2014copula} devised a fixed-point iteration scheme for optimising the correlation matrix parameter of an elliptical copula. Interestingly, although their procedure was motivated in an entirely different way, the resulting updates for $R$ are identical to those performed by SNGD when applied in the manner described in Section \ref{subsec:experiments_ellipticalcopula}.

In supervised learning, the goal is to model the conditional density $q(y|x)$ given training data $\mathcal{D} = \{(x_i, y_i)\}_{i=1}^n$. In this setting the Fisher matrix is usually defined with respect to the distribution $p_\mathcal{D}(x)q_\theta(y|x)$, where $p_\mathcal{D}(x)$ is the empirical distribution of $x$ under $\mathcal{D}$. In practice, when the data set is large, the Fisher must be computed with respect to some subset of the training inputs $\mathcal{S}$, such as the current minibatch. This is equivalent to using $p_\mathcal{S}(x)q_\theta(y|x)$ as a surrogate for $p_\mathcal{D}(x)q_\theta(y|x)$, where $p_\mathcal{S}(x)$ is the empirical distribution of $x$ in $\mc{S}$. In the context of supervised deep learning, \citet{ren2019natgrad} showed that exact natural gradients with respect to $p_\mathcal{S}(x)q_\theta(y|x)$ can be computed efficiently when $|\mc{S}|$ is small.

Several works have attempted to handle intractability of the natural gradient in other ways. Some methods are based on structured approximations to the Fisher, that allow inverse vector products to be computed efficiently \citep{heskes2000kfac,martens2015kfac,george2018kfac}. \citet{garcia2023fishleg} instead approximated the inverse Fisher matrix directly by expressing it in terms of the solution to an optimisation problem.
The widely used Adam optimiser employs a moving average of squared gradients in a diagonal approximation to the Fisher \citep{kingma2014adam}, although it has been cautioned that this is more accurately viewed as an approximation to the \textit{empirical} Fisher \citep{kunstner2019empiricalfisher}.

\section{DISCUSSION}
\label{sec:discussion}

In this work we proposed a novel technique for optimising functions of probability distribution parameters: reframing the objective as an optimisation with respect to a \textit{surrogate} distribution for which computing natural gradients is easy, and performing optimisation in that space. We found several existing methods that can be interpreted as applying this technique, and proposed a new method based on EF surrogates. We demonstrated that our method is able to converge rapidly on a variety of MLE and VI tasks. We believe our method can be readily applied to distributions outside of the set of examples given here. We also expect that new methods can be found by applying the more general technique that motivated our method.

The main limitation of our method is the need to find a suitable surrogate and reparameterisation for a given target distribution. In some cases there is an obvious candidate, such as when an EF distribution could serve as an approximation for the target, or when the target can be viewed as an EF that has been warped or transformed in some way. However, for most of the examples in this paper, the target and surrogate distributions do not even have support over the same space (see Table \ref{tbl:models}). Our choices of surrogates were largely guided by the intuition that $\tilde\theta$ should have similar local effects on KL divergences in $q$ than it does in $\tilde q$. However, finding more prescriptive or systematic methods for choosing surrogates would be beneficial, and may be an interesting direction for future work.

\subsubsection*{Acknowledgements}
We would like to thank Runa Eschenhagen, Jihao Andreas Lin, and James Townsend for providing valuable feedback. Jonathan So is supported by the University of Cambridge Harding Distinguished Postgraduate Scholars Programme. Richard E. Turner is supported by Google, Amazon, ARM, Improbable and EPSRC grant EP/T005386/1.

\bibliography{bibliography}

\begin{thebibliography}{48}
\providecommand{\natexlab}[1]{#1}
\providecommand{\url}[1]{\texttt{#1}}
\expandafter\ifx\csname urlstyle\endcsname\relax
  \providecommand{\doi}[1]{doi: #1}\else
  \providecommand{\doi}{doi: \begingroup \urlstyle{rm}\Url}\fi

\bibitem[Amari(1998)]{amari1998natgrad}
S.-i. Amari.
\newblock {Natural Gradient Works Efficiently in Learning}.
\newblock \emph{Neural Computation}, 1998.

\bibitem[Azzalini(2013)]{azzalini2013skewnorm}
A.~Azzalini.
\newblock \emph{The Skew-Normal and Related Families}.
\newblock Cambridge University Press, 2013.

\bibitem[Bernacchia et~al.(2018)Bernacchia, Lengyel, and Hennequin]{bernacchia2018natgrad}
A.~Bernacchia, M.~Lengyel, and G.~Hennequin.
\newblock Exact natural gradient in deep linear networks and its application to the nonlinear case.
\newblock In \emph{Advances in Neural Information Processing Systems}, 2018.

\bibitem[Bertsekas(1997)]{bertsekas1997nonlinear}
D.~P. Bertsekas.
\newblock Nonlinear programming.
\newblock \emph{Journal of the Operational Research Society}, 1997.

\bibitem[Blackard(1998)]{blackard1998covertype}
J.~Blackard.
\newblock {Covertype}.
\newblock UCI Machine Learning Repository, 1998.

\bibitem[Blondel et~al.(2022)Blondel, Berthet, Cuturi, Frostig, Hoyer, Llinares-Lopez, Pedregosa, and Vert]{blondel2022implicitdiff}
M.~Blondel, Q.~Berthet, M.~Cuturi, R.~Frostig, S.~Hoyer, F.~Llinares-Lopez, F.~Pedregosa, and J.-P. Vert.
\newblock Efficient and modular implicit differentiation.
\newblock In \emph{Advances in Neural Information Processing Systems}, 2022.

\bibitem[Bradbury et~al.(2018)Bradbury, Frostig, Hawkins, Johnson, Leary, Maclaurin, Necula, Paszke, Vander{P}las, Wanderman-{M}ilne, and Zhang]{frostig2018jax}
J.~Bradbury, R.~Frostig, P.~Hawkins, M.~J. Johnson, C.~Leary, D.~Maclaurin, G.~Necula, A.~Paszke, J.~Vander{P}las, S.~Wanderman-{M}ilne, and Q.~Zhang.
\newblock {JAX}: composable transformations of {P}ython+{N}um{P}y programs, 2018.

\bibitem[Chang and Lin(2011)]{chang2011libsvm}
C.-C. Chang and C.-J. Lin.
\newblock {LIBSVM}: A library for support vector machines.
\newblock \emph{ACM Transactions on Intelligent Systems and Technology}, 2011.

\bibitem[Christianson(1994)]{christianson1994fixedpoints}
B.~Christianson.
\newblock Reverse accumulation and attractive fixed points.
\newblock \emph{Optimization Methods \& Software}, 1994.

\bibitem[Dewick and Liu(2022)]{dewick2022copulas}
P.~R. Dewick and S.~Liu.
\newblock Copula modelling to analyse financial data.
\newblock \emph{Journal of Risk and Financial Management}, 2022.

\bibitem[Fisher(1941)]{fisher1941negbin}
R.~A. Fisher.
\newblock The negative binomial distribution.
\newblock \emph{Annals of Eugenics}, 1941.

\bibitem[Frahm et~al.(2003)Frahm, Junker, and Szimayer]{frahm2003copulas}
G.~Frahm, M.~Junker, and A.~Szimayer.
\newblock Elliptical copulas: applicability and limitations.
\newblock \emph{Statistics \& Probability Letters}, 2003.

\bibitem[Garcia et~al.(2023)Garcia, Freddi, Fotiadis, Li, Vakili, Bernacchia, and Hennequin]{garcia2023fishleg}
J.~R. Garcia, F.~Freddi, S.~Fotiadis, M.~Li, S.~Vakili, A.~Bernacchia, and G.~Hennequin.
\newblock Fisher-legendre (fishleg) optimization of deep neural networks.
\newblock In \emph{International Conference on Learning Representations}, 2023.

\bibitem[George et~al.(2018)George, Laurent, Bouthillier, Ballas, and Vincent]{george2018kfac}
T.~George, C.~Laurent, X.~Bouthillier, N.~Ballas, and P.~Vincent.
\newblock Fast approximate natural gradient descent in a kronecker-factored eigenbasis.
\newblock In \emph{Neural Information Processing Systems}, 2018.

\bibitem[Guenther(1972)]{guenther1972negbin}
W.~C. Guenther.
\newblock A simple approximation to the negative binomial (and regular binomial).
\newblock \emph{Technometrics}, 1972.

\bibitem[Hasenclever et~al.(2017)Hasenclever, Webb, Lienart, Vollmer, Lakshminarayanan, Blundell, and Teh]{hasenclever2017}
L.~Hasenclever, S.~Webb, T.~Lienart, S.~Vollmer, B.~Lakshminarayanan, C.~Blundell, and Y.~W. Teh.
\newblock Distributed bayesian learning with stochastic natural gradient expectation propagation and the posterior server.
\newblock \emph{Journal of Machine Learning Research}, 2017.

\bibitem[Hensman et~al.(2012)Hensman, Rattray, and Lawrence]{hensman2012expfam}
J.~Hensman, M.~Rattray, and N.~Lawrence.
\newblock Fast variational inference in the conjugate exponential family.
\newblock In \emph{Advances in Neural Information Processing Systems}, 2012.

\bibitem[Hensman et~al.(2013)Hensman, Fusi, and Lawrence]{hensman2013gp}
J.~Hensman, N.~Fusi, and N.~Lawrence.
\newblock Gaussian processes for big data.
\newblock In \emph{Uncertainty in Artificial Intelligence}, 2013.

\bibitem[Hernández et~al.(2014)Hernández, Tejero, and Vinuesa]{hernandez2014copula}
L.~Hernández, J.~Tejero, and J.~Vinuesa.
\newblock Maximum likelihood estimation of the correlation parameters for elliptical copulas.
\newblock \emph{arXiv:1412.6316}, 2014.

\bibitem[Heskes(2000)]{heskes2000kfac}
T.~Heskes.
\newblock {On “Natural” Learning and Pruning in Multilayered Perceptrons}.
\newblock \emph{Neural Computation}, 2000.

\bibitem[Hoffman et~al.(2013)Hoffman, Blei, Wang, and Paisley]{hoffman2013svi}
M.~D. Hoffman, D.~M. Blei, C.~Wang, and J.~Paisley.
\newblock Stochastic variational inference.
\newblock \emph{Journal of Machine Learning Research}, 2013.

\bibitem[Kakade(2001)]{kakade2001npg}
S.~M. Kakade.
\newblock A natural policy gradient.
\newblock In \emph{Advances in Neural Information Processing Systems}, 2001.

\bibitem[Kendall et~al.(2023)Kendall, Tsallis, Wymant, Di~Francia, Balogun, Didelot, Ferretti, and Fraser]{kendall2023covid}
M.~Kendall, D.~Tsallis, C.~Wymant, A.~Di~Francia, Y.~Balogun, X.~Didelot, L.~Ferretti, and C.~Fraser.
\newblock Epidemiological impacts of the nhs covid-19 app in england and wales throughout its first year.
\newblock \emph{Nature Communications}, 2023.

\bibitem[Khan and Lin(2017)]{khan2017ccvi}
M.~Khan and W.~Lin.
\newblock {Conjugate-Computation Variational Inference : Converting Variational Inference in Non-Conjugate Models to Inferences in Conjugate Models}.
\newblock In \emph{International Conference on Artificial Intelligence and Statistics}, 2017.

\bibitem[Khan et~al.(2018)Khan, Nielsen, Tangkaratt, Lin, Gal, and Srivastava]{khan2018vogn}
M.~Khan, D.~Nielsen, V.~Tangkaratt, W.~Lin, Y.~Gal, and A.~Srivastava.
\newblock Fast and scalable {B}ayesian deep learning by weight-perturbation in {A}dam.
\newblock In \emph{International Conference on Machine Learning}, 2018.

\bibitem[Khan et~al.(2017)Khan, Lin, Tangkaratt, Liu, and Nielsen]{khan2017van}
M.~E. Khan, W.~Lin, V.~Tangkaratt, Z.~Liu, and D.~Nielsen.
\newblock Variational adaptive-newton method.
\newblock In \emph{NeurIPS Workshop on Advances in Approximate Bayesian Inference}, 2017.

\bibitem[Kingma and Ba(2014)]{kingma2014adam}
D.~Kingma and J.~Ba.
\newblock Adam: A method for stochastic optimization.
\newblock \emph{International Conference on Learning Representations}, 2014.

\bibitem[Kingma and Welling(2014)]{kingma2014aevb}
D.~P. Kingma and M.~Welling.
\newblock {Auto-Encoding Variational Bayes}.
\newblock In \emph{International Conference on Learning Representations}, 2014.

\bibitem[Kunstner et~al.(2019)Kunstner, Hennig, and Balles]{kunstner2019empiricalfisher}
F.~Kunstner, P.~Hennig, and L.~Balles.
\newblock Limitations of the empirical fisher approximation for natural gradient descent.
\newblock In \emph{Advances in Neural Information Processing Systems}, 2019.

\bibitem[Lee et~al.(2016)Lee, Simchowitz, Jordan, and Recht]{lee2016gd}
J.~D. Lee, M.~Simchowitz, M.~I. Jordan, and B.~Recht.
\newblock Gradient descent only converges to minimizers.
\newblock In \emph{Conference on Learning Theory}, 2016.

\bibitem[Lin et~al.(2019)Lin, Khan, and Schmidt]{lin2020mcef}
W.~Lin, M.~E. Khan, and M.~Schmidt.
\newblock Fast and simple natural-gradient variational inference with mixture of exponential-family approximations.
\newblock In \emph{International Conference on Machine Learning}, 2019.

\bibitem[Lloyd-Smith et~al.(2005)Lloyd-Smith, Schreiber, Kopp, and Getz]{lloydsmith2005superspreading}
J.~Lloyd-Smith, S.~Schreiber, P.~Kopp, and W.~Getz.
\newblock Superspreading and the effect of individual variation on disease emergence.
\newblock \emph{Nature}, 2005.

\bibitem[Lloyd-Smith(2007)]{lloydsmith2007negbin}
J.~O. Lloyd-Smith.
\newblock Maximum likelihood estimation of the negative binomial dispersion parameter for highly overdispersed data, with applications to infectious diseases.
\newblock \emph{PLOS ONE}, 2007.

\bibitem[Martens and Grosse(2015)]{martens2015kfac}
J.~Martens and R.~Grosse.
\newblock Optimizing neural networks with kronecker-factored approximate curvature.
\newblock In \emph{International Conference on Machine Learning}, 2015.

\bibitem[Minka(2000)]{minka2000dirichlet}
T.~Minka.
\newblock Estimating a dirichlet distribution.
\newblock Technical report, Microsoft, September 2000.

\bibitem[Minka(2002)]{minka2002gamma}
T.~Minka.
\newblock Estimating a gamma distribution.
\newblock Technical report, Microsoft, April 2002.

\bibitem[Murphy(2012)]{murphy2013ml}
K.~P. Murphy.
\newblock \emph{Machine Learning: A Probabilistic Perspective}.
\newblock The MIT Press, 2012.

\bibitem[Nocedal and Wright(2006)]{nocedal2006opt}
J.~Nocedal and S.~J. Wright.
\newblock \emph{Numerical Optimization}.
\newblock Springer, New York, NY, USA, 2e edition, 2006.

\bibitem[Ollivier et~al.(2017)Ollivier, Arnold, Auger, and Hansen]{ollivier2017natgrads}
Y.~Ollivier, L.~Arnold, A.~Auger, and N.~Hansen.
\newblock Information-geometric optimization algorithms: A unifying picture via invariance principles.
\newblock \emph{Journal of Machine Learning Research}, 2017.

\bibitem[Orooji et~al.(2021)Orooji, Nazar, Sadeghi, Moradi, Jafari, and Esmaily]{orooji2021countregression}
A.~Orooji, E.~Nazar, M.~Sadeghi, A.~Moradi, Z.~Jafari, and H.~a. Esmaily.
\newblock Factors associated with length of stay in hospital among the elderly patients using count regression models.
\newblock \emph{Medical Journal of the Islamic Republic Of Iran}, 2021.

\bibitem[Papamakarios et~al.(2017)Papamakarios, Pavlakou, and Murray]{papamakarios2017}
G.~Papamakarios, T.~Pavlakou, and I.~Murray.
\newblock Masked autoregressive flow for density estimation.
\newblock In \emph{Advances in Neural Information Processing Systems}, 2017.

\bibitem[Ren and Goldfarb(2019)]{ren2019natgrad}
Y.~Ren and D.~Goldfarb.
\newblock Efficient subsampled gauss-newton and natural gradient methods for training neural networks.
\newblock \emph{arXiv:1906.02353}, 2019.

\bibitem[Roe(2010)]{roe2010miniboone}
B.~Roe.
\newblock {MiniBooNE particle identification}.
\newblock UCI Machine Learning Repository, 2010.

\bibitem[Roux et~al.(2007)Roux, Manzagol, and Bengio]{roux2007topmoumoute}
N.~Roux, P.-a. Manzagol, and Y.~Bengio.
\newblock Topmoumoute online natural gradient algorithm.
\newblock In \emph{Advances in Neural Information Processing Systems}, 2007.

\bibitem[Salimbeni et~al.(2018)Salimbeni, Eleftheriadis, and Hensman]{salimbeni2018natgrad}
H.~Salimbeni, S.~Eleftheriadis, and J.~Hensman.
\newblock Natural gradients in practice: Non-conjugate variational inference in gaussian process models.
\newblock In \emph{International Conference on Artificial Intelligence and Statistics}, 2018.

\bibitem[Sato(2001)]{sato2001vb}
M.-A. Sato.
\newblock Online model selection based on the variational bayes.
\newblock \emph{Neural Computation}, 2001.

\bibitem[Tatzel et~al.(2022)Tatzel, Hennig, and Schneider]{tatzel2022latephase}
L.~Tatzel, P.~Hennig, and F.~Schneider.
\newblock Late-phase second-order training.
\newblock In \emph{Has it Trained Yet? NeurIPS Workshop}, 2022.

\bibitem[Wainwright and Jordan(2008)]{wainwright2008expfam}
M.~Wainwright and M.~Jordan.
\newblock {Graphical Models, Exponential Families, and Variational Inference}.
\newblock \emph{Foundations and Trends in Machine Learning}, 2008.

\end{thebibliography}

\section*{Checklist}

\begin{enumerate}
    \item For all models and algorithms presented, check if you include:
        \begin{enumerate}
            \item A clear description of the mathematical setting, assumptions, algorithm, and/or model. [Yes, see Section \ref{sec:method} and Appendix \ref{app:validity}.]
            \item An analysis of the properties and complexity (time, space, sample size) of any algorithm. [Yes, see Appendix \ref{app:algorithms_sngd}.]
            \item (Optional) Anonymized source code, with specification of all dependencies, including external libraries. [Yes, see Appendix \ref{app:experiment_details}.]
        \end{enumerate}

    \item For any theoretical claim, check if you include:
        \begin{enumerate}
            \item Statements of the full set of assumptions of all theoretical results. [Yes, see Section \ref{subsec:method_equivalence} and Appendix \ref{app:validity}.]
            \item Complete proofs of all theoretical results. [Yes, see Appendix \ref{app:validity}.]
            \item Clear explanations of any assumptions. [Yes, see Section \ref{subsec:method_equivalence} and Appendix \ref{app:validity}.]
        \end{enumerate}

    \item For all figures and tables that present empirical results, check if you include:
        \begin{enumerate}
            \item The code, data, and instructions needed to reproduce the main experimental results (either in the supplemental material or as a URL). [Code to be released prior to publication.]
            \item All the training details (e.g., data splits, hyperparameters, how they were chosen). [Yes, see Appendix \ref{app:experiment_details}.]
            \item A clear definition of the specific measure or statistics and error bars (e.g., with respect to the random seed after running experiments multiple times). [Yes, see Appendix \ref{app:experiment_details}.]
            \item A description of the computing infrastructure used. (e.g., type of GPUs, internal cluster, or cloud provider). [Yes, see Appendix \ref{app:experiment_details}.]
        \end{enumerate}

    \item If you are using existing assets (e.g., code, data, models) or curating/releasing new assets, check if you include:
        \begin{enumerate}
            \item Citations of the creator If your work uses existing assets. [Yes, see Appendix \ref{app:experiment_details}.]
            \item The license information of the assets, if applicable. [Yes, see Appendix \ref{app:experiment_details}.]
            \item New assets either in the supplemental material or as a URL, if applicable. [Not Applicable.]
            \item Information about consent from data providers/curators. [Not Applicable.]
            \item Discussion of sensible content if applicable, e.g., personally identifiable information or offensive content. [Not Applicable.]
        \end{enumerate}

    \item If you used crowdsourcing or conducted research with human subjects, check if you include:
        \begin{enumerate}
            \item The full text of instructions given to participants and screenshots. [Not Applicable.]
            \item Descriptions of potential participant risks, with links to Institutional Review Board (IRB) approvals if applicable. [Not Applicable.]
            \item The estimated hourly wage paid to participants and the total amount spent on participant compensation. [Not Applicable.]
        \end{enumerate}
\end{enumerate}

\onecolumn

\appendix

\section{GRADIENT NOTATION}
\label{app:gradientnotation}
In this section we introduce our notation for gradients and related quantities. We largely follow the notation of \citet{bertsekas1997nonlinear}, with one addition. For function $f: \mathbb{R}^m \rightarrow \mathbb{R}$, the gradient at $x$, assuming all partial derivatives exist, is given by
\begin{align}
    \nabla f(x) &= \left[{\pdv{f(x)}{x_1}}, {\pdv{f(x)}{x_2}}, \dots, {\pdv{f(x)}{x_m}}\right]^\top.
\end{align}
Note, in particular, that this implies $\nabla f(g(y))$ is the gradient of $f$ evaluated at $x = g(y)$. The Hessian of $f$, denoted $\nabla^2 f(x)$, is the matrix with entries given by
\begin{align}
    \left[\nabla^2 f(x)\right]_{ij} &= \pdv{f(x)}{x_i}{x_j}.
\end{align}

If $f: \mathbb{R}^{m+n}$ is a function of $x\in\mathbb{R}^m$ and $y\in\mathbb{R}^n$, then
\begin{align}
    \nabla_x f(x, y) &= \left[{\pdv{f(x, y)}{x_1}}, {\pdv{f(x, y)}{x_2}}, \dots, {\pdv{f(x, y)}{x_m}}\right]^\top
\end{align}
with $\nabla_y f(x, y)$ defined similarly.

When $f: \mathbb{R}^m \rightarrow \mathbb{R}^n$ is a vector valued function, the \emph{gradient matrix} of $f$, denoted $\nabla f(x)$, is the transpose of the Jacobian of $f$. That is, the matrix with $i$-th column equal to the gradient of $f_i$, the $i$-th component of $f$:
\begin{align}
    \nabla f(x) &= \left[\nabla f_1(x)\ ...\ \nabla f_n(x)\right].
\end{align}
Finally (our addition), where $\nabla$ is immediately followed by a bracketed expression, we use this to denote the gradient of an anonymous function, with definition given by the bracketed expression, and gradient taken with respect to the subscript, e.g.
\begin{align}
    \nabla_x\bs(2f(x)\bs) &= 2\nabla f(x),
\end{align}
and in these cases (only), evaluation of the gradient at, e.g. $x = g(y)$, is denoted
\begin{align}
    \nabla_x\bs(2f(x)\bs)\big\rvert_{g(y)}.
\end{align}

\section{EQUIVALENCE WITH OPTIMISATION OF $f$}
\label{app:validity}

In this appendix we show that under certain conditions, optimising $\tilde f$ is equivalent to optimising $f$ in the following sense: finding a local minimiser of $\tilde f$ also gives us a local minimiser for $f$ (Proposition \ref{thm:criticalpoints_localminimum}), and all local minima of $f$ are attainable through $\tilde f$ (Proposition \ref{thm:attanability}). Furthermore, we show that $\tilde f$ does not have any non-strict saddle points that are not also present at the corresponding points in $f$ (Proposition \ref{thm:criticalpoints_strictsaddlepoint}, with support from Propositions \ref{thm:criticalpoints_localmaximum} and \ref{thm:criticalpoints_saddlepoint}).

Note that the results derived here are more general than those that are summarised in Section \ref{subsec:method_equivalence}. The conditions stated in that section are sufficient to cover all of the examples appearing in this paper. In this appendix we use notation mirroring the method of Section \ref{subsec:method_sngd}, but the results apply equally to the extension of Section \ref{subsec:method_auxiliaryparams} if we replace $\tilde\Theta$ with the product manifold $\tilde\Theta\times\Lambda$ below.

Let $\Theta$ and $\tilde{\Theta}$ be differentiable manifolds of dimension $i$ and $j$ respectively, where $i \le j$.  Let $g: \tilde{\Theta} \rightarrow \Theta$ be a twice differentiable submersion on $\tilde{\theta}$, with $g(\tilde\Theta) = \Theta$. Let $f : \Theta \rightarrow \mbb{R}$ be twice continuously differentiable, and define $\tilde{f} = f \circ g$.

\begin{proposition}
    \label{thm:criticalpoints_localminimum}
    $\tilde f$ has a local minimum at $\tilde{\theta}^*$ if and only if $f$ has a local minimum at $\theta^* = g(\tilde{\theta}^*)$
\end{proposition}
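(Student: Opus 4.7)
The plan is to prove the biconditional by handling each direction separately. The reverse implication follows essentially from continuity of $g$, while the forward implication rests crucially on the fact that a submersion is an open map.

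For the reverse direction, I would start from the assumption that $f$ has a local minimum at $\theta^* = g(\tilde\theta^*)$, so there exists an open neighbourhood $V \subset \Theta$ of $\theta^*$ with $f(\theta) \geq f(\theta^*)$ for every $\theta \in V$. Since $g$ is twice differentiable, hence continuous, the preimage $U := g^{-1}(V)$ is an open neighbourhood of $\tilde\theta^*$, and for each $\tilde\theta \in U$ we have $\tilde f(\tilde\theta) = f(g(\tilde\theta)) \geq f(\theta^*) = \tilde f(\tilde\theta^*)$, establishing local minimality of $\tilde f$ at $\tilde\theta^*$.

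For the forward direction, suppose $\tilde f$ has a local minimum at $\tilde\theta^*$ with witnessing open neighbourhood $U \subset \tilde\Theta$. The key fact I would invoke is that every submersion is an open map: by the local submersion theorem (a consequence of the constant rank theorem), around any $\tilde\theta \in \tilde\Theta$ there exist smooth coordinate charts in which $g$ has the form of the canonical projection $(x_1,\dots,x_j) \mapsto (x_1,\dots,x_i)$, which is manifestly open. Applied to $U$, this yields that $g(U)$ is an open neighbourhood of $\theta^* = g(\tilde\theta^*)$. For each $\theta \in g(U)$, pick any preimage $\tilde\theta \in U$ with $g(\tilde\theta) = \theta$; then $f(\theta) = \tilde f(\tilde\theta) \geq \tilde f(\tilde\theta^*) = f(\theta^*)$, so $f$ has a local minimum at $\theta^*$.

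The main subtlety is recognising that the submersion hypothesis on $g$ is exactly what makes the forward direction work. Without it a merely smooth map need not be open---for instance, a constant $g$ would render $\tilde f$ globally constant and hence locally minimised everywhere, regardless of the behaviour of $f$ at $\theta^*$. Surjectivity of $dg$ at every point supplies enough directions in $\tilde\Theta$ that local minimality of $\tilde f$ faithfully reflects local minimality of $f$, and no further assumption (such as the full-image condition $g(\tilde\Theta)=\Theta$, which is only needed when establishing attainability of arbitrary minima of $f$ through $\tilde f$) is required here.
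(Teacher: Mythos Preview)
Your proof is correct and follows essentially the same route as the paper: continuity of $g$ for the implication $f\Rightarrow\tilde f$, and the fact that a submersion is an open map for the implication $\tilde f\Rightarrow f$. Your additional justification of openness via the local submersion/constant rank theorem and your remark that the condition $g(\tilde\Theta)=\Theta$ is not needed here are accurate elaborations beyond what the paper states explicitly.
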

\begin{proof}
    First we prove the statement: $f$ has a local minimum at $\theta^* = g(\tilde\theta^*) \Rightarrow \tilde f$ has a local minimum at $\tilde\theta^*$.

    From the definition of a local minimum, there exists a neighbourhood of $\theta^*$, $\mc V$ such that
    \begin{align}
        f(\theta^*) &\le f(\theta)\ \forall\ \theta \in \mc V.
    \end{align}
    Because $g$ is a continuous map, the preimage of $\mc V$, $\mc U = g^{-1}(\mc V)$, is an open set which by construction contains $\tilde\theta^*$, and is therefore a neighbourhood of $\tilde\theta^*$.  The result then follows
    \begin{align}\begin{split}
        f(\theta^*) &\le f(\theta)\ \forall\ \theta \in \mc V \\
        \Rightarrow  f\bs(g(\tilde\theta^*)\bs) &\le f\bs(g(\tilde\theta)\bs)\ \forall\ \tilde\theta \in \mc U \\
        \Rightarrow \tilde{f}(\tilde\theta^*) &\le \tilde{f}(\tilde\theta)\ \forall\ \tilde\theta \in \mc U
    \end{split}\end{align}

    Finally, we prove the statement: $\tilde f$ has a local minimum at $\tilde\theta^* \Rightarrow f$ has a local minimum at $\theta^* = g(\tilde\theta^*)$.

    From the definition of a local minimum, there exists a neighbourhood of $\tilde\theta^*$, $\mc U$ such that
\begin{align}
    \tilde{f}(\tilde\theta^*) &\le \tilde{f}(\tilde\theta)\ \forall\ \tilde\theta \in \mc  U
\end{align}
By assumption $g$ is a submersion and therefore a continuous open map.  Because $\mc U$ is an open set containing $\tilde\theta^*$, $\mc V = g(\mc U)$ must also be an open set containing $\theta^*$.  Finally, then
\begin{align}\begin{split}
    \tilde{f}(\tilde\theta^*) &\le \tilde{f}(\tilde\theta)\ \forall\ \tilde\theta \in \mc 
 U \\
    \Rightarrow f\bs(g(\tilde\theta^*)\bs) &\le f\bs(g(\tilde\theta)\bs) \forall\ \tilde\theta \in \mc U \\
    \Rightarrow f(\theta^*) &\le f(\theta) \forall\ \theta \in \mc V
\end{split}\end{align}
\end{proof}

\begin{proposition}
    \label{thm:attanability}
    For any local minimiser $\theta^*$ of $f$, $\exists\tilde{\theta}^* \in \tilde{\Theta}$ that is a local minimiser of $\tilde{f}$ s.t. $\theta^* = g(\tilde{\theta}^*)$
\end{proposition}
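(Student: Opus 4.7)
The plan is to observe that this proposition is essentially an immediate consequence of Proposition \ref{thm:criticalpoints_localminimum} combined with the surjectivity of $g$. The heavy lifting (using that $g$ is a continuous open map whose preimages and images of neighbourhoods remain neighbourhoods) has already been done in Proposition \ref{thm:criticalpoints_localminimum}, so here only an existence argument for a suitable preimage is required.

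First I would invoke the hypothesis $g(\tilde\Theta) = \Theta$, which says $g$ is surjective. Thus, for any local minimiser $\theta^* \in \Theta$ of $f$, the preimage set $g^{-1}(\theta^*) \subseteq \tilde\Theta$ is nonempty, so one can select some $\tilde\theta^* \in \tilde\Theta$ with $g(\tilde\theta^*) = \theta^*$.

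Next I would apply the forward (``if'') direction of Proposition \ref{thm:criticalpoints_localminimum}, which states that whenever $f$ has a local minimum at $\theta^* = g(\tilde\theta^*)$, $\tilde f$ has a local minimum at $\tilde\theta^*$. Since by assumption $f$ has a local minimum at $\theta^*$, this directly yields that $\tilde\theta^*$ is a local minimiser of $\tilde f$, completing the proof.

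The main (and only minor) obstacle is really bookkeeping: one has to ensure that the proposition being cited was established for an \emph{arbitrary} preimage $\tilde\theta^*$ of $\theta^*$, rather than for a specifically chosen one. Looking back at the proof of Proposition \ref{thm:criticalpoints_localminimum}, the forward implication was indeed shown for any $\tilde\theta^*$ satisfying $g(\tilde\theta^*) = \theta^*$, so no additional work is needed. Consequently the proof of Proposition \ref{thm:attanability} fits in two sentences and does not require any further appeal to the submersion property, twice-differentiability, or the dimensional assumption $i \le j$; those will be used more seriously for the saddle-point statements that follow.
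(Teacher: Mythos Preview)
Your proposal is correct and matches the paper's own proof almost verbatim: the paper also uses surjectivity of $g$ to obtain a preimage $\tilde\theta^*$ and then invokes Proposition~\ref{thm:criticalpoints_localminimum}. Your additional remark that the forward direction of Proposition~\ref{thm:criticalpoints_localminimum} applies to any preimage is a valid observation, but no extra argument beyond what the paper gives is needed.
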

\begin{proof}
    $g(\tilde{\Theta}) = \Theta \Rightarrow \exists\tilde{\theta}^* \in \tilde{\Theta}$ s.t. $\theta^* = g(\tilde{\theta}^*)$. The result then follows from Proposition \ref{thm:criticalpoints_localminimum}.
\end{proof}

\begin{proposition}
    $\tilde f$ has a local maximum at $\tilde{\theta}$ if and only if $f$ has a local maximum at $\theta = g(\tilde{\theta})$
    \label{thm:criticalpoints_localmaximum}
\end{proposition}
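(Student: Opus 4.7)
The plan is to mirror the proof of Proposition \ref{thm:criticalpoints_localminimum} essentially verbatim, replacing every occurrence of ``local minimum'' with ``local maximum'' and flipping the direction of each inequality. The two directions of the biconditional rely on exactly the same topological facts about $g$ as in the minimum case: continuity for the forward direction, and the open-map property (which follows from $g$ being a submersion) for the reverse direction.

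For the direction ``$f$ has a local maximum at $\theta^* = g(\tilde\theta^*) \Rightarrow \tilde f$ has a local maximum at $\tilde\theta^*$'', I would pick a neighbourhood $\mc V$ of $\theta^*$ on which $f(\theta) \le f(\theta^*)$, and take $\mc U = g^{-1}(\mc V)$, which is open by continuity of $g$ and contains $\tilde\theta^*$ by construction. Composing with $g$ gives $\tilde f(\tilde\theta) = f(g(\tilde\theta)) \le f(\theta^*) = \tilde f(\tilde\theta^*)$ for all $\tilde\theta \in \mc U$, which is the required conclusion.

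For the reverse direction ``$\tilde f$ has a local maximum at $\tilde\theta^* \Rightarrow f$ has a local maximum at $\theta^* = g(\tilde\theta^*)$'', I would take a neighbourhood $\mc U$ of $\tilde\theta^*$ on which $\tilde f(\tilde\theta) \le \tilde f(\tilde\theta^*)$, and set $\mc V = g(\mc U)$. Since $g$ is a submersion it is an open map, so $\mc V$ is an open set containing $\theta^*$. For any $\theta \in \mc V$ there exists $\tilde\theta \in \mc U$ with $g(\tilde\theta) = \theta$, whence $f(\theta) = \tilde f(\tilde\theta) \le \tilde f(\tilde\theta^*) = f(\theta^*)$.

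There is effectively no hard step: both directions are structurally identical to the local minimum case, and the two topological facts needed (continuity and openness of $g$) are already established under the standing hypotheses on $g$. An alternative one-line proof would be to apply Proposition \ref{thm:criticalpoints_localminimum} to $-f$ (noting that $\widetilde{-f} = -\tilde f$ and that local maxima of $h$ coincide with local minima of $-h$), which avoids any repetition of the topological argument; I would mention this observation, but the direct mirror-image proof is likely more readable in context.
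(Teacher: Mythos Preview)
Your proposal is correct and matches the paper's approach: the paper's proof is simply the one line ``This follows from Proposition \ref{thm:criticalpoints_localminimum} by symmetric arguments,'' which is precisely your mirror-image argument (or equivalently your $-f$ observation). You have written out more detail than the paper does, but the content is identical.
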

\begin{proof}
    This follows from Proposition \ref{thm:criticalpoints_localminimum} by symmetric arguments.
\end{proof}

\begin{proposition}
    \label{thm:criticalpoints_saddlepoint}
    $\tilde f$ has a saddle point at $\tilde{\theta}$ if and only if $f$ has a saddle point at $\theta = g(\tilde{\theta})$
\end{proposition}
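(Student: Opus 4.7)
The plan is to combine a correspondence of critical points (coming from the chain rule together with the submersion hypothesis) with the correspondences of local extrema already established in Propositions \ref{thm:criticalpoints_localminimum} and \ref{thm:criticalpoints_localmaximum}. Taking a saddle point of a twice differentiable function to mean a critical point that is neither a local minimum nor a local maximum, the biconditional then falls out by elimination.

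The first step is to show that $\tilde\theta$ is a critical point of $\tilde f$ if and only if $\theta = g(\tilde\theta)$ is a critical point of $f$. By the chain rule,
\begin{align}
    \nabla \tilde f(\tilde\theta) = \nabla g(\tilde\theta)\, \nabla f\bs(g(\tilde\theta)\bs),
\end{align}
so the direction $\nabla f(\theta) = 0 \Rightarrow \nabla \tilde f(\tilde\theta) = 0$ is immediate. For the converse I would invoke the submersion hypothesis: $\nabla g(\tilde\theta)$ is the transpose of a Jacobian of rank $i$, hence is a $j \times i$ matrix of full column rank, and so is left-invertible. Therefore vanishing of $\nabla g(\tilde\theta)\,\nabla f\bs(g(\tilde\theta)\bs)$ forces vanishing of $\nabla f\bs(g(\tilde\theta)\bs)$. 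The coordinate-free version of this step is that if $dg_{\tilde\theta}$ is surjective and $df_{g(\tilde\theta)} \circ dg_{\tilde\theta} = 0$, then $df_{g(\tilde\theta)} = 0$.

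With critical points in correspondence, I would then apply Propositions \ref{thm:criticalpoints_localminimum} and \ref{thm:criticalpoints_localmaximum}: $\tilde f$ has a local minimum (resp.\ maximum) at $\tilde\theta$ if and only if $f$ has a local minimum (resp.\ maximum) at $\theta$. Combining the three equivalences, $\tilde\theta$ is a critical point of $\tilde f$ that is neither a local minimum nor a local maximum if and only if $\theta$ is a critical point of $f$ with the same property, i.e.\ if and only if $\theta$ is a saddle point of $f$.

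The main subtlety, more a bookkeeping remark than a genuine obstacle, is the role of the submersion hypothesis: without full column rank of $\nabla g(\tilde\theta)$, vanishing of $\nabla \tilde f$ would not force vanishing of $\nabla f$, and one could introduce spurious critical points in $\tilde f$ that are not critical points of $f$. The submersion condition is exactly what excludes this pathology, and once it is handled the remainder of the argument reduces to invoking the already-proven propositions.
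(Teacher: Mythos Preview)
Your proposal is correct and follows essentially the same approach as the paper: establish the correspondence of critical points via the chain rule and the full column rank of $\nabla g(\tilde\theta)$ (the submersion hypothesis), then invoke Propositions \ref{thm:criticalpoints_localminimum} and \ref{thm:criticalpoints_localmaximum} to rule out local extrema, leaving saddle points in bijection by elimination. The paper's argument is structured identically, so there is nothing substantive to add.
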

\begin{proof}
    First, we prove the following statement: $\tilde f$ has a critical point at $\tilde{\theta} \Leftrightarrow f$ has a critical point at $\theta = g(\tilde{\theta})$.

    Let $\tilde\theta$ and $\theta = g(\tilde\theta)$ be represented as co-ordinates for some charts at those points, with $f$, $g$, $\tilde f$ defined similarly. Then the statement about critical points can be expressed as follows
    \begin{align}
        \nabla\tilde f(\tilde\theta) = \mathbf{0}\ \Leftrightarrow\ \nabla f\bs(g(\tilde\theta)\bs) = \mathbf{0}
    \end{align}
    where $\mathbf{0}$ is a vector of zeros. To show that $\nabla\tilde f(\tilde\theta) = \mathbf{0}\ \Rightarrow\ \nabla f\bs(g(\tilde\theta)\bs) = \mathbf{0}$, we have
    \begin{align}
        \nabla\tilde f(\tilde\theta) &= \nabla g(\tilde\theta)\nabla f\bs(g(\tilde\theta)\bs) 
    \end{align}
    where $\nabla g(\tilde\theta)$ is the transposed Jacobian of $g$ (see Appendix \ref{app:gradientnotation} for an explanation of this notation) and has full column rank due to $g$ being a submersion, and so:
    \begin{align}
        \nabla g(\tilde\theta)\nabla f\bs(g(\tilde\theta)\bs) = \mathbf{0} &\Rightarrow \nabla f\bs(g(\tilde\theta)\bs) = \mathbf{0}.
    \end{align}
    For the other direction, $\nabla f\bs(g(\tilde\theta)\bs) = \mathbf{0} \Rightarrow \nabla\tilde f(\tilde\theta) = \mathbf{0}$, trivially,
    \begin{align}\begin{split}
        \nabla\tilde f(\tilde\theta) &= \nabla g(\tilde\theta)\nabla f\bs(g(\tilde\theta)\bs)   \\
        &= \nabla g(\tilde\theta)\mathbf{0} \\
        &= \mathbf{0}.
    \end{split}\end{align}

    Having proven correspondence of critical points, we can proceed to prove the proposition statement in two parts.
    
    First, if $\tilde f$ has a saddle point at $\tilde{\theta}$, then $f$ must have a critical point at $\theta = g(\tilde{\theta})$.  However, from Propositions \ref{thm:criticalpoints_localminimum} and \ref{thm:criticalpoints_localmaximum}, we know that this cannot be a local minimum or local maximum, and hence must be a saddle point.

    Second, in the other direction, if $f$ has a saddle point at $\theta = g(\tilde{\theta})$, then $\tilde f$ has a critical point at $\tilde{\theta}$, which by similar reasoning must also be a saddle point.
\end{proof}

\begin{proposition}
    \label{thm:criticalpoints_strictsaddlepoint}
    If $\tilde f$ has a non-strict saddle point at $\tilde{\theta}$ then $f$ has a non-strict saddle point at $\theta = g(\tilde{\theta})$
\end{proposition}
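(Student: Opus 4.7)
The plan is to work coordinate-wise in local charts (as was done in the proof of Proposition \ref{thm:criticalpoints_saddlepoint}) and to combine the existing saddle-point correspondence with a Hessian computation at critical points. A non-strict saddle point is a saddle point whose Hessian has no strictly negative eigenvalue, so following \citet{lee2016gd} I only need to show that if $\nabla^2\tilde f(\tilde\theta)\succeq 0$ then $\nabla^2 f(\theta)\succeq 0$; combined with Proposition \ref{thm:criticalpoints_saddlepoint} this yields the result.

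First I would invoke Proposition \ref{thm:criticalpoints_saddlepoint} to conclude that $f$ has a saddle point at $\theta = g(\tilde\theta)$, which in particular gives $\nabla f(\theta) = \mathbf{0}$. The next step is the key calculation: differentiating $\tilde f = f\circ g$ twice and evaluating at the critical point. Let $J_g(\tilde\theta) = \nabla g(\tilde\theta)^\top$ denote the Jacobian of $g$. A direct application of the chain rule gives
\begin{align}
\nabla^2 \tilde f(\tilde\theta) = J_g(\tilde\theta)^\top\,\nabla^2 f(\theta)\,J_g(\tilde\theta) + \sum_{k} \frac{\partial f}{\partial \theta_k}(\theta)\,\nabla^2 g_k(\tilde\theta),
\end{align}
and the second summand vanishes because $\nabla f(\theta) = \mathbf{0}$, leaving $\nabla^2 \tilde f(\tilde\theta) = J_g(\tilde\theta)^\top\,\nabla^2 f(\theta)\,J_g(\tilde\theta)$.

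Now I argue by contraposition. Suppose $\nabla^2 f(\theta)$ has a strictly negative eigenvalue; then there is a vector $v\in\mbb{R}^i$ with $v^\top \nabla^2 f(\theta) v < 0$. Because $g$ is a submersion, $J_g(\tilde\theta)$ has full row rank $i$, so its transpose is surjective and there exists $u\in\mbb{R}^j$ with $J_g(\tilde\theta)\,u = v$. Substituting into the Hessian identity gives $u^\top \nabla^2\tilde f(\tilde\theta)\,u = v^\top \nabla^2 f(\theta)\,v < 0$, so $\nabla^2\tilde f(\tilde\theta)$ also has a strictly negative eigenvalue, contradicting non-strictness of the saddle at $\tilde\theta$. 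Hence $\nabla^2 f(\theta)\succeq 0$, i.e., $\theta$ is a non-strict saddle point of $f$.

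The only real obstacle is bookkeeping: making sure the first-order term in the Hessian of a composition really does vanish at a critical point (which relies on $\nabla f(\theta) = \mathbf{0}$, secured via the critical-point correspondence already established), and that the submersion assumption gives the surjectivity needed to lift the bad direction $v$ back to a bad direction $u$ upstairs. Both are standard, so I expect the writeup to be short.
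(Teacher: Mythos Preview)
Your proposal is correct and follows essentially the same route as the paper: invoke Proposition~\ref{thm:criticalpoints_saddlepoint}, compute the Hessian of the composition at the critical point so the first-order term drops, then lift a negative direction for $\nabla^2 f(\theta)$ through the full-rank Jacobian to get a negative direction for $\nabla^2\tilde f(\tilde\theta)$. One small wording slip: it is $J_g(\tilde\theta)$ itself that is surjective (full row rank), not its transpose; the conclusion that some $u$ with $J_g(\tilde\theta)u=v$ exists is nonetheless correct, and the paper phrases the same step via a left inverse of $\nabla g(\tilde\theta)$.
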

\begin{proof}
    From Proposition \ref{thm:criticalpoints_saddlepoint}, we know that if $\tilde f$ has a non-strict saddle point at $\tilde\theta$, then $f$ must have a saddle point at $\theta = g(\tilde\theta)$. It remains to be proven that the saddle point at $\theta$ must be a non-strict saddle point. We do this by contradiction. Let us assume that the saddle point of $f$ at $\theta = g(\tilde\theta)$ is strict.

    Let $\tilde\theta$, $\theta$ be represented as co-ordinates for some charts at those points, with $f$, $g$, $\tilde f$ defined similarly. Furthermore, let $H_f = \nabla^2\tilde f(\tilde\theta)$, $H_{\tilde f} = \nabla^2 f\bs(g(\tilde\theta)\bs)$ be the Hessians of $f$ and $\tilde f$, at $\tilde\theta$ and $\theta = g(\tilde\theta)$, respectively. Then,
    \begin{align}\begin{split}
        H_{\tilde f} &= \nabla g(\tilde\theta)H_f \nabla g(\tilde\theta)^\top + \sum_{k=1}^j [\nabla f\bs(g(\tilde\theta)\bs)]_k \nabla^2g_k(\tilde\theta) \\
        &= \nabla g(\tilde\theta) H_f \nabla g(\tilde\theta)^\top
    \end{split}\end{align}
    where $\nabla^2g_k(\tilde\theta)$ is the Hessian of the $k$-th component of $g$. The second equality follows from $\theta$ being a critical point of $f$.

    A strict saddle point is a saddle point for which there is at least one direction of strictly negative curvature, and so given the assumption that $\theta = g(\tilde\theta)$ is a strict saddle point of $f$, $\exists v \in \mbb{R}^i$ such that $v^\top H_f v < 0$. Let $\tilde v = A^\top v$, where $A$ is any left inverse of $\nabla g(\tilde\theta)$, then
    \begin{align}
    \begin{split}
        \tilde v^\top H_{\tilde f}\tilde v &= (A^\top v)^\top \nabla g(\tilde\theta) H_f\nabla g(\tilde\theta)^\top(A^\top v) \\
        &= v^\top H_f v \end{split}\\
        &< 0,
    \end{align}
    implying that $\tilde f(\tilde\theta)$ is a strict saddle point, a contradiction, and so we conclude that the saddle point at $f(\theta)$ cannot be strict.
\end{proof}

\section{ALGORITHMS}
\subsection{SNGD WITH EF SURROGATES}
\label{app:algorithms_sngd}

In Algorithm \ref{alg:sngd}, restated with line numbers below, we provide pseudocode for an implementation of SNGD when $\tilde q$ is an EF distribution, and $\tilde\theta$ are either natural or mean parameters of that family. We assume the existence of an autodiff operator $\texttt{grad}$, which takes as input a real-valued function, and returns another function for computing its gradient. Note that when $f$ cannot be computed deterministically, such as in VI or minibatch settings, we assume $\texttt{grad}$ returns a function that provides unbiased stochastic estimates of the gradient. In our VI experiments, where gradients had to be taken through samples, we used the reparameterisation trick \citep{kingma2014adam}, applied to the target distribution.

We also assume the existence of an overloaded function $\texttt{dualparams}$, which converts from mean to natural parameters of the EF, or vice-versa, depending on the type of its argument.\footnote{This is purely for convenience, as it allow us to describe a single implementation handling both parameterisations.} That is, in the notation of Section \ref{subsec:background_ef}, $\texttt{dualparams}$ resolves to either $\mu(.)$ or $\eta(.)$ as appropriate. We note that each $\texttt{dualparams}$ pair only needs to be defined once for each EF, and is not dependent on e.g. the target distribution or loss function, meaning that if these are supplied as part of a software library, the end user is only required to supply $f$, $g$ and $\tilde\theta_0$. Algorithm \ref{alg:sngd} also assumes a given step size schedule, but can easily be extended to incorporate line search or other methods for choosing $\epsilon_t$.

\newcounter{tempAlgorithmCounter}
\setcounter{tempAlgorithmCounter}{\value{algorithm}}
\setcounter{algorithm}{0}

\begin{algorithm}[ht]
    \caption{SNGD with EF surrogate}
    \begin{algorithmic}[1]
        \Require objective $f : \Theta \rightarrow \mbb{R}$
        \Require parameter mapping $g : \tilde{\Theta} \rightarrow \Theta$
        \Require initial surrogate parameters $\tilde\theta_0 \in \tilde\Theta$
        \Require step size schedule $\{\epsilon_t \in \mbb{R}_+ : t = 0, 1, ... \}$
        \State $\tilde f_\text{dual}(.) := f(g(\texttt{dualparams}(.)))$ 
        \State $t \leftarrow 0$
        \While{not converged}
            \State $\tilde{\nabla} \leftarrow \texttt{grad}[\tilde f_\text{dual}](\texttt{dualparams}(\tilde\theta_t))$
            \State $\tilde\theta_{t+1} \leftarrow \tilde\theta_t - \epsilon_t\tilde{\nabla}$
            \State $t \leftarrow t + 1$
        \EndWhile \\
        \Return{$g(\tilde\theta_t)$}
    \end{algorithmic}
\end{algorithm}

\setcounter{algorithm}{\value{tempAlgorithmCounter}}

On line $1$ of Algorithm \ref{alg:sngd} we define a reparameterisation of $\tilde f$ in terms of the dual parameters.  That is, if $\tilde\theta$ are natural parameters, $\tilde f_\text{dual}$ is a function of mean parameters, and vice-versa. Line $4$ computes the natural gradient, given by equation \eqref{eqn:ef_natgrad_natparams} or \eqref{eqn:ef_natgrad_meanparams}, using automatic differentiation of the function $\tilde f_\text{dual}$. Note that \emph{both} overloads of $\texttt{dualparams}$ are called: one inside the auto-differentiated function $\tilde f_\text{dual}$, and the other outside (to compute its argument). It is often possible for the inner conversion to be elided; for example, the user can supply a function $g_\text{dual}$ (instead of $g$) that can perform the map from dual parameters to $\theta$ directly, more efficiently than the composition $g \circ \texttt{dualparams}$. For example, this is often the case when $f$ depends on covariance-like parameters, and the composition $g \circ \texttt{dualparams}$ would otherwise involve inverting a matrix twice (a no-op).

If $\tilde\theta$ are mean parameters of $\tilde q$, then the computational overhead of Algorithm \ref{alg:sngd} (relative to GD in $f$) is approximately $\texttt{cost}[\eta] + 3\times\texttt{cost}[g\circ\mu]$, where $\texttt{cost}$ returns the cost of its function argument. Similarly, if $\tilde\theta$ are natural parameters, then the overhead will be approximately $\texttt{cost}[\mu] + 3\times\texttt{cost}[g\circ\eta]$. The factor of 3 in the second term in each case results from taking gradients through $g \circ \texttt{dualparams}$; however, as discussed above, it is often the case the composition is almost zero cost, in which case this term can be largely eliminated by implementing $g_\text{dual}$ directly.

We have assumed here that we can take gradients through either $\mu(.)$ or $\eta(.)$ efficiently, depending on the choice of parameterisation. For $\mu(.)$ this is true by assumption for tractable families. For some families the `reverse' map $\eta(.)$, is not available in closed form, but can be efficiently computed using an iterative optimisation procedure \citep{minka2000dirichlet,minka2002gamma}. In such cases, we can use implicit differentiation techniques to efficiently compute gradients \citep{christianson1994fixedpoints,blondel2022implicitdiff}.

\subsection{SNGD WITH EF SURROGATES AND AUXILIARY PARAMETERS}
\label{app:algorithms_sngd_auxiliaryparams}

In Algorithm \ref{alg:sngd_additionalparams} we provide pseudocode for the extension of Section \ref{subsec:method_auxiliaryparams}, in which we augment $\tilde\theta$ with auxiliary parameters $\lambda$.  $\tilde\theta$ are optimised using natural gradients, whereas $\lambda$ are optimised with standard first-order methods.  Algorithm \ref{alg:sngd_additionalparams} uses GD with a fixed learning rate schedule for $\lambda$, but the extension to any first-order optimiser is straightforward.

The structure of Algorithm \ref{alg:sngd_additionalparams} is largely the same as that of Algorithm \ref{alg:sngd}. One notable difference is that $\tilde f_\text{dual}$ now has 2 arguments, and so the call to $\texttt{grad}$ on line 4 returns a function that returns a 2-tuple of gradients, one for each argument.

\begin{algorithm}[ht]
    \caption{SNGD with EF surrogate and auxiliary parameters}
    \label{alg:sngd_additionalparams}
    \begin{algorithmic}[1]
        \Require objective $f : \Theta \rightarrow \mbb{R}$
        \Require parameter mapping $g : \tilde{\Theta} \rightarrow \Theta$
        \Require initial surrogate parameters $\tilde\theta_0 \in \tilde\Theta$
        \Require initial auxiliary parameters $\lambda_0 \in \Lambda$
        \Require $\theta$ step size schedule $\{\epsilon_t \in \mbb{R}_+ : t = 0, 1, ... \}$
        \Require $\lambda$ step size schedule $\{\varepsilon_t \in \mbb{R}_+ : t = 0, 1, ... \}$
        \State $\tilde f_\text{dual}(\_1, \_2) := f(g(\texttt{dualparams}(\_1), \_2))$
        \State $t \leftarrow 0$
        \While{not converged}
            \State $(\tilde{\nabla}_\theta, \nabla_\lambda) \ \leftarrow \texttt{grad}[\tilde f_\text{dual}](\texttt{dualparams}(\theta_t), \lambda_t)$
            \State $\theta_{t+1} \leftarrow \theta_t - \epsilon_t\tilde{\nabla}_\theta$
            \State $\lambda_{t+1} \leftarrow \lambda_t - \varepsilon_t\nabla_\lambda$
            \State $t \leftarrow t + 1$
        \EndWhile \\
        \Return{$g(\theta_t, \lambda_t)$}
        \end{algorithmic}
\end{algorithm}

\section{EXPERIMENT DETAILS}
\label{app:experiment_details}

In this appendix we provide additional details about the experiments presented in the main paper. The code for these experiments is available at \url{https://github.com/cambridge-mlg/sngd}.

We repeated all experiments with 10 different random seeds. In all cases this led to different random parameter initialisations. For VI experiments, this also seeded randomness in the Monte Carlo samples, and for experiments with minibatching, it also seeded randomness in the minibatch sampling. The mean and standard errors as displayed in the training curves and Pareto frontier plots were computed over the 10 runs. For VI experiments we used the reparameterisation trick to estimate gradients for each method \citep{kingma2014aevb}.

In experiments that were small scale and had objectives that could be computed deterministically, namely MLE experiments using the sheep and COVID datasets, we used exact line search to determine step sizes for each of the methods being tested. This allowed us to compare the search direction of each method without confounding results with choice of hyperparameter settings.

For all other experiments we chose hyperparameters using a grid search. The training curves in the main paper correspond to the `best' hyperparameter settings for each method. The best setting was considered to be that which had the best average (across time steps) worst case (over random seeds) value of the evaluation metric (negative elbo or negative log-likelihood as appropriate). Although this choice is somewhat arbitrary, we found that it consistently chose settings with training curves that closely resembled those that we considered best for each method. In Appendix \ref{app:experiment_results} we provide Pareto frontier plots which incorporate \emph{all} of the hyperparameter settings tried, which qualitatively are very similar to the training curves in the main paper.

For SNGD we chose $\tilde\theta$ to be mean parameters of $\tilde q$ for all MLE tasks, and natural parameters for all VI tasks. These choices were motivated by the results of Appendix \ref{app:efnatgrads}, and we found them to consistently perform better than alternatives.

All experiments were executed on a 76-core Dell PowerEdge C6520 server, with 256GiB RAM, and dual Intel Xeon Platinum 8368Q (Ice Lake) 2.60GHz processors. Each individual optimisation run was locked to a single dedicated core. Implementations were written in JAX \citep{frostig2018jax}.

Next we provide details specific to each task featured in the experiments. In the list below, $n$ denotes the number of training observations, and $d$ denotes the dimensionality of the distribution that is being optimised in the task.

\paragraph{Sheep}($n$=82, $d$=1) Taken from a seminal work on the negative binomial distribution by \citet{fisher1941negbin}, this dataset consists of the number of ticks observed on each member of a population of sheep. The task for this dataset was MLE of the negative binomial distribution.
\paragraph{UCI miniboone}($n$=32,840, $d$=43) Taken from the MiniBooNE experiment at Fermilab, this dataset consists of a number of readings that can be used to classify observations as either electron or muon neutrinos \citep{roe2010miniboone}.\footnote{Licensed under a Creative Commons Attribution 4.0 International (CC BY 4.0) license.} We follow the pre-processing of \citet{papamakarios2017}. Using this dataset we performed MLE of skew-normal and skew-$t$ distributions. We used 32,840 of the observations for training, and the remaining $3,648$ observations for evaluation. We used a minibatch size of $256$ for each method.
\paragraph{UCI covertype}($n$=500, $d$=53) This dataset classifies the forest cover type of $581,024$ pixels, based on 53 cartographic variables \citep{blackard1998covertype}.\footnote{Licensed under a Creative Commons Attribution 4.0 International (CC BY 4.0) license.} We used the `binary scale' preprocessing of \citet{chang2011libsvm}. The task was to perform VI in a Bayesian logistic regression model, with regularisation parameter 1.0. We found that using anything close to the full number of observations resulted in degenerate posteriors with virtually zero uncertainty, obviating the need for variational inference, and so we used a randomly chosen subset of $500$ observations for our experiments. We used skew-normal, skew-$t$ and skew-normal mixture distributions as approximate posteriors. All methods used $20$ Monte Carlo samples for training and $1000$ for evaluation.
\paragraph{Synthetic skew-normal}($n$=10,000, $d$=1,000) We generated synthetic data from a $d$-dimensional skew-normal distribution with parameters $(\xi, \Omega, \eta)$, where $\xi$ and $\eta$ were drawn from $\mathcal{N}(\mathbf{0}, I_d)$. We chose $\Omega = d^{-1}W^\top W + 10^{-4}I_d$, where the components of $W \in \mathbb{R}^{d\times d}$ were drawn from independent standard normal distributions. The task was to perform MLE of a skew-normal distribution on this dataset.
\paragraph{Synthetic skew-$t$}($n$=10,000, $d$=1,000) We generated synthetic data from a $d$-dimensional skew-$t$ distribution with parameters $(\xi, \Omega, \eta, \nu)$. $\xi$, $\Omega$ and $\eta$ were generated in the same manner as the synthetic skew-normal task as detailed above, with $\nu$ chosen to be 10. The task was to perform MLE of a skew-$t$ distribution on this dataset.
\paragraph{FTSE 100 stock returns}($n$=1,515, $d$=93) This dataset consists of daily stock price returns from 2017/01/01 to 2022/12/31 for the subset of FTSE 100 stocks that were members of the index during the entire period.\footnote{Downloaded from the Bloomberg Terminal.} We first fitted univariate Student's $t$ distributions to each dimension independently, and then transformed the data by converting each observed variable to its quantile value under the marginal distribution. The task was then MLE of a $t$ copula on the quantile values.
\paragraph{COVID hospital admissions}($n$=1,120, $d$=1) This dataset consists of the number of daily COVID hospital admissions in the UK from 2020/4/1 to 2023/5/1.\footnote{Downloaded from \url{https://coronavirus.data.gov.uk} and licensed under the Open Government License v3.0.} The task for this dataset was MLE of a 5-component negative binomial mixture.
\paragraph{Synthetic 2D logistic regression}($n$=30, $d$=2) This synthetic logistic regression dataset was generated using the same procedure as \citet{murphy2013ml}. The task was VI in a Bayesian logistic regression model, with regularisation parameter 1. We used a skew-normal mixture approximate posterior, with $20$ Monte Carlo samples for training and $1,000$ for evaluation.

Now we provide details particular to the distributions being optimised in the tasks above. Note that the correspondence between tasks (above) and target distributions (below) is many to many: some target distributions were applied to more than one task, and some tasks were used for several target distributions. The parameter mappings used for SNGD are given in Table \ref{tbl:mapppings} in Appendix \ref{app:mappings}; we do not repeat them here unless additional explanation is required. When the parameter mappings make use of the auxiliary parameter extension of Section \ref{subsec:method_auxiliaryparams}, we used Adam to optimise $\lambda$.

With our baseline methods, when a target distribution required a positive definite covariance matrix parameter, we tried two different parameterisations; covariance square-root (e.g. $\Sigma = W^\top W)$ and precision square-root (e.g. $\Sigma^{-1} = W^\top W)$ \citep{salimbeni2018natgrad}. This parameterisation choice was determined by a hyperparameter which we included in our grid search. Further parameterisation details are given below.

\paragraph{Negative binomial} The PMF of the negative binomial with parameters $\theta = (r, s)$ is given by \eqref{eqn:negbin_pmf}. We initialised negative binomial parameters for all methods by drawing $s \sim \text{Uniform}(0.05, 0.95)$, $r \sim \text{Gamma}(6.25, 1.25)$.  With GD, BFGS and NGD we used a log parameterisation of $r$, and a logit parameterisation of $s$. For SNGD we chose $\tilde\theta$ to be mean parameters of $\tilde q$.
In order to ensure that $g(\tilde\theta) \in \Theta\ \forall\ \tilde\theta \in \tilde\Theta$, we require $\beta < 1\ \forall\ \tilde\theta \in \tilde\Theta$, therefore we chose $\tilde\Theta$ as the subset of $\mathcal{M}$ (the mean domain of $\tilde q$) for which $\beta < 1$. It can be shown that this remains an open convex set.
\paragraph{Skew-normal} The PDF of the multivariate skew-normal with parameters $\theta = (\xi, \Omega, \eta)$ is given by $q_\theta(x) = 2\mathcal{N}_d(x;\xi, \Omega)\Phi(\eta^\top(x-\xi))$, where $\mathcal{N}_d(.)$ is the PDF of the $d$-dimensional normal distribution, and $\Phi(.)$ is the CDF of the standard normal distribution.\footnote{Equation (5.1) of \citet{azzalini2013skewnorm}.} We used random initialisations of $\xi \sim \mathcal{N}(\mathbf{0}, 0.01^2I_d)$, $\eta \sim \mathcal{N}(\mathbf{0}, 0.01^2I_d)$, with $\Omega$ initialised to $I_d$.
\paragraph{Skew-$t$} The PDF of the multivariate skew-$t$ with parameters $\theta = (\xi, \Omega, \eta, \nu)$ is given by $q_\theta(x) = 2\mathcal{T}_d(x; \xi, \Omega, \nu)\Psi(\alpha^\top x\sqrt{(\nu + d)/(\nu + x^\top\bar\Omega^{-1} x)}; \nu + d)$, where $\mathcal{T}_d$ is the PDF of the $d$-dimensional Student's $t$, $\Psi(.)$ is the CDF of the (univariate) Student's $t$, $\omega = (\Omega\odot I_d)^{1/2}$, $\bar\Omega = \omega^{-1}\Omega\omega^{-1}$, and $\alpha = \omega\eta$.\footnote{Equation (6.24) of \citet{azzalini2013skewnorm}.} We used random initialisations of $\xi \sim \mathcal{N}(\mathbf{0}, 0.01^2I_d)$, $\eta \sim \mathcal{N}(\mathbf{0}, 0.01^2I_d)$, with $\Omega$ and $\nu$ initialised to $I_d$ and $50$ respectively. For all methods we parameterised $\nu$ as $\log(\nu-2)$.
\paragraph{$t$ copula} The $d$-dimensional $t$ copula with parameters $\theta = (R, \nu)$ has PDF $q_\theta(x) = \mathcal{T}_d(z; 0, R)/\prod_i \mathcal{T}(z_i; 0, 1)$, where $\mathcal{T}$ is the PDF of the univariate Student's $t$. We initialised $R$ to $\texttt{corr}(I + W^\top W)$, and $\nu$ to $50$, where $W_{ij} \sim \mathcal{N}(0, .01^2)$, and $\texttt{corr}(\Sigma) = (\Sigma\odot I_d)^{-\frac{1}{2}}\Sigma(\Sigma\odot I_d)^{-\frac{1}{2}}$ projects a covariance matrix to its implied correlation matrix. For all methods we parameterised $\nu$ as $\log(\nu-2)$.
\paragraph{Negative binomial mixture} The negative binomial mixture with $k$ components and parameters $\theta = (\pi_i, r_i, s_i)_{i=1}^k$ has PMF given by \eqref{eqn:mixturedistribution}, where $q_{\theta_i}(x)$ is the negative binomial PMF given by \eqref{eqn:negbin_pmf} with parameters $\theta_i = (r_i, s_i)$. We initialised $\pi_i$ to $k^{-1}$ for $i = 1, \dots, k$, and the component negative binomials were initialised to have mean $m_i$ and variance $v_i$, with $m_i \sim \text{Uniform}(0.1x_\text{max}, 0.9x_\text{max})$ and $v_i = m_io_i$ where $o_i^{-1} \sim \text{Uniform}(0.001, 0.02)$ and $x_\text{max}$ is the maximum value observed in the training data. With GD and BFGS, we used a softmax parameterisation of the mixture probabilities, a log parameterisation of $r_i$, and a logit parameterisation of $s_i$.
\paragraph{Skew-normal mixture}  The skew-normal mixture with $k$ components and parameters $\theta = (\pi_i, \xi_i, \Omega_i, \nu_i)_{i=1}^k$ has PDF given by \eqref{eqn:mixturedistribution}, where $q_{\theta_i}(x)$ is the skew-normal PDF with parameters $\theta_i = (\xi_i, \Omega_i, \nu_i)$. We initialised $\pi_i$ to $k^{-1}$ for $i = 1, \dots, k$, with the remaining parameters initialised equivalently to the (non-mixture) skew-normal case. With Adam, we used a softmax parameterisation of the mixture probabilities.

\section{PARAMETER MAPPINGS}
\label{app:mappings}

In Table \ref{tbl:mapppings} we provide parameter mappings for all of the examples appearing in this paper. For convenience, we express the mappings in terms of the standard parameterisation of $\tilde q$. For the method described in Section \ref{subsec:method_sngd}, $\tilde\theta$ would be the mean or natural parameters corresponding to the parameters stated here, with $g$ similarly adjusted. Note that in Table \ref{tbl:mapppings} we follow convention by using $\mu$ to denote the mean of a normal distribution, whereas in the main paper it refers to the mean \textit{parameters} (expected sufficient statistics) of an EF distribution. See Appendix \ref{app:experiment_details} for target distribution definitions.

{
\setlength{\tabcolsep}{3.8pt}
\begin{table}[h]
% \begin{sidewaystable*}[t]
  \caption{Example surrogate-target parameter mappings}
  \label{tbl:mapppings}
  \centering
  \begin{tabular}{llcccc}
    \toprule
    \textbf{TARGET} & \textbf{SURROGATE} & $\theta$ & $\tilde{\theta}$ & $\lambda$ & $g(\tilde{\theta}, \lambda)$ \\
    \midrule
    Neg. bin. & Gamma & $r, s$ & $\alpha, \beta$ & & $\alpha/(1-\beta^{-1}), \beta^{-1}$  \\
    Neg. bin. mix. & Gamma mix. model & $(\pi_i, r_i, s_i)$ & $(\pi_i, \alpha_i, \beta_i)$ & & $(\pi_i, \alpha_i/(1-\beta_i^{-1}), \beta_i^{-1})$ \\
    % Student's $t$ & Normal & $\mu, \Sigma, \nu$ & $\mu, \Sigma$ & $\nu$ & $\mu, \Sigma, \nu$ \\
    Skew-normal & Normal & $\xi, \Omega, \eta$ & $\mu, \Sigma$ & $\eta$ & $\mu, \Sigma, \eta$ \\
    Skew-normal mix. & Normal mix. model & $(\pi_i, \xi_i, \Omega_i, \eta_i)$ & $(\pi_i, \mu_i, \Sigma_i)$ & $(\eta_i)$ & $(\pi_i, \mu_i, \Sigma_i, \eta_i)$ \\
    Skew-$t$ & Normal & $\xi, \Omega, \eta, \nu$ & $\mu, \Sigma$ & $\eta, \log(\nu-2)$ & $\mu, \Sigma, \eta, \nu$ \\
     % & MCEF Skew-Normal & & $\xi, \Omega, \eta$ & $\nu$ & $\xi, \Omega, \eta, \nu$ \\
     % & MCEF Student's $t$ & & $\mu, \Sigma, \nu$ & $\eta$ & $\mu, \Sigma, \eta, \nu$ \\
    Elliptical copula & Zero-mean normal & $R, \dots$ & $\Sigma$ & $\dots$ & $\text{corr}(\Sigma), \dots \footnotemark$ \\
    % Elliptical Copula & Zero-mean Low-Rank Normal & $R, ...$ & $A, v$ & $...$ & $\text{corr}(A^\top A + \text{diag}(v)), ...$ \\
    % Student's $t$ Copula & Zero-mean MCEF Student's $t$ & $R, \nu$ & $\mu, \Sigma, \nu$ & & $\text{corr}(\Sigma), \nu$ \\
    \bottomrule
  \end{tabular}
\end{table}
}

\footnotetext{$\text{corr}(.)$ projects a covariance matrix to its implied correlation matrix. See Appendix \ref{app:experiment_details} for its definition.}
% \end{sidewaystable*}

\section{ADDITIONAL RESULTS}
\label{app:experiment_results}

In this appendix we provide additional results for the experiments presented in the main paper. In several of the experiments (those not using line-search), the competing methods were dependent on hyperparameters which were chosen by grid search. For the training curves in the main paper, we used the heuristic method described in Appendix \ref{app:experiment_details} in order to choose the `best' settings for each method.

In order to provide a more complete comparison, in Figures \ref{fig:paretofrontiers_covtype_mvsn} through \ref{fig:paretofrontiers_covtype_mvsnmix} we present Pareto frontier plots that show the best (average) value of the chosen metric ($y$ axis) attained by \emph{any} learning rate setting for each method, as a function of both iteration count and wall-clock time ($x$ axis). The displayed error bars correspond to 2 standard errors, computed for the optimal setting at the corresponding point in time.

\begin{figure}[htbp]
    \centering
    \begin{minipage}[t]{0.49\textwidth}
        \begin{subfigure}[t]{.49\linewidth}
            \centering
            \includegraphics[height=.95\linewidth]{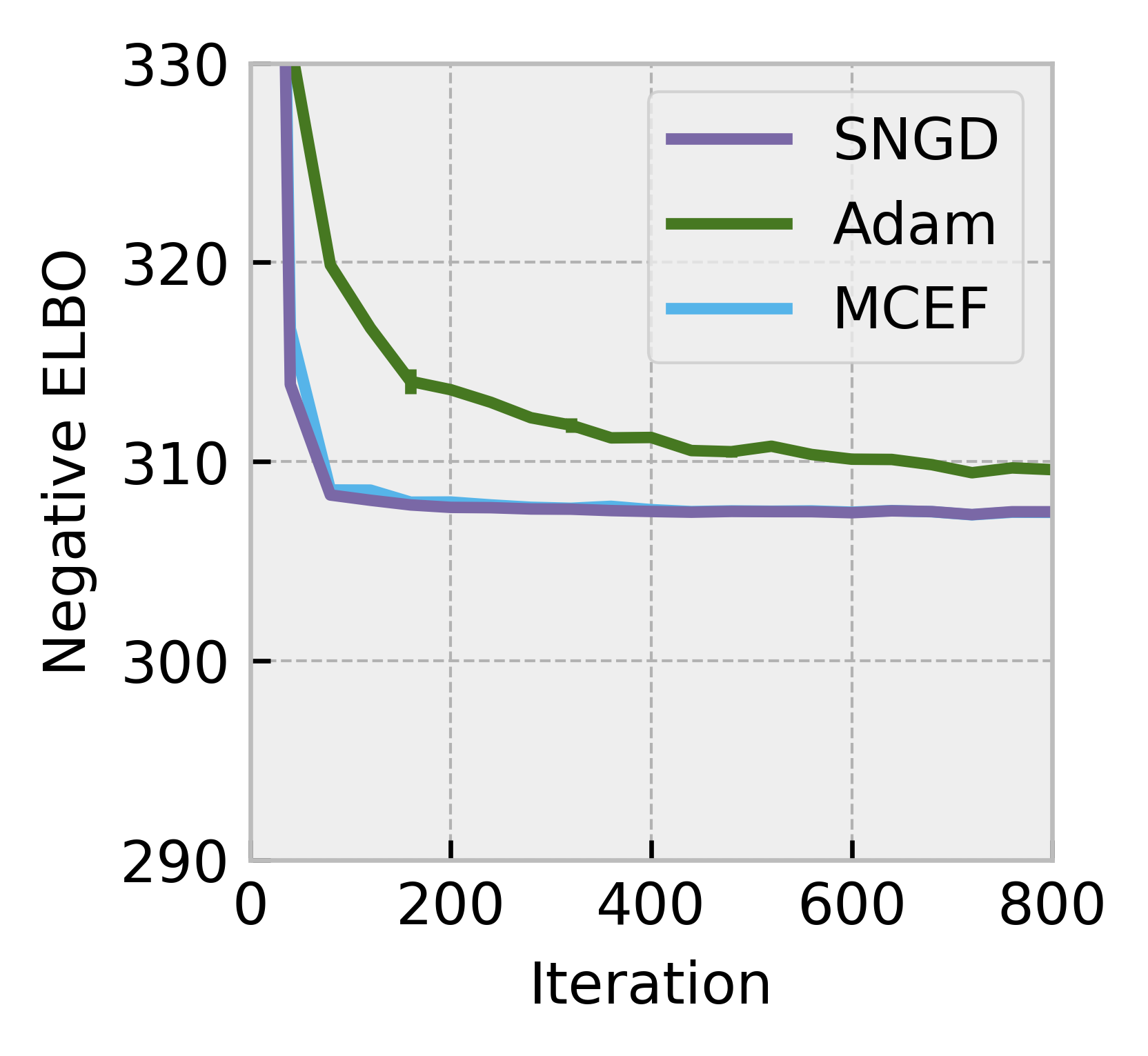}
        \end{subfigure}
        \hfill
        \begin{subfigure}[t]{.49\linewidth}
            \centering
            \includegraphics[height=.95\linewidth]{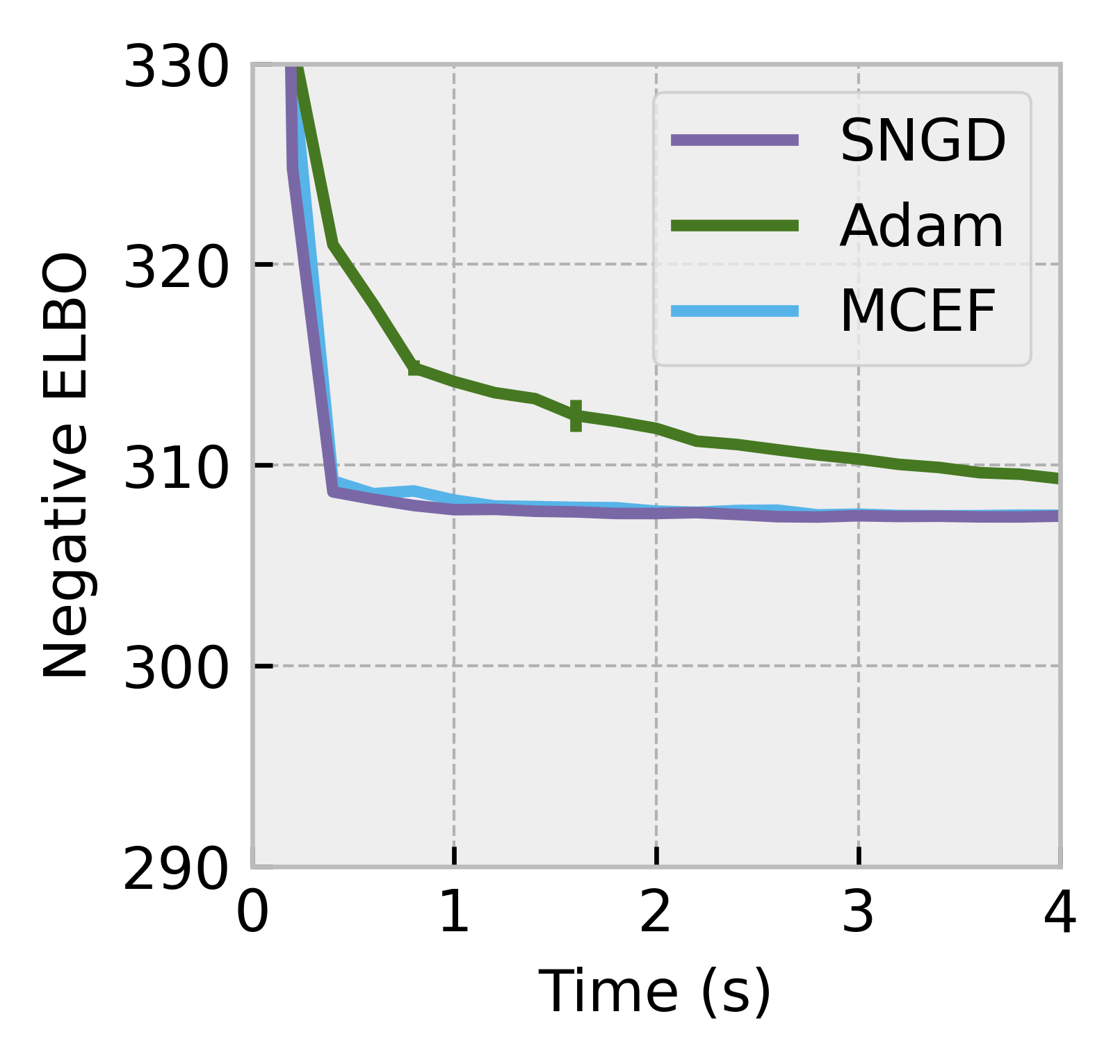}
        \end{subfigure}
        \caption{Pareto frontiers for Bayesian logistic regression VI on UCI covertype ($n$=500, $d$=53), with a skew-normal approximation.}
        \label{fig:paretofrontiers_covtype_mvsn}
    \end{minipage}
    \hfill % to add some horizontal spacing between the figures
    \begin{minipage}[t]{0.49\textwidth}
        \begin{subfigure}[t]{.49\linewidth}
            \centering
            \includegraphics[height=.95\linewidth]{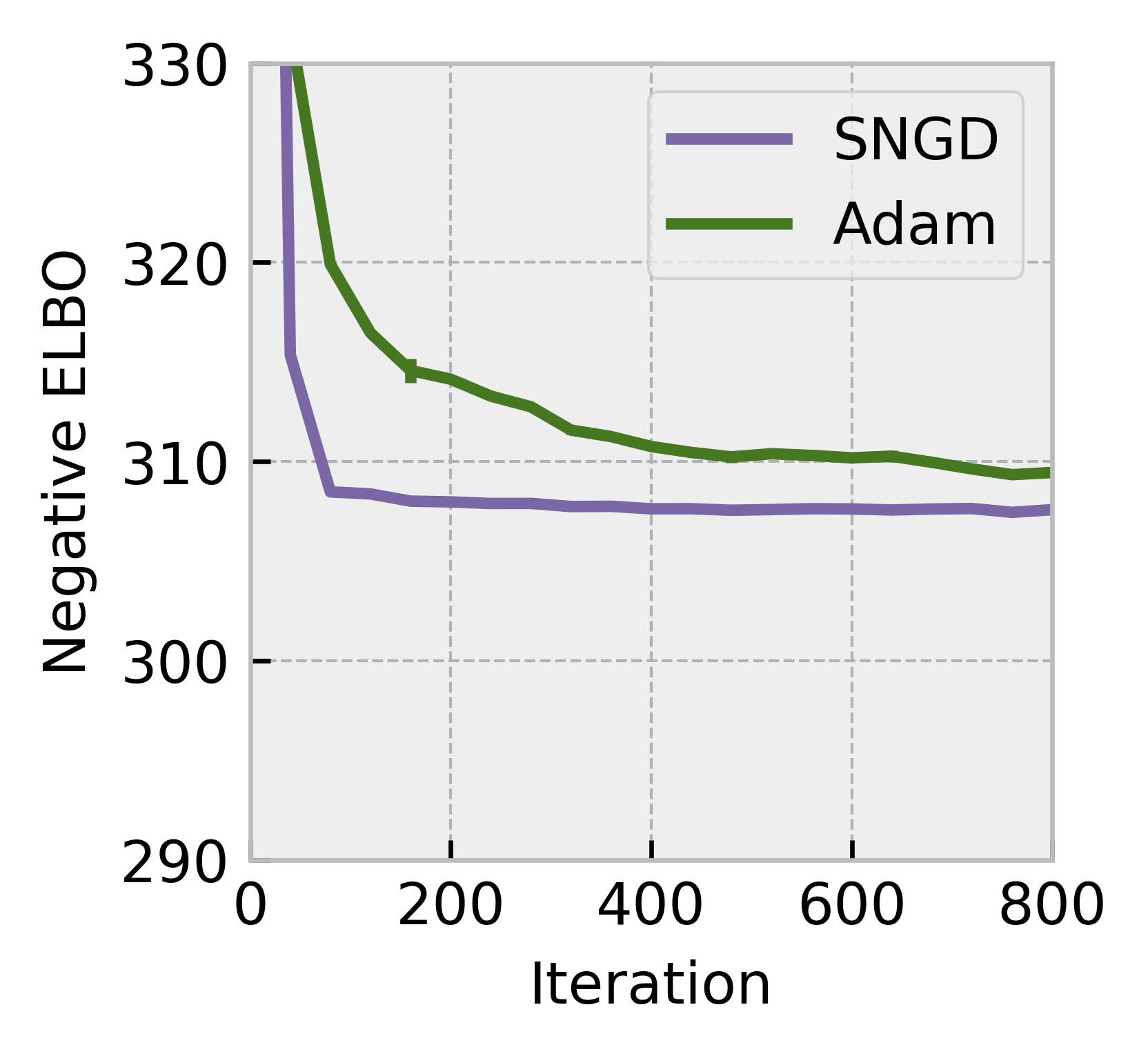}
        \end{subfigure}
        \hfill
        \begin{subfigure}[t]{.49\linewidth}
            \centering
            \includegraphics[height=.95\linewidth]{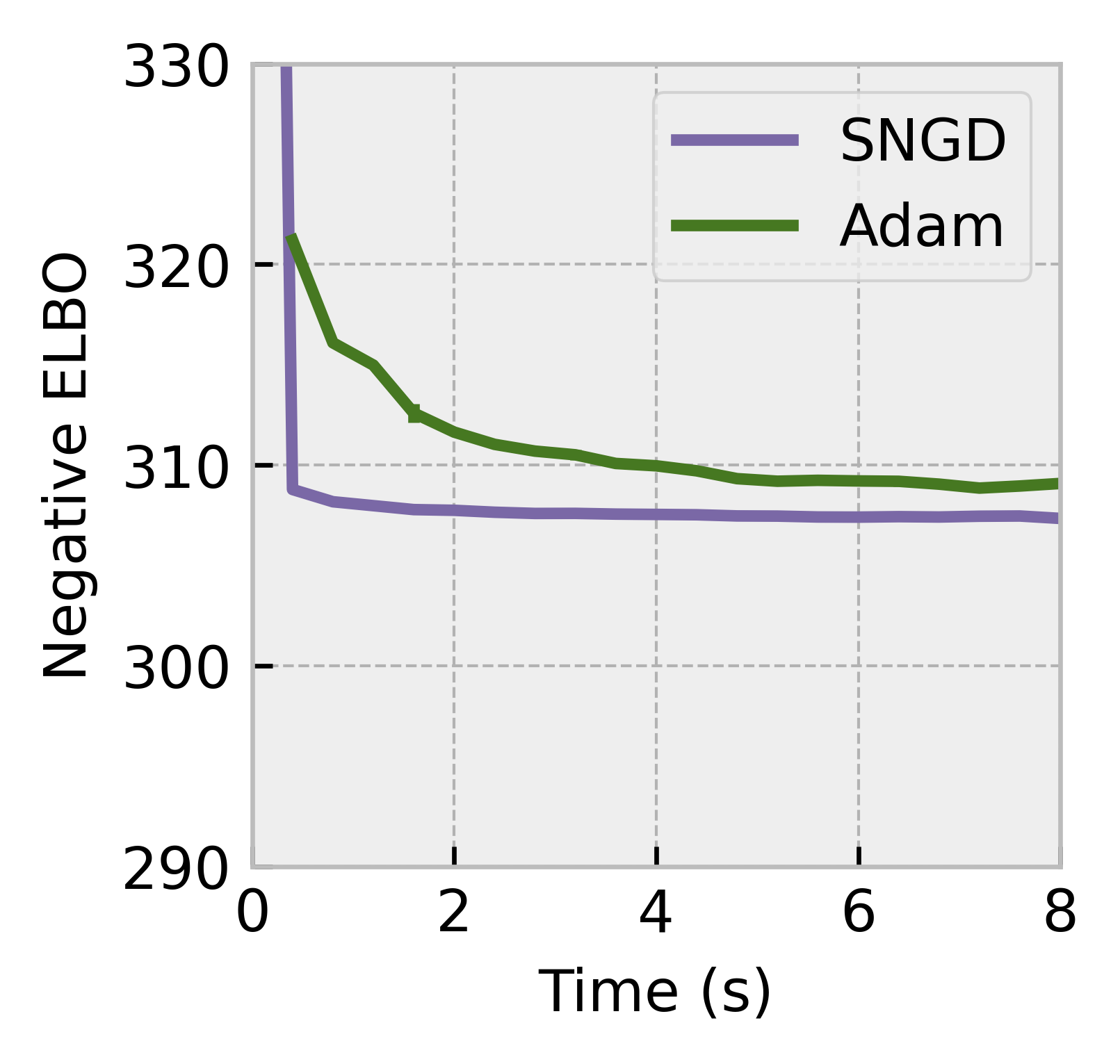}
        \end{subfigure}
        \caption{Pareto frontiers for Bayesian logistic regression VI on UCI covertype ($n$=500, $d$=53), with a skew-$t$ approximation.}
        \label{fig:paretofrontiers_covtype_mvst}
    \end{minipage}
\end{figure}

\begin{figure}[htbp]
    \centering
    \begin{minipage}[t]{0.49\textwidth}
        \begin{subfigure}[t]{.49\linewidth}
            \centering
            \includegraphics[height=.9\linewidth]{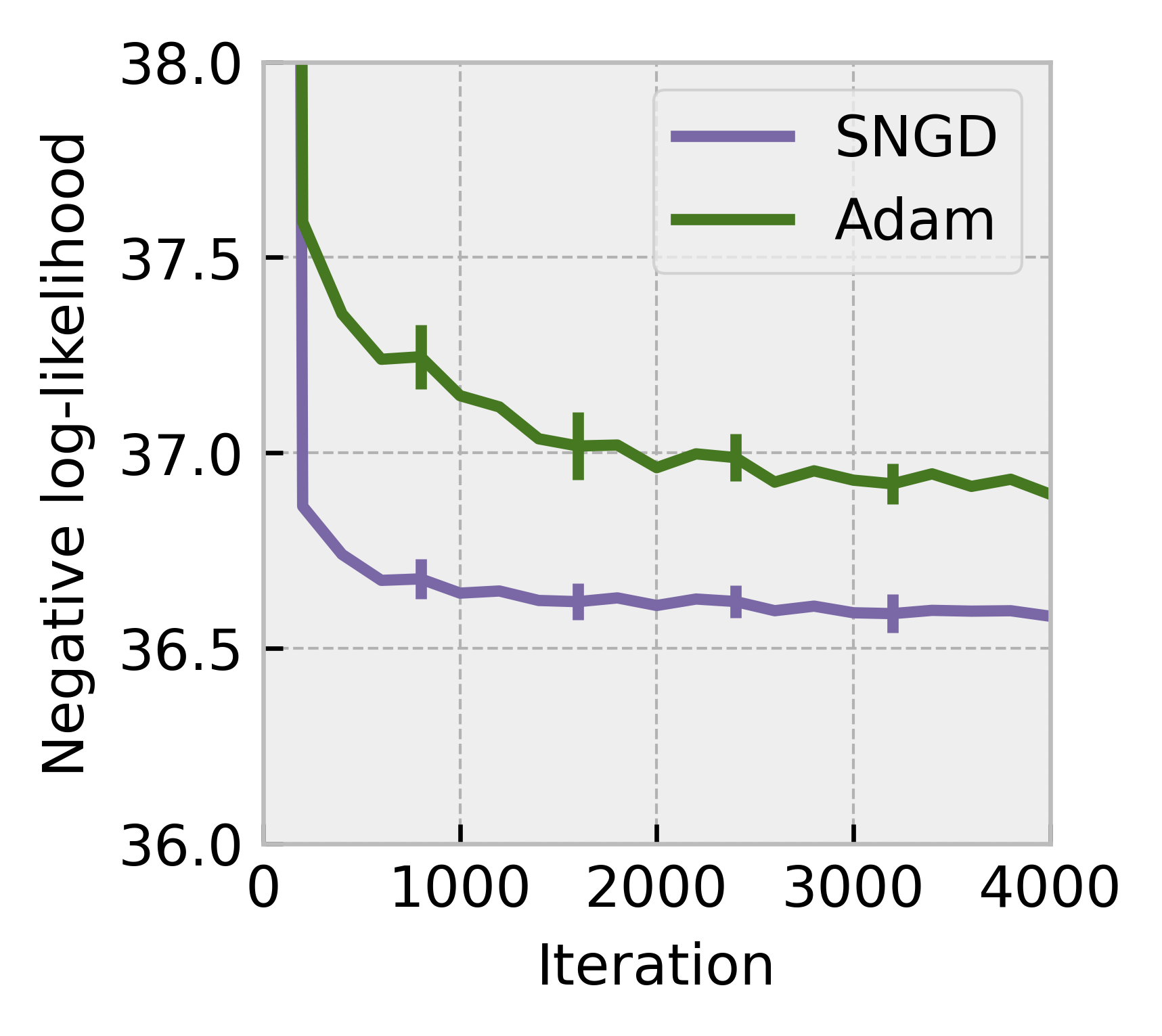}
        \end{subfigure}
        \hfill
        \begin{subfigure}[t]{.49\linewidth}
            \centering
            \includegraphics[height=.9\linewidth]{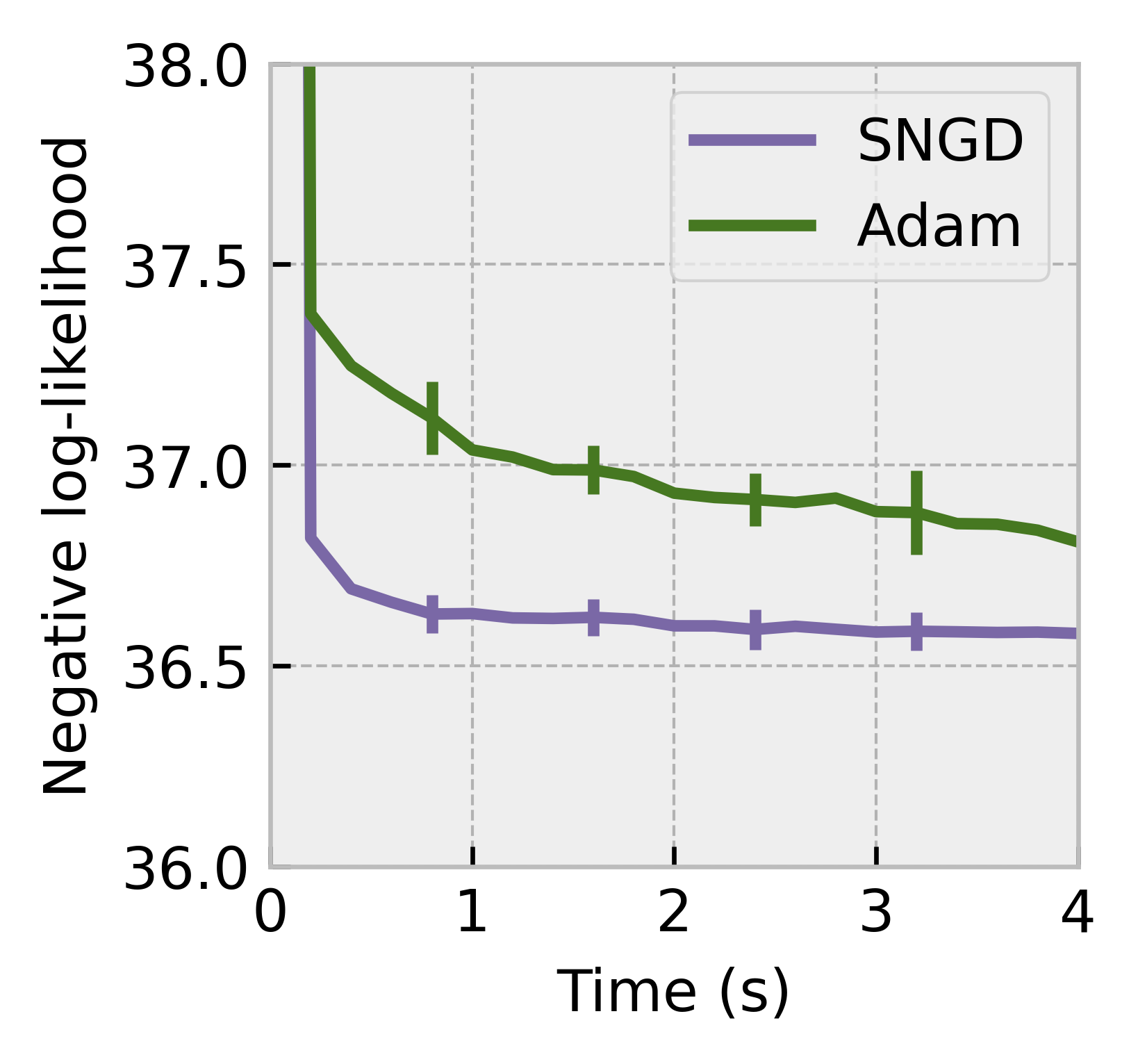}
        \end{subfigure}
        \caption{Pareto frontiers for skew-normal MLE on UCI miniboone ($n$=32,840, $d$=43).}
        \label{fig:paretofrontiers_miniboone_mvsn}
    \end{minipage}
    \hfill % to add some horizontal spacing between the figures
    \begin{minipage}[t]{0.49\textwidth}
        \begin{subfigure}[t]{.49\linewidth}
            \centering
            \includegraphics[height=.9\linewidth]{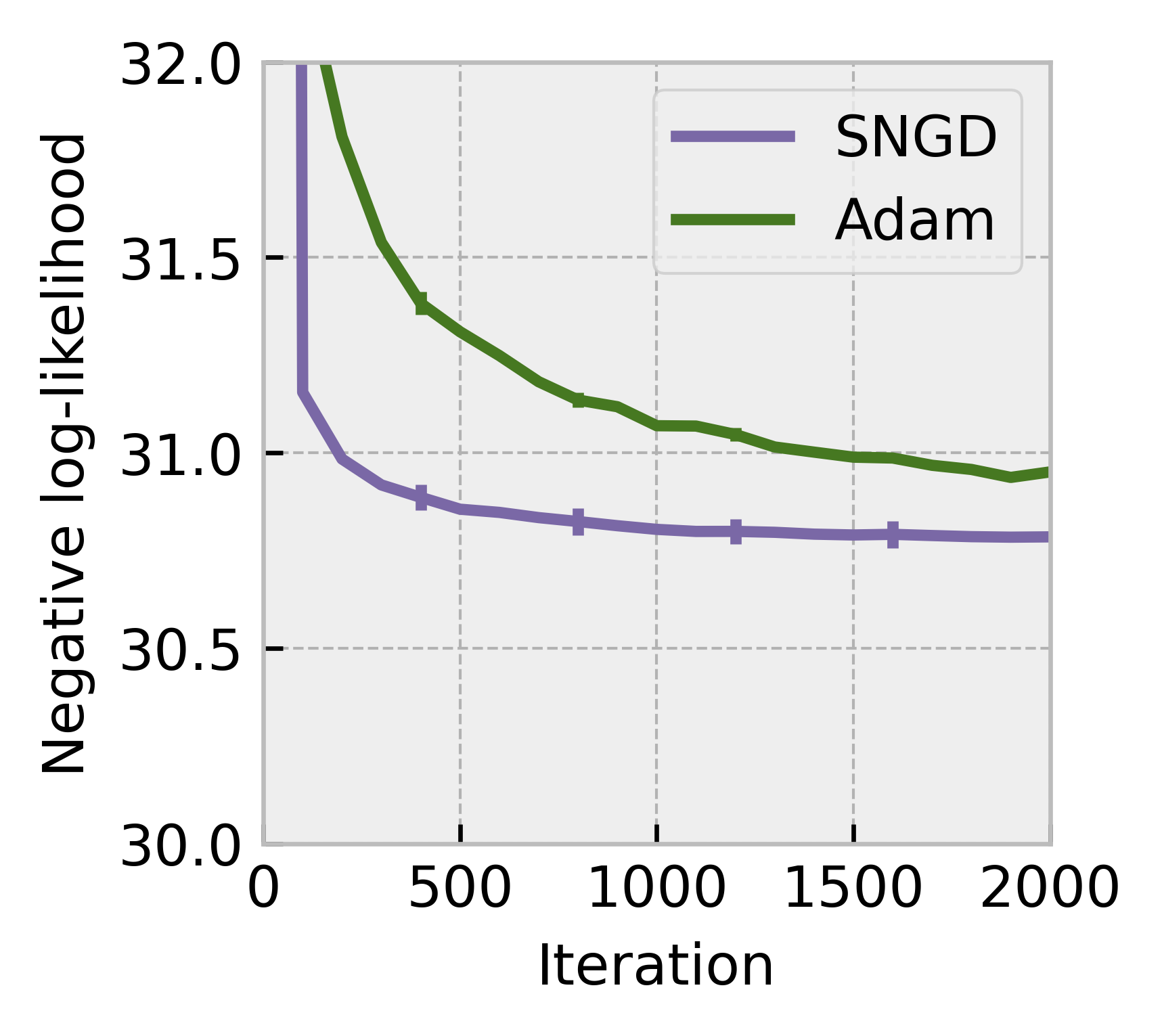}
        \end{subfigure}
        \hfill
        \begin{subfigure}[t]{.49\linewidth}
            \centering
            \includegraphics[height=.9\linewidth]{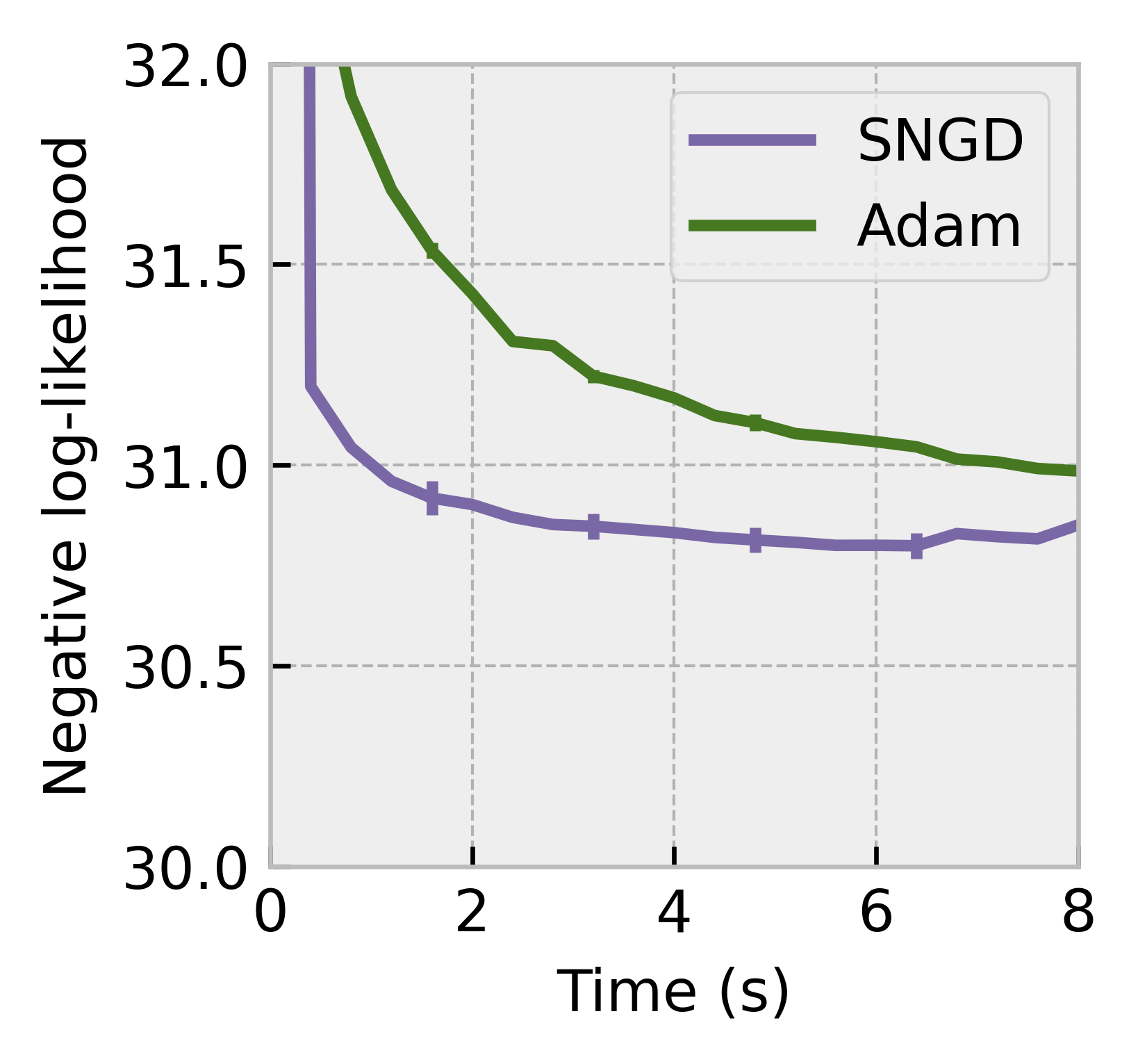}
        \end{subfigure}
        \caption{Pareto frontiers for skew-$t$ MLE on UCI miniboone ($n$=32,840, $d$=43).}
        \label{fig:paretofrontiers_miniboone_mvst}
    \end{minipage}
\end{figure}

\begin{figure}[htbp]
    \centering
    \begin{minipage}[t]{0.49\textwidth}
        \begin{subfigure}[t]{.49\linewidth}
            \centering
            \includegraphics[height=.9\linewidth]{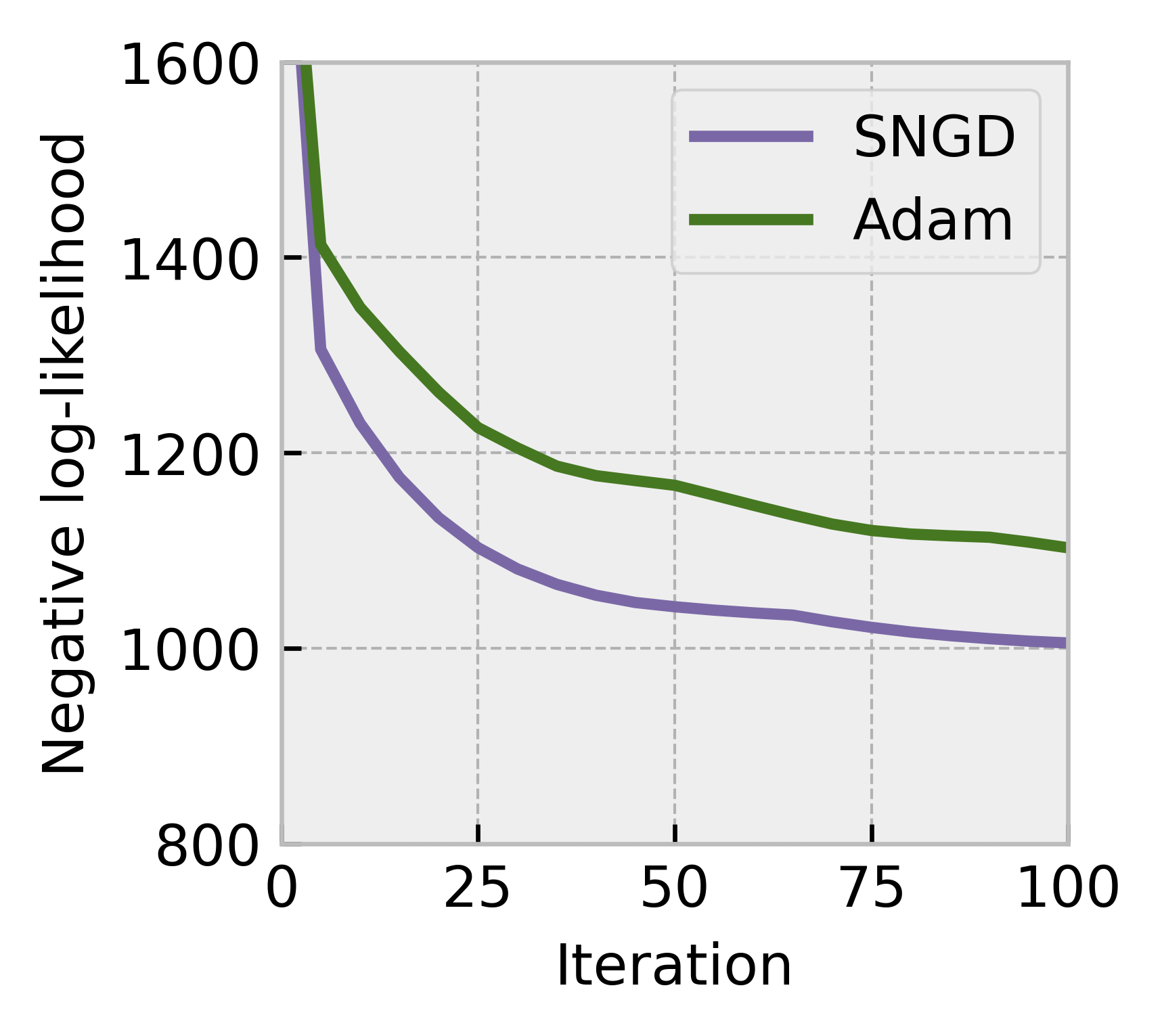}
        \end{subfigure}
        \hfill
        \begin{subfigure}[t]{.49\linewidth}
            \centering
            \includegraphics[height=.9\linewidth]{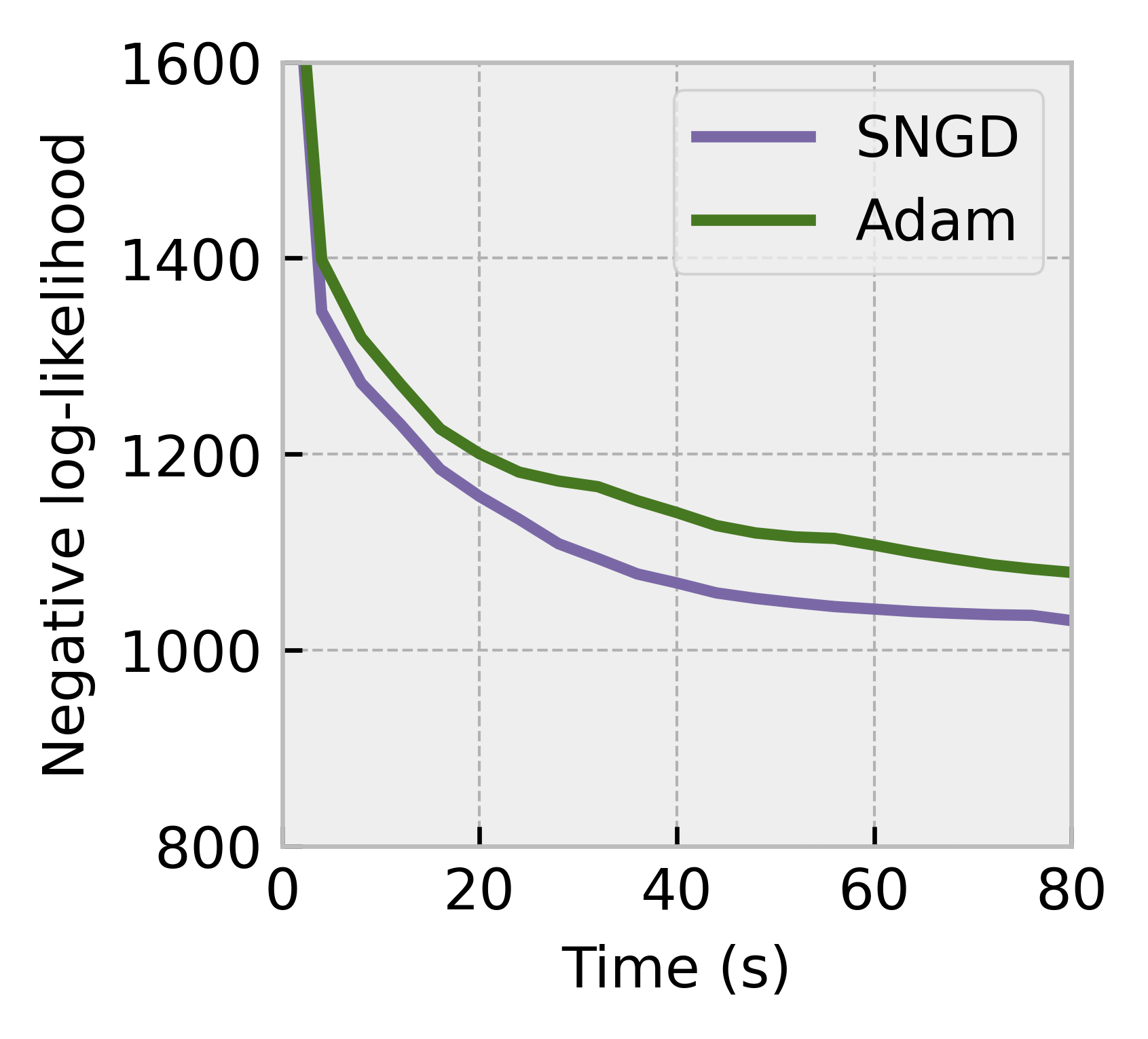}
        \end{subfigure}
        \caption{Pareto frontiers for skew-normal MLE on synthetic dataset ($n$=10,000, $d$=1,000).}
        \label{fig:paretofrontiers_synthetic_mvsn}
    \end{minipage}
    \hfill % to add some horizontal spacing between the figures
    \begin{minipage}[t]{0.49\textwidth}
        \begin{subfigure}[t]{.49\linewidth}
            \centering
            \includegraphics[height=.9\linewidth]{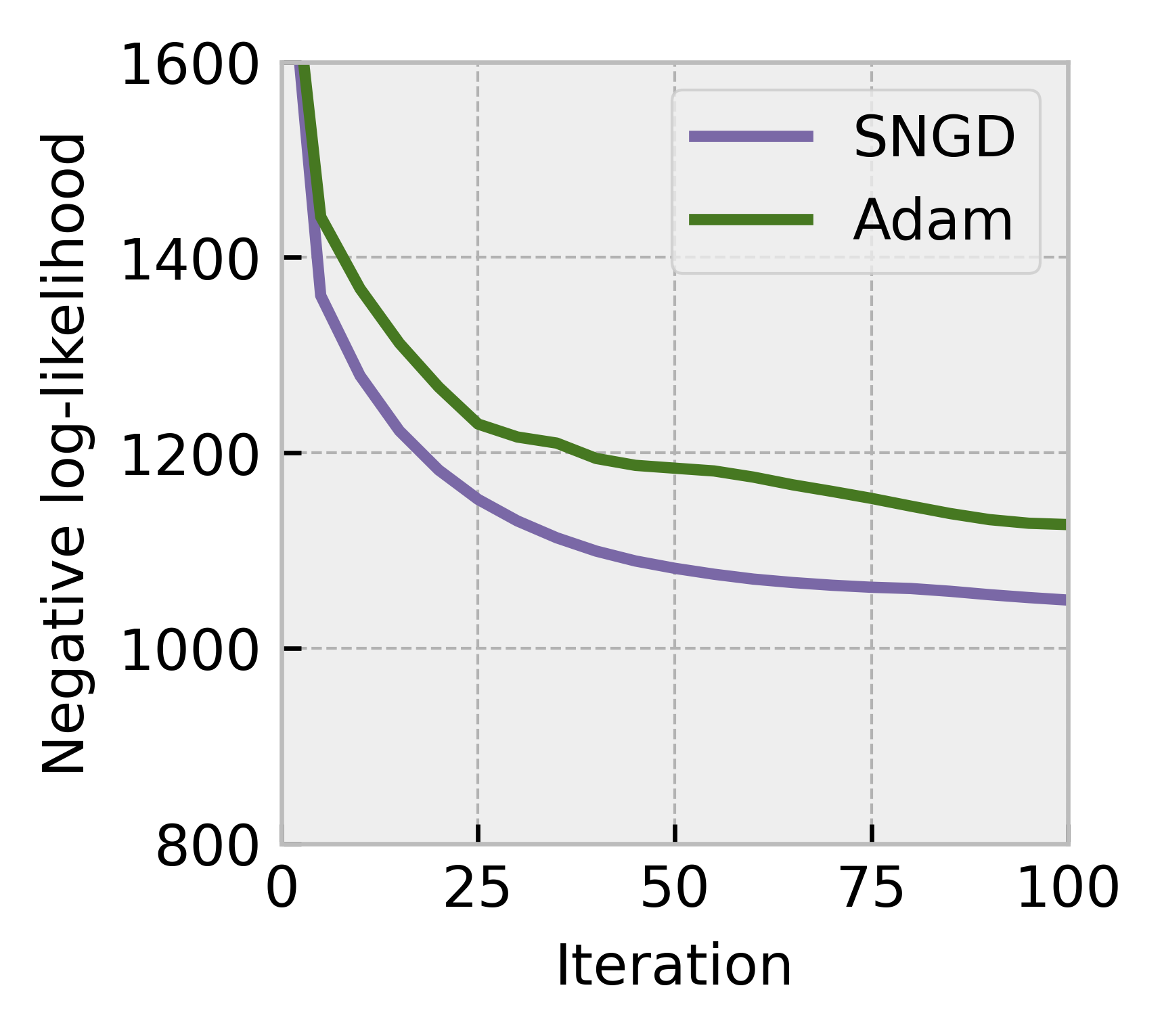}
        \end{subfigure}
        \hfill
        \begin{subfigure}[t]{.49\linewidth}
            \centering
            \includegraphics[height=.9\linewidth]{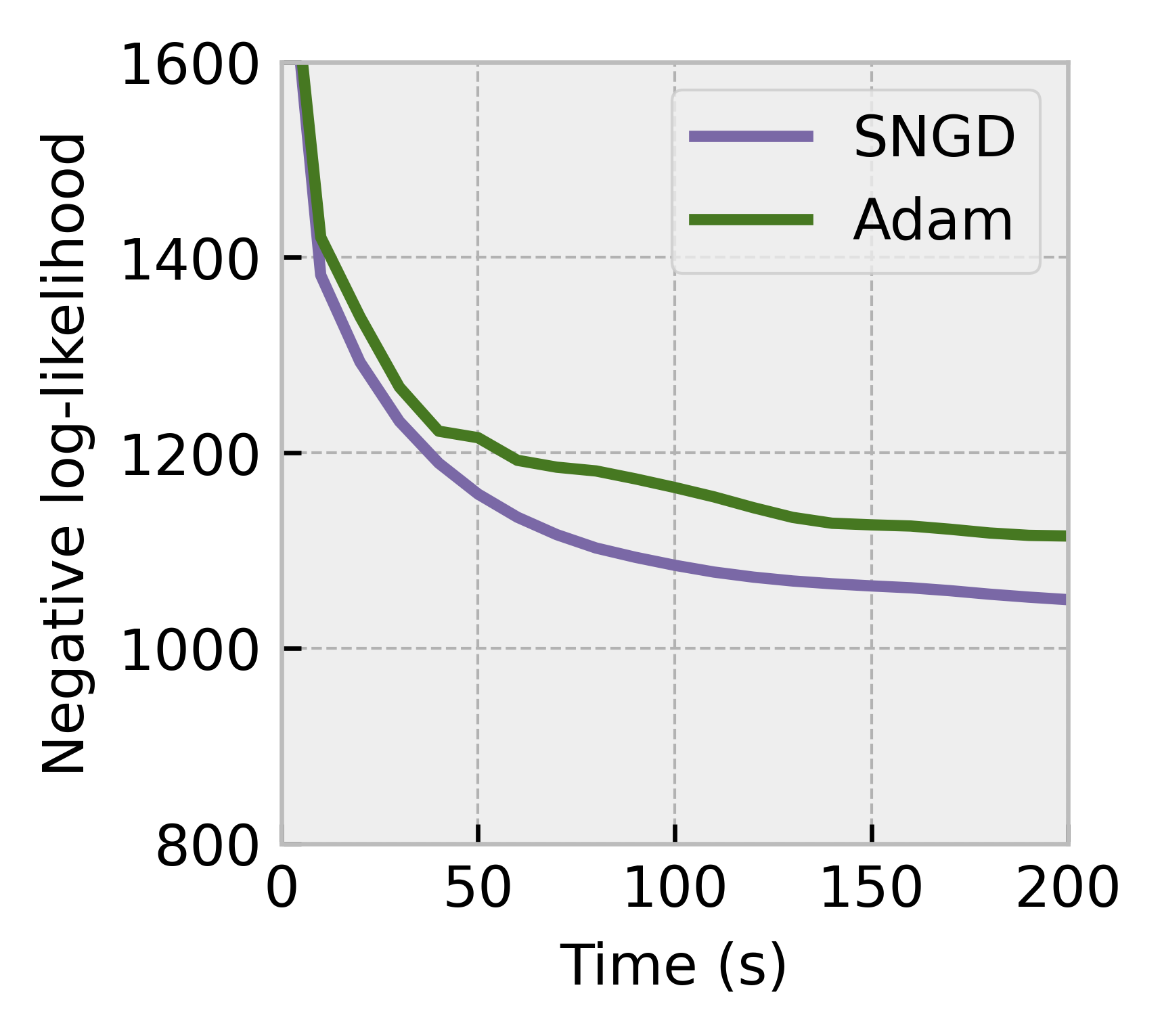}
        \end{subfigure}
        \caption{Pareto frontiers for skew-$t$ MLE on synthetic dataset ($n$=10,000, $d$=1,000).}
        \label{fig:paretofrontiers_synthetic_mvst}
    \end{minipage}
\end{figure}

\begin{figure}[htbp]
    \centering
    \begin{minipage}[t]{0.49\textwidth}
        \begin{subfigure}[t]{.49\linewidth}
            \centering
            \includegraphics[height=.9\linewidth]{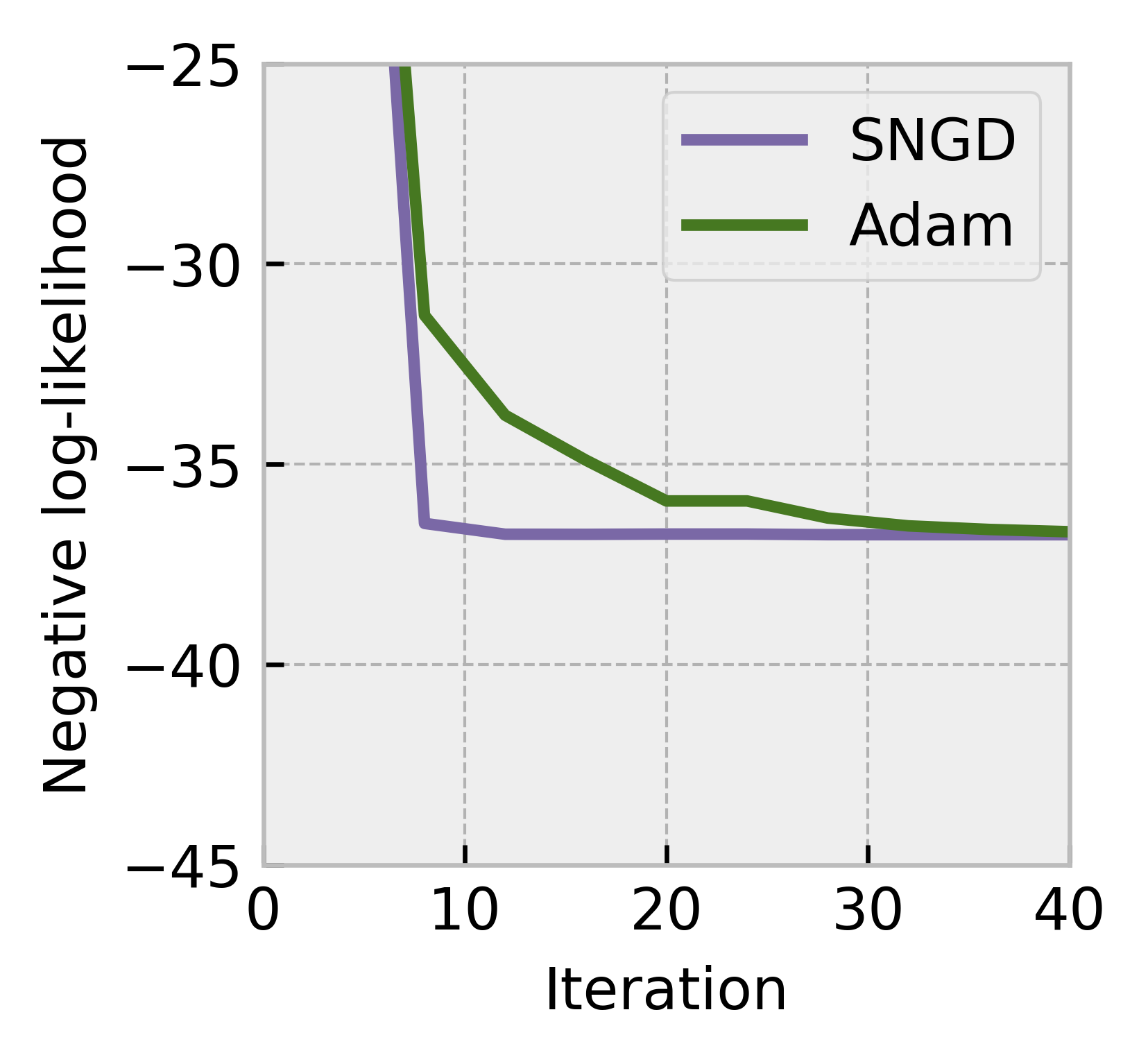}
        \end{subfigure}
        \hfill
        \begin{subfigure}[t]{.49\linewidth}
            \centering
            \includegraphics[height=.9\linewidth]{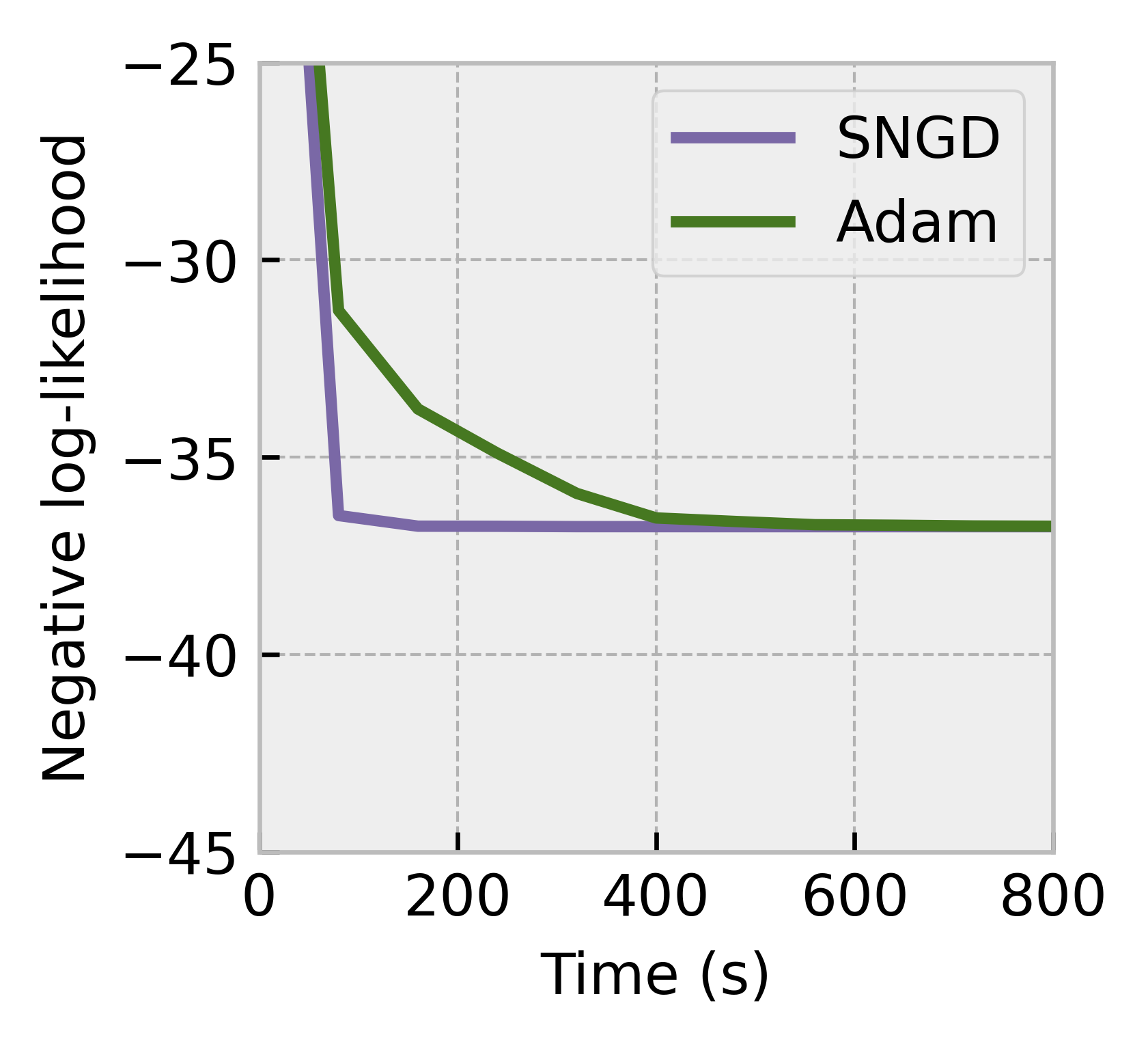}
        \end{subfigure}
        \caption{Pareto frontiers for $t$-copula MLE on 5 years of stock return data ($n$=1,515, $d$=93).}
        \label{fig:paretofrontiers_ftse100_tcopula}
    \end{minipage}
    \hfill % to add some horizontal spacing between the figures
    \begin{minipage}[t]{0.49\textwidth}
        \begin{subfigure}[t]{.49\linewidth}
            \centering
            \includegraphics[height=.9\linewidth]{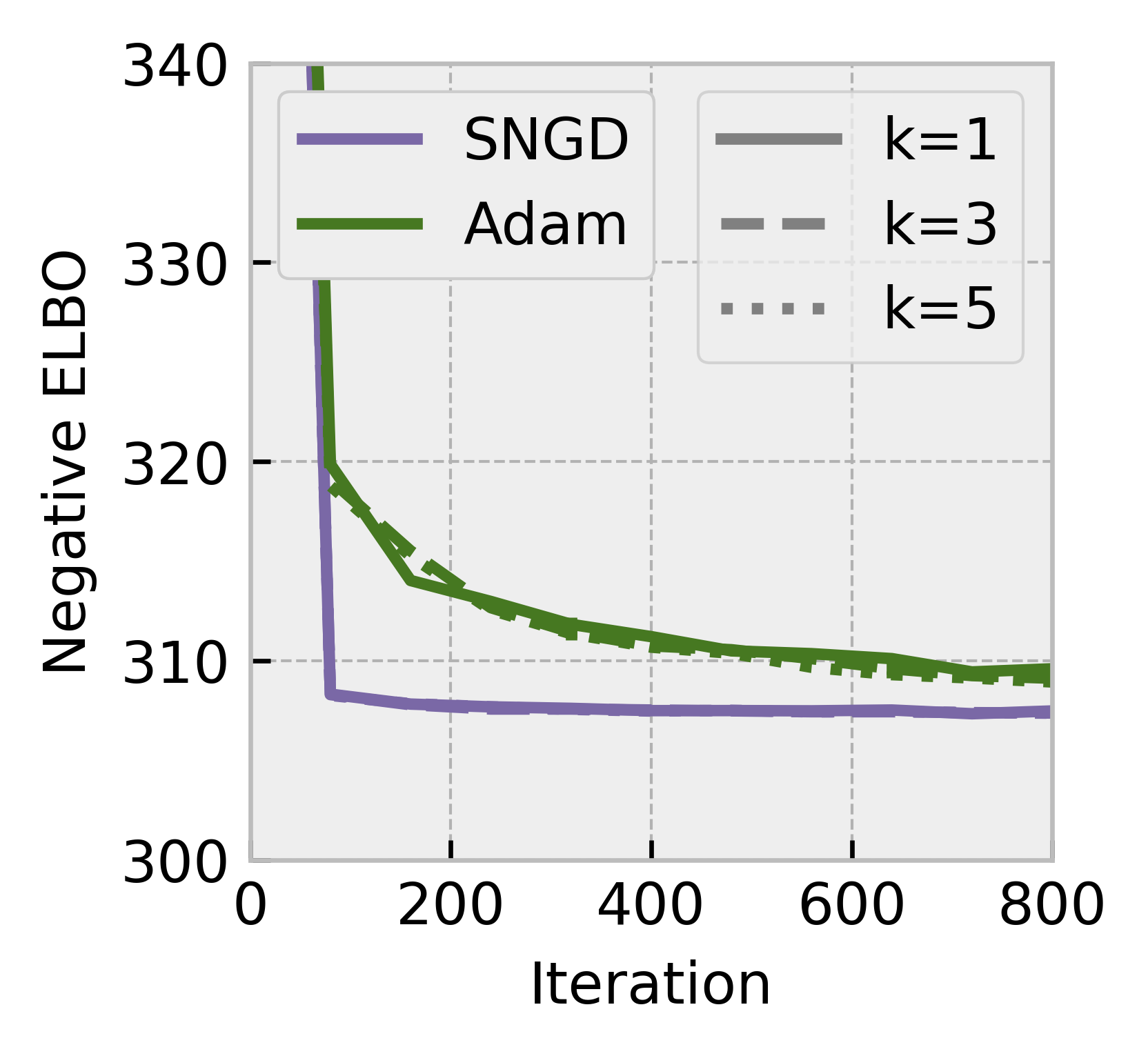}
        \end{subfigure}
        \hfill
        \begin{subfigure}[t]{.49\linewidth}
            \centering
            \includegraphics[height=.9\linewidth]{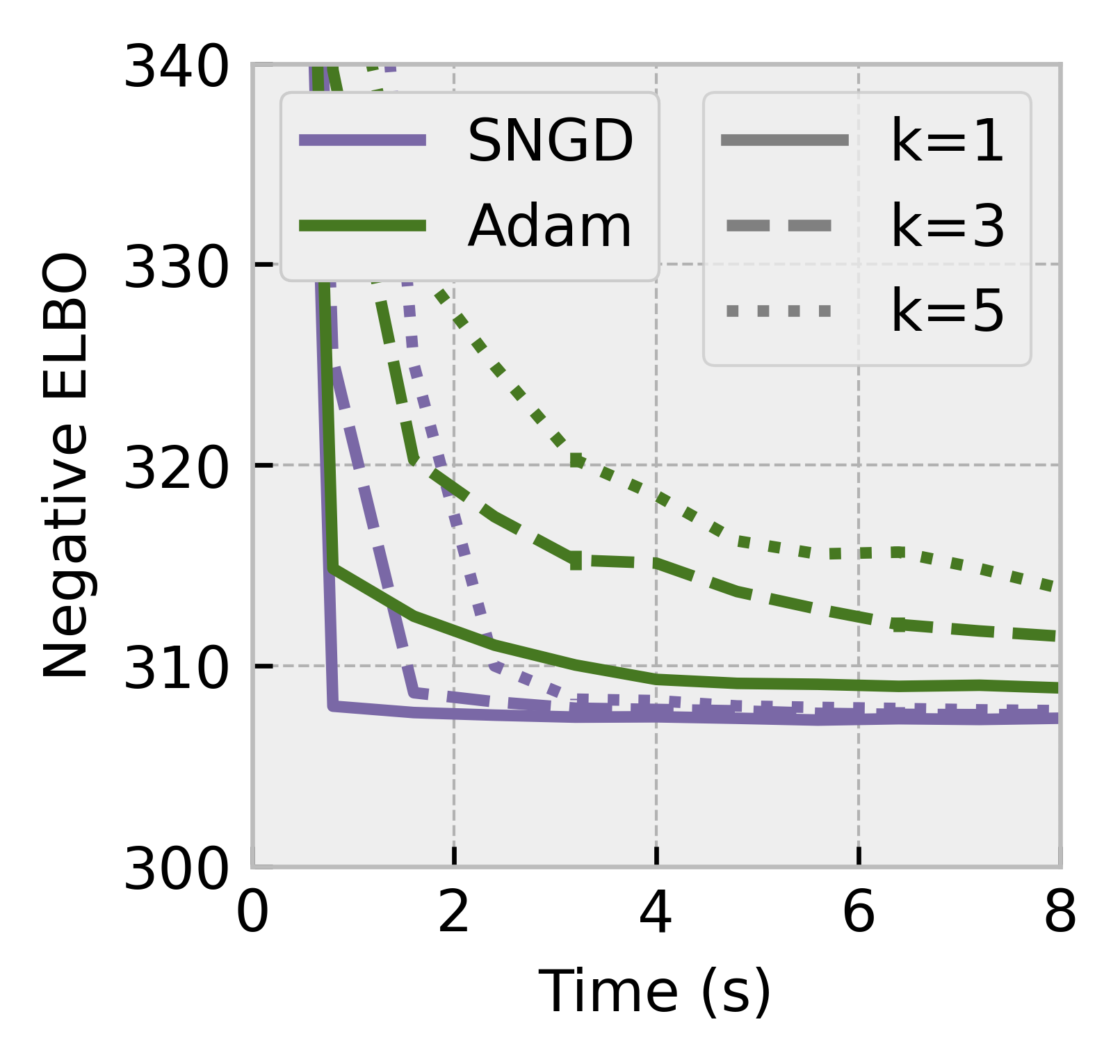}
        \end{subfigure}
        \caption{Pareto frontiers for Bayesian logistic regression VI on UCI covertype ($n$=500, $d$=53), with a skew-normal mixture approximation.}
        \label{fig:paretofrontiers_covtype_mvsnmix}
    \end{minipage}
\end{figure}

\newpage
\section{CHOOSING EFFECTIVE SURROGATES}
\label{app:choosingsurrogates}

In order to apply SNGD to a given target distribution $q$, we must choose a surrogate $\tilde q$, and reparameterisation $\theta = g(\tilde\theta)$. Appropriate choices here determine the effectiveness of SNGD.

One guiding principle that can be used is based on a view of NGD involving Kullback-Leibler (KL) divergences. The KL divergence from continuous\footnote{The discrete KL divergence is defined similarly, with summation replacing integration.} distribution $q$ to $p$ is defined as
\begin{align}
    \kl{q}{p} &= \int q(x)\log\frac{q(x)}{p(x)}\mathrm{d}x.
\end{align}
The NGD direction can be motivated by a result due to \citet{ollivier2017natgrads}, restated here in our notation.

\begin{proposition} Let $f$ be a smooth function on the parameter space $\Theta$. Let $\theta \in \Theta$ be a point where $\tilde\nabla f(\theta) = [F(\theta)]^{-1}\nabla f(\theta)$ does not vanish. Then, if
\begin{align}
    \delta &= -\frac{\tilde\nabla f(\theta)}{||\tilde\nabla f(\theta)||_F}
\end{align}
is the negative direction of the natural gradient of $f$ (with $||.||_F$ the Fisher norm), we have
\begin{align}
    \delta &= \lim_{\tau\rightarrow 0^+} \frac{1}{\sqrt{2\tau}}\argmin_{\delta' \in B(\tau)} f(\theta+\delta')
\end{align}
where $B(\tau) = \{\delta' : \kl{q_{\theta+\delta'}}{q_\theta} < \tau\}$.
\end{proposition}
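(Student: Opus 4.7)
The plan is to reduce the problem to a small-step Taylor expansion in which the objective becomes affine and the KL constraint becomes quadratic, and then invoke a Cauchy-Schwarz argument in the inner product induced by $F(\theta)$. First I would establish the classical second-order expansion
$$\kl{q_{\theta+\delta'}}{q_\theta} = \tfrac{1}{2}(\delta')^\top F(\theta)\,\delta' + o(\|\delta'\|^2) \quad \text{as } \delta' \to 0,$$
which follows from differentiating under the integral sign twice and using $\mbb{E}_{q_\theta}[\nabla \log q_\theta(x)] = 0$ together with the identity $F(\theta) = -\mbb{E}_{q_\theta}[\nabla^2 \log q_\theta(x)]$. This tells us that for small $\tau$ the ball $B(\tau)$ is close to the Fisher ellipsoid $\{\delta' : (\delta')^\top F(\theta)\,\delta' \le 2\tau\}$, up to a lower-order correction.

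Next I would rescale by setting $u = \delta'/\sqrt{2\tau}$, so that the constraint becomes $u^\top F(\theta)\,u \le 1 + o(1)$, while a first-order Taylor expansion of $f$ gives
$$\frac{1}{\sqrt{2\tau}}\bigl(f(\theta + \sqrt{2\tau}\,u) - f(\theta)\bigr) = \nabla f(\theta)^\top u + O(\sqrt{\tau}).$$
Hence, as $\tau \to 0^+$, the rescaled problem limits to the linear program of minimizing $\nabla f(\theta)^\top u$ over the Fisher unit ball $\{u : u^\top F(\theta)\,u \le 1\}$. Cauchy-Schwarz in the $F(\theta)$-inner product identifies the unique minimizer as $u^* = -F(\theta)^{-1}\nabla f(\theta)/\|F(\theta)^{-1}\nabla f(\theta)\|_F$, which by definition equals $-\tilde\nabla f(\theta)/\|\tilde\nabla f(\theta)\|_F$. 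Uniqueness uses the hypothesis that $\tilde\nabla f(\theta) \neq 0$, and hence $\nabla f(\theta) \neq 0$. Undoing the rescaling produces the claimed limit.

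The main obstacle is to justify rigorously that the argmin of the exact, $\tau$-dependent problem converges to $u^*$, rather than merely being close in objective value. The cleanest route combines three ingredients: (a) coercivity, i.e. uniformly in small $\tau$ the rescaled feasible set is contained in a compact Fisher ball only slightly larger than the unit ball, which follows from positive-definiteness of $F(\theta)$ combined with the quadratic lower bound on KL from step one; (b) uniform convergence on compact sets of the rescaled objective and the rescaled constraint function to their linear and quadratic limits; and (c) uniqueness of $u^*$ as minimizer of the linear functional over the strictly convex limit ball, which upgrades subsequential convergence of minimizers to full convergence. A direct $\epsilon$-$\delta$ argument assembles these three, completing the proof.
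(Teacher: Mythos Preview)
Your approach is sound and is essentially the standard argument for this fact. However, note that the paper does \emph{not} supply its own proof of this proposition: it is quoted as a result due to \citet{ollivier2017natgrads} and simply restated in the paper's notation. So there is nothing in the paper to compare against; your write-up would in fact be filling a gap that the authors deliberately left to the cited reference.

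On the substance of your argument: the three ingredients you isolate---the second-order KL expansion yielding the Fisher quadratic form, the rescaling $u=\delta'/\sqrt{2\tau}$ reducing to a linear objective over a strictly convex ball, and the compactness/uniform-convergence/uniqueness triad to pass the $\argmin$ to the limit---are exactly right and together constitute a complete proof. One small technical point worth tightening: as stated, $B(\tau)$ is an \emph{open} KL ball (strict inequality), and in the limiting linear problem the infimum over an open ball is not attained. You should either observe that for each fixed finite $\tau$ the genuine (nonlinear, smooth) objective $f(\theta+\delta')$ does attain its minimum on the open set because the infimum over the closure lies on the boundary and can be approached from the interior with the same limiting direction, or simply note that replacing $<\tau$ by $\le\tau$ changes nothing in the limit and makes the $\argmin$ unambiguous at every stage. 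Either fix is routine, but without it the symbol $\argmin$ is formally undefined for small $\tau$.
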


In words, this says that NGD moves in the direction in parameter space that gives the greatest decrease in the objective within an infinitesimally small KL-ball around the current point.

\begin{figure}[t]
    \centering
    \includegraphics[width=.45\linewidth]{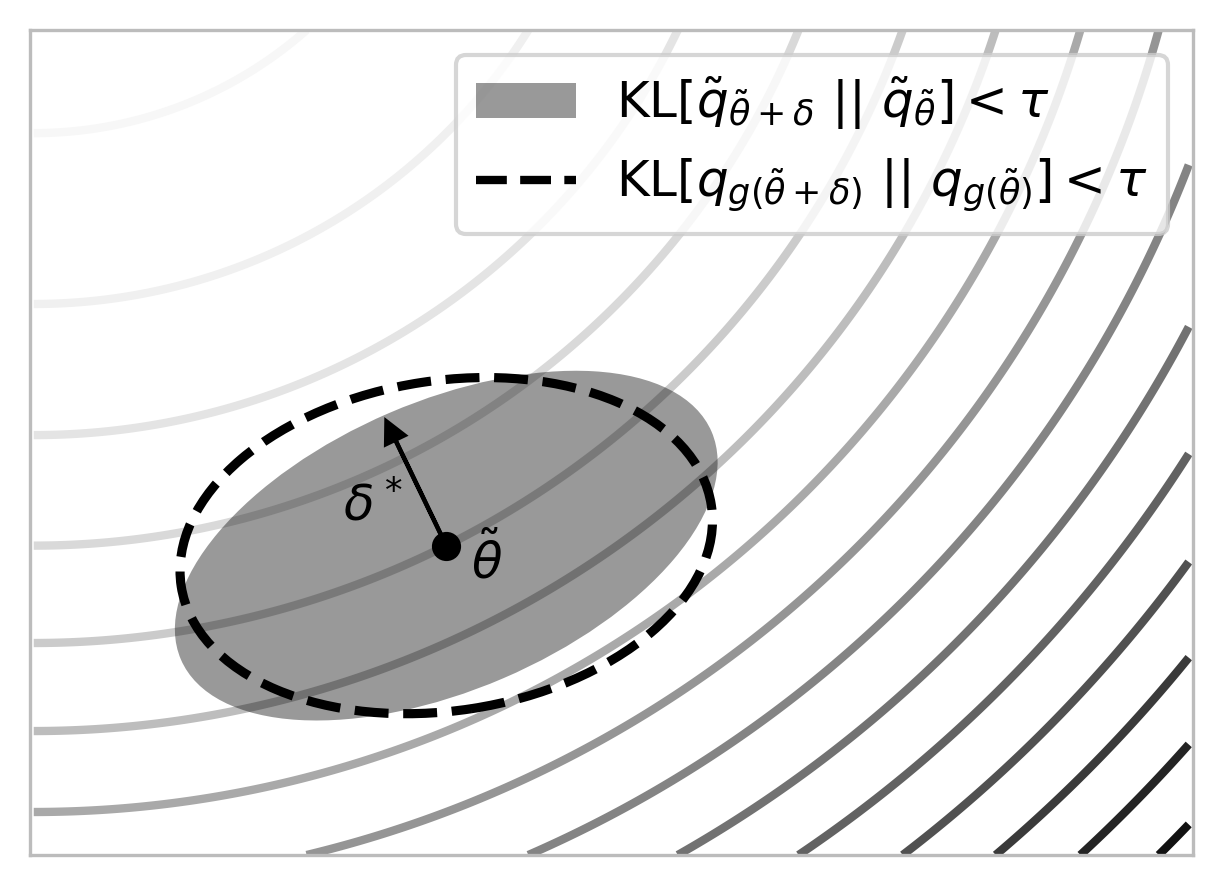}
    \caption{Illustration of SNGD. An infinitesimal KL ball (ellipse, in parameter space) around $\tilde\theta$ with respect to: $\tilde q$ (shaded), and $q$ (dashed). Contours are of $\tilde f(\tilde\theta)$. $\delta^*$ points to the minimum point in the shaded region; this is the SNGD direction. In general this will differ from the direction of NGD in $q$, which would instead point in the direction of the minimum point in the dashed region (not marked).}
    \label{fig:klball}
\end{figure}

This suggests that we would ideally like to find $\tilde q$ and $g$ such that the KL divergence between the surrogate $\tilde q_{\tilde\theta}$ and a perturbed version $\tilde q_{\tilde\theta+\delta}$ is identical to the KL divergence from the target $q_{g(\tilde\theta)}$ and its corresponding pertubation $q_{g(\tilde\theta+\delta)}$, for any small pertubation $\delta$. However, because this is only required to hold in some infinitesimally small neighbourhood of $\tilde\theta$, it is sufficient to satisfy
\begin{align}
    \nabla_\delta\big(\kl[\big]{q_{\tilde\theta+\delta}}{\tilde q_{\tilde\theta}}\big)\big\rvert_{\mathbf{0}} &= \nabla_\delta\big( \kl[\big]{q_{g(\tilde\theta+\delta)}}{q_{g(\tilde\theta)}}\big)\big\rvert_{\mathbf{0}}.\label{eqn:klequality}
\end{align}
If equality \eqref{eqn:klequality} holds then SNGD will move in the same direction in $\tilde\theta$ as `exact' NGD under (reparameterised) $q$. While it will not typically be possible to find tractable surrogates for which equality holds exactly, this observation motivates choosing $\tilde q$, $g$ for which it is approximately true. We illustrate this in Figure \ref{fig:klball}. This perspective also highlights the reason that $q$ and $\tilde q$ do not need to be distributions over the same space; all that matters is that the effect of (small) changes in $\tilde\theta$ on $\tilde q$ and $q$ is similar in a KL-divergence sense. If approximating NGD under $q$ is our goal, then this perspective can help guide us toward a choice of $\tilde q$ and $g$.\footnote{Approximating NGD under $q$ should not \emph{necessarily} be our ultimate goal. We elaborate on this in Appendix \ref{app:ngdcomparison}.}

As a concrete example, let $q$ be a mixture distribution, with PDF given by
\begin{align}
    q_\theta(x) &= \textstyle\sum_{i=1}^k \pi_i q_{\theta_i}(x),\label{eqn:mixturedistribution_repeated}
\end{align}
where $q_{\theta_i}$ are the component distributions, $\theta = (\pi_i, \theta_i)_{i=1}^k$, and $\pi\in\Delta^{k-1}$. Let us consider as a surrogate for $q_\theta$ the corresponding mixture \emph{model} (joint distribution) with PDF
\begin{align}
    \tilde q_{\tilde\theta}(z, x) &= \pi_z q_{\theta_z}(x),\label{eqn:mixturemodel_repeated}
\end{align}
where $\tilde\theta = (\pi_i, \theta_i)_{i=1}^k$, so that $g$ is simply the identity map, and $\sum_{i=1}^k \tilde q_{\tilde\theta}(z=i, x) = q_{\tilde\theta}(x) = q_{g(\tilde\theta)}(x)$. Using standard identities, we have
\begin{align}
    \begin{split}
    \nabla_\delta\big(\kl[\big]{\tilde q_{\tilde\theta+\delta}(z, x)}{\tilde q_{\tilde\theta}(z, x)}\big) &= \nabla_\delta\left(\kl[\big]{\tilde q_{\tilde\theta+\delta}(x)}{\tilde q_{\tilde\theta}(x)}
 + \mathbb{E}_{\tilde q_{\tilde\theta+\delta}(x)}\pmb{\big[}\kl[\big]{\tilde q_{\tilde\theta+\delta}(z|x)}{\tilde q_{\tilde\theta}(z|x)}\pmb{\big]}\right) \\
&= \nabla_\delta\left(\kl[\big]{q_{g(\tilde\theta+\delta)}(x)}{q_{g(\tilde\theta)}(x)} 
 + \mathbb{E}_{q_{\tilde\theta+\delta}(x)}\pmb{\big[}\kl[\big]{\tilde q_{\tilde\theta+\delta}(z|x)}{\tilde q_{\tilde\theta}(z|x)}\pmb{\big]}\right)
    \end{split} \\
        &\approx \nabla_\delta\left(\kl[\big]{ q_{g(\tilde\theta+\delta)}(x)}{q_{g(\tilde\theta)}(x)}\right)
\end{align}

It follows that using a mixture model joint, as a surrogate for its marginal, can be viewed as using an approximate metric which includes an additional term. This term imposes an additional penalty on directions in parameter space that affect the expected \textit{responsibility} distributions.\footnote{The responsibility of component $i$ for $x$ is the probability that $x$ was generated by component $i$ having observed $x$.}
In the experiments of Section \ref{subsec:experiments_mixture} we take this approximation one step further, and use EF mixture models as surrogates for mixtures with components that are \emph{not} EF distributions.

As an aside, we note that \citet{lin2020mcef} performed NGD with respect to a joint distribution in a VI objective for which the optimisation target was a \emph{marginal} of the NGD distribution. This mismatch was not explicitly discussed by \citet{lin2020mcef}. However, it is clear that this aspect of their method can be interpreted as an application of SNGD, and similar reasoning to that above shows that it implies making a specific approximation to the Fisher metric.

\section{COMPARISON WITH NGD UNDER $q$}
\label{app:ngdcomparison}

\begin{figure}[ht]
    \centering
    \includegraphics[width=.45\linewidth]{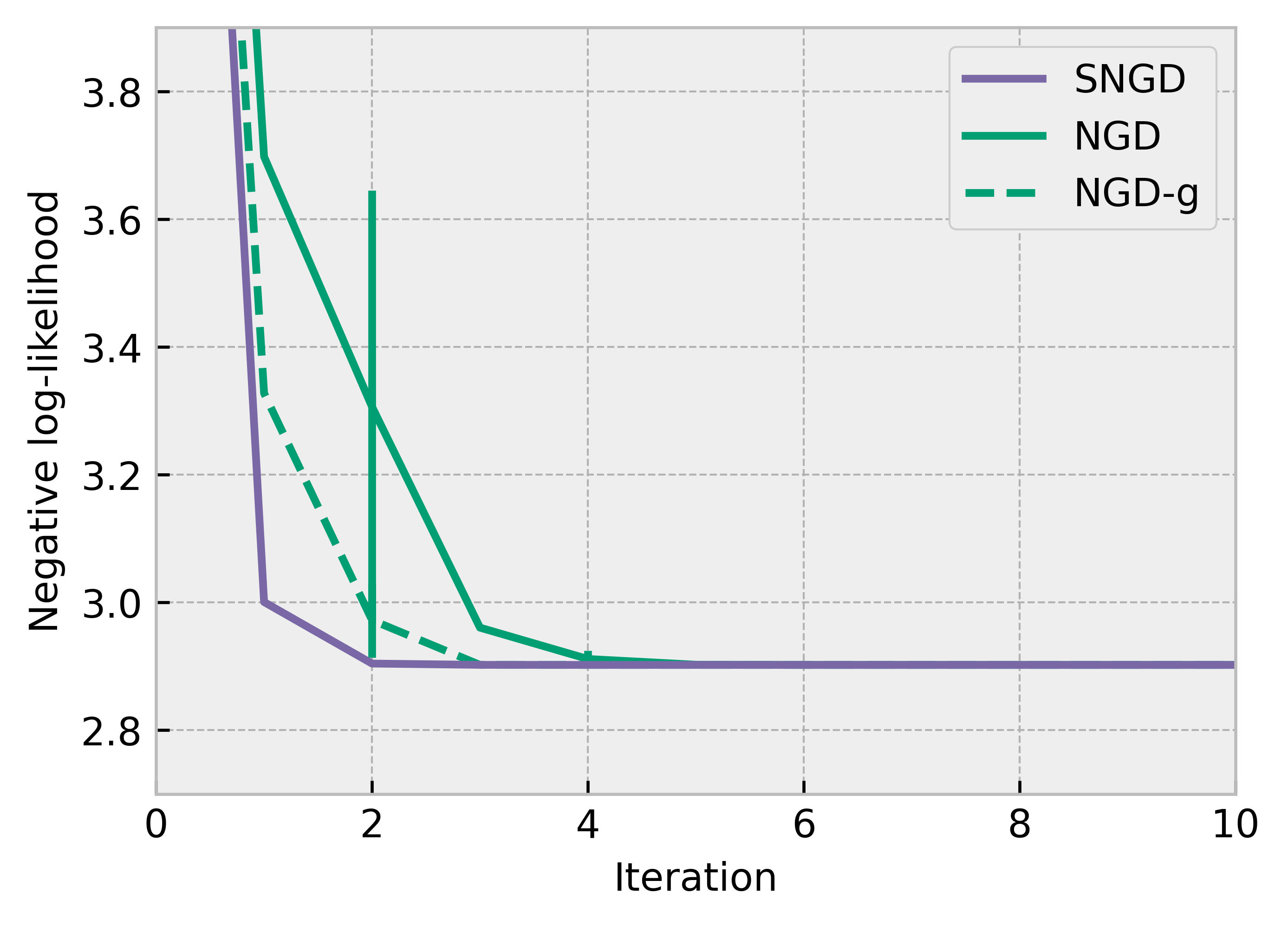}
    \caption{Training curves for negative binomial MLE on the sheep dataset ($n$=82, $d$=1), using surrogate natural gradient descent (SNGD) and natural gradient descent under $q$ with respect to both a standard parameterisation (NGD), and the reparameterisation defined by $g$ (NGD-g).}
    \label{fig:negbin_trainingcurves_ngd}
\end{figure}

In Appendix \ref{app:choosingsurrogates} we provided a motivation for SNGD that viewed it as approximating NGD under a reparameterisation of $q$. However, for the negative binomial MLE experiment of Section \ref{subsec:experiments_negbin}, we demonstrated that SNGD was actually able to \emph{outperform} NGD under $q$. In Figure \ref{fig:negbin_trainingcurves_ngd}, we show that this was true when NGD was performed with respect to both a standard parameterisation (also shown in Figure \ref{fig:negbin_experiment}), and the reparameterisation defined by $g(\tilde\theta)$. This implies that the outperformance of SNGD is not simply an artefact of the reparameterisation. This surprising result deserves extra scrutiny.

We begin by highlighting that in fact, SNGD \textit{is} NGD, but in the objective $\tilde f(\tilde\theta) = f(g(\tilde\theta))$ and with respect to distribution $\tilde q$. It is helpful therefore to consider the desirable properties of NGD:
\begin{enumerate}
    \item It is locally invariant to parameterisation of the distribution being optimised.
    \item It follows the direction of steepest descent in the objective on the statistical manifold of the distribution being optimised.\footnote{Where steepness is defined with respect to a KL divergence.}
    \item For MLE objectives, it asymptotically approaches Newton's method near the optimum.
\end{enumerate}
Properties 1 and 2 apply always, and so they are necessarily inherited by SNGD. However, when $f$ is a MLE objective for $q$ (as in the experiment of Section \ref{subsec:experiments_negbin}), $\tilde f$ is \emph{not}, in general, an MLE objective for $\tilde q$, and so SNGD loses property 3. Contrast this with NGD in $\tilde f$ with respect to $q$: although the objective function is the same as that of SNGD, it is simply a reparameterised MLE objective for $q$, and so property 3 is retained. In other words, the asymptotic efficency of NGD is retained under change of parameterisation, but not under change of distribution.

\begin{figure}[h]
    \centering
    \begin{subfigure}[t]{\linewidth}
    \centering
        \includegraphics[height=.05\linewidth]{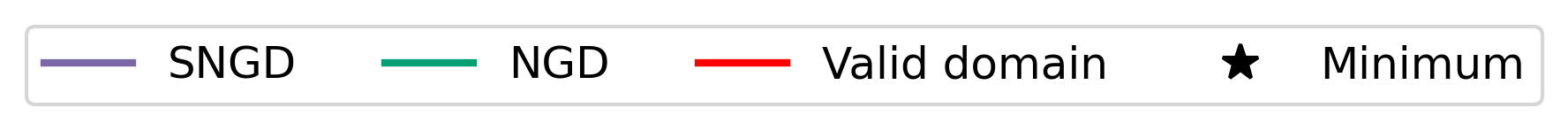}
    \end{subfigure}
    \setlength{\baselineskip}{0.\baselineskip}
    \begin{subfigure}[t]{.45\linewidth}
        \label{fig:negbin_sheeq_gradsbig}
        \centering
        \includegraphics[width=\linewidth]{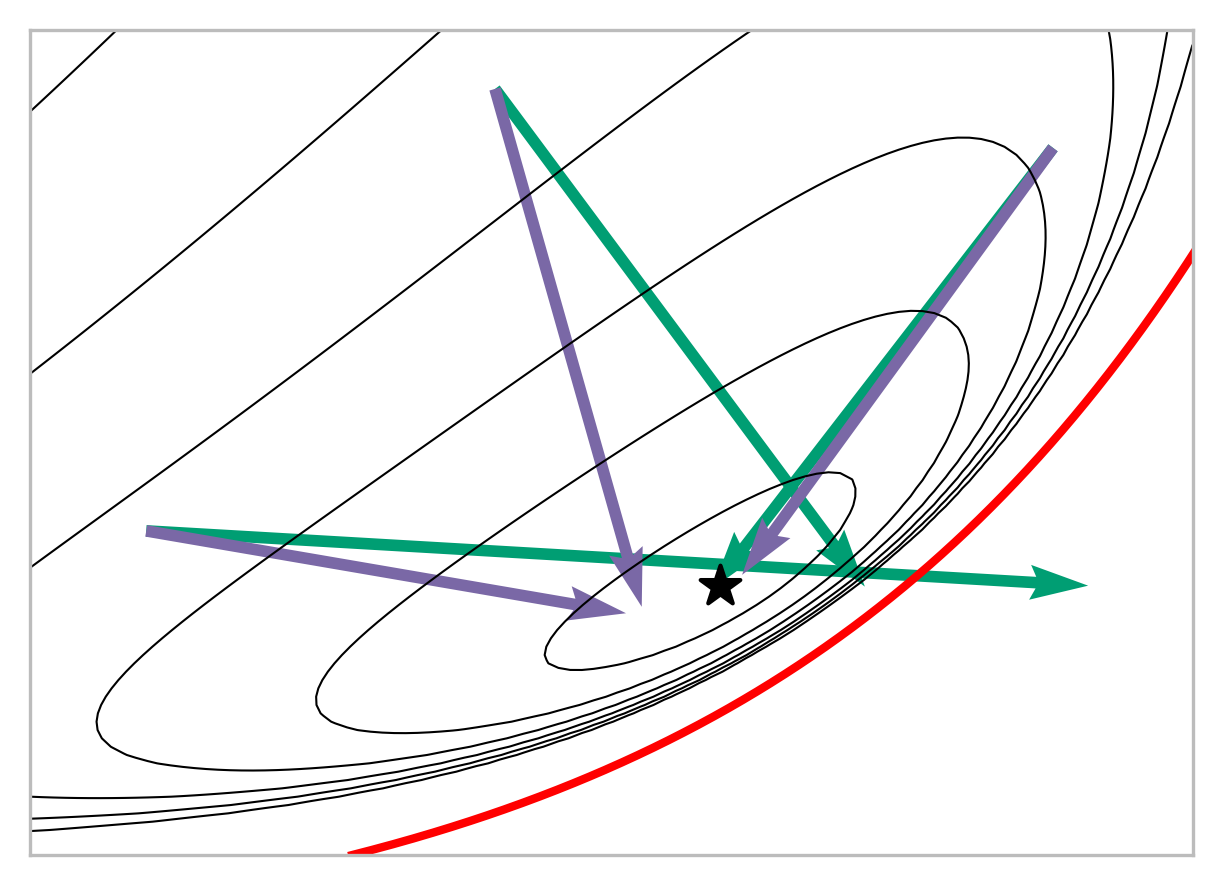}
        \caption{Vectors represent a full (undamped) step under each method. When far from the optimum, a full step of SNGD more consistently moves to a low value of the objective, compared to that of NGD under $q$.}
    \end{subfigure}
    \hfill
    \begin{subfigure}[t]{.45\linewidth}
        \label{fig:negbin_sheeq_gradssmall}
        \centering
        \includegraphics[width=\linewidth]{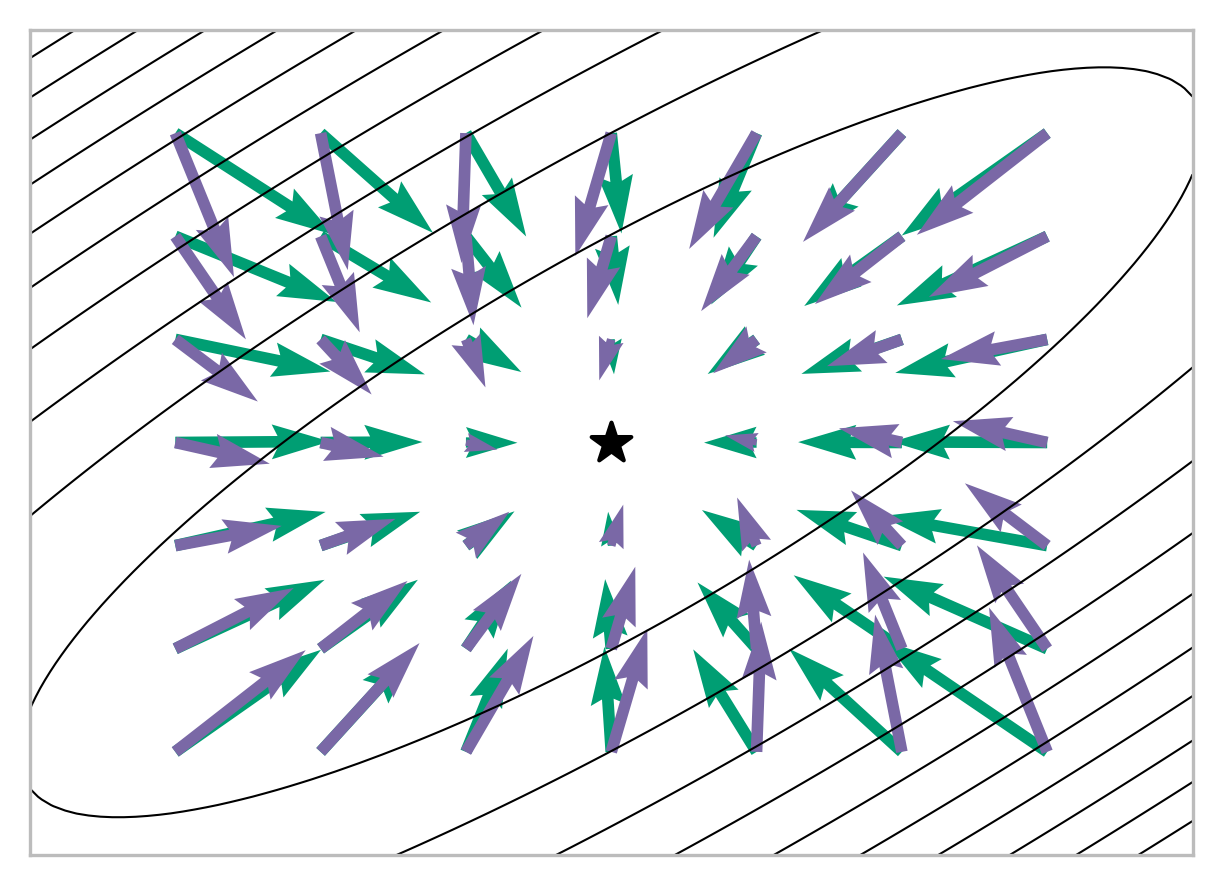}
        \caption{The optimum magnified $\times 1000$. At this scale the loss is well approximated by a quadratic. The Fisher of $q$, $F(\tilde\theta)$, approaches the Hessian. This is not true of $\tilde{F}(\tilde\theta)$, because the loss is not a MLE objective for $\tilde q$. Here the gradient vector lengths have been scaled down by a factor of 3 for clarity.}
    \end{subfigure}
    \caption{Comparison of SNGD and NGD in the sheep experiment of Section \ref{subsec:experiments_negbin}. Contours are of $\tilde f(\tilde\theta)$, an MLE objective for $q$.}
    \label{fig:ngdcomparison}
\end{figure}

On the face of it, then, when $f$ is a MLE objective for $q$, SNGD appears less desirable than NGD under $q$, inheriting only two of the properties listed above. However, none of these properties say anything about performance over large (or indeed, any \emph{non-inifinitesimal}) steps in parameter space. But there is one special case in which we can say something about the performance of NGD over large steps in parameter space: when NGD is applied to a MLE objective of an EF distribution with respect to the mean parameters of that distribution, a single undamped step will converge straight to the optimum (see Proposition \ref{thm:natgrad_meanparams_singlestep} in Appendix \ref{app:efnatgrads}).

When we apply SNGD with EF $\tilde q$, and with respect to mean parameters $\tilde\theta$, we meet 2 of the criteria for single-step convergence.  Although $\tilde f(\tilde\theta)$ is not in general a MLE for $\tilde q$, if it behaves \emph{approximately} like a MLE objective for $\tilde q$, then SNGD can make rapid progress over large distances in parameter space, and  \emph{approximately} converge in a single step.  And so, while SNGD loses the asymptotic efficiency of NGD under $q$, it is possible to obtain improved performance in the early to mid stages. This is exactly what we observe in the experiment of Section \ref{subsec:experiments_negbin}, as is further illustrated in Figure \ref{fig:ngdcomparison}.

The ability to make rapid progress early on is arguably more important for the overall performance of an optimiser in practice; however, if convergence to an exact (within machine precision range) optimum is required, it may be beneficial to switch to Newton-type methods during the late phase, an approach previously proposed by \citet{tatzel2022latephase}.

We now attempt to characterise what it means for $\tilde f$ to behave \emph{approximately} like a MLE for $\tilde q$. Let $\tilde q$ be an EF with mean parameters $\tilde\theta$. Furthermore, let $f(\tilde\theta)$ be an MLE objective for $q$ so that
\begin{align}
\begin{split}
    \tilde f(\tilde\theta) &= f(g(\tilde\theta)) \\
        &= -\mathbb{E}_{p_\mc{D}(x)}\big[\log q_{g(\tilde\theta)}(x)\big]
\end{split}
\end{align}
where $p_\mc{D}(x) = \frac{1}{n}\sum_{i=1}^n \delta(x - x_i)$ is the empirical density function. Using a trivial identity, we can rewrite this as
\begin{align}
\begin{split}
    \tilde{f}(\tilde\theta) &= \kl[\big]{\tilde q_{\tilde\theta^*}}{\tilde q_{\tilde\theta}} - \mathbb{E}_{p_\mc{D}(x)}\big[\log q_{g(\tilde\theta)(x)}\big] - \kl[\big]{\tilde q_{\tilde\theta^*}}{\tilde q_{\tilde\theta}} \\
        &= \tilde f^*(\tilde\theta) + h(\tilde\theta) \label{eqn:ftilde_decomp}
\end{split}
\end{align}
where $\tilde\theta^*$ are optimal parameters of $\tilde f$, and we have defined
\begin{align}
    \tilde f^*(\tilde\theta) &= \kl[\big]{\tilde q_{\tilde\theta^*}}{\tilde q_{\tilde\theta}},
\end{align}
and
\begin{align}
    h(\tilde\theta) &= -\mathbb{E}_{p_\mc{D}(x)}\big[\log q_{g(\tilde\theta)(x)}\big] - \kl[\big]{\tilde q_{\tilde\theta^*}}{\tilde q_{\tilde\theta}}.
\end{align}
$\tilde f^*(\tilde\theta)$ can be minimised by a single NGD step with respect to $\tilde q$ and $\tilde\theta$ (see Proposition \ref{thm:natgrad_meanparams_singlestep}). The second term, $h(\tilde\theta)$, can then be seen as distorting the natural gradient step, acting to move it away from the optimum $\tilde\theta^*$. If $h(\tilde\theta)$ is (approximately) constant with respect to $\tilde\theta$, then SNGD will (approximately) converge in a single step.

In Figure \ref{fig:negbin_sheeq_decomp} we show that in the sheep experiment of Section \ref{subsec:experiments_negbin}, the effect of the distortion term is indeed small, even when far from the optimum, illustrating why SNGD is able to make rapid progress on this problem.

\begin{figure}[ht]
    \centering
    \setlength{\baselineskip}{0.\baselineskip}
    \begin{subfigure}[t]{.3\linewidth}
        \centering
        \includegraphics[width=\linewidth]{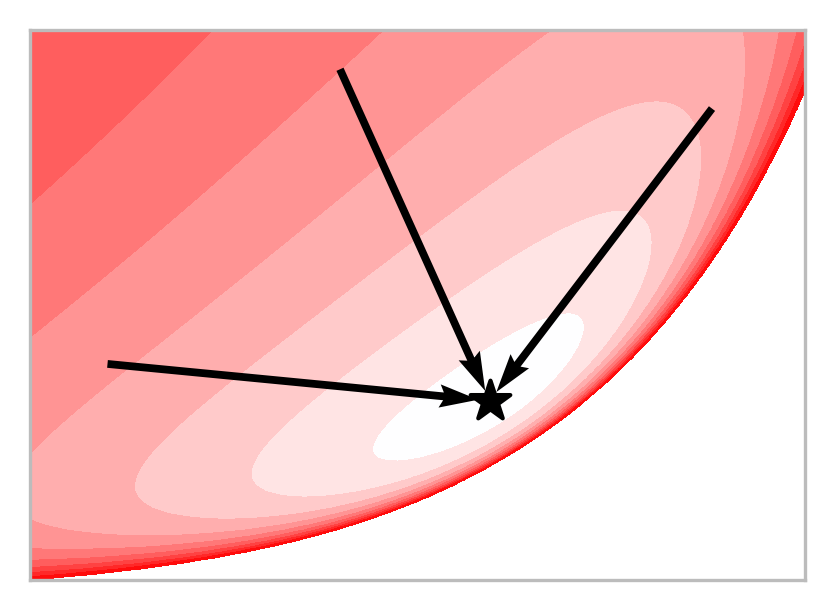}
        \caption{Contours of $\tilde f^*(\tilde\theta)$, Vectors show $[\tilde F(\tilde\theta)]^{-1}\nabla\tilde f^*(\tilde\theta)$ at 3 distinct locations; these always move straight to the optimum of $\tilde f$.}
        \label{fig:negbin_sheep_decomp_gammakl}
    \end{subfigure}
    \hfill
    \raisebox{.1\linewidth}{\scalebox{2}{$+$}}
    \hfill
    \begin{subfigure}[t]{.3\linewidth}
        \centering
        \includegraphics[width=\linewidth]{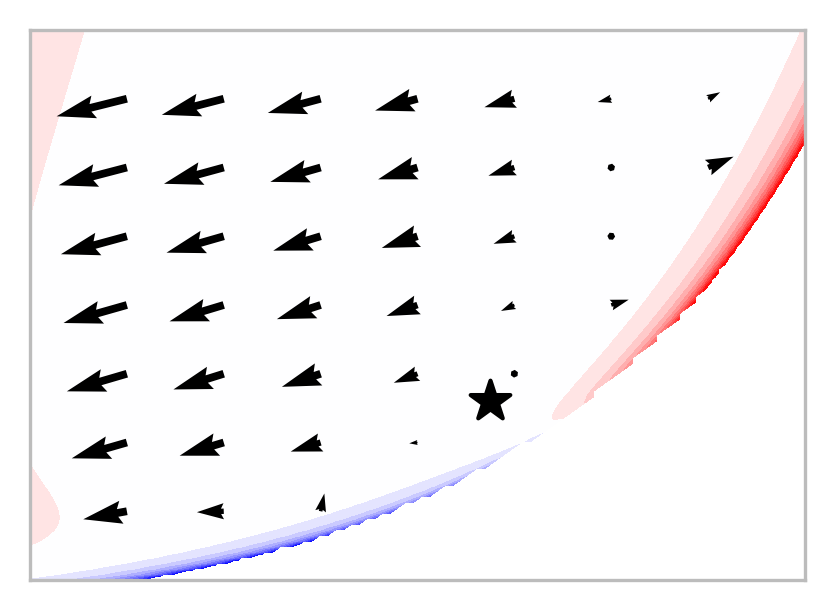}
        \caption{Contours of the distortion term $h(\tilde\theta)$. Vectors show $[\tilde F(\tilde\theta)]^{-1}\nabla h(\tilde\theta)$. at a regular grid of points}
        % \label{fig:negbin_sheeq_decomq_nuisance}
    \end{subfigure}
    \hfill
    \raisebox{.1\linewidth}{\scalebox{2}{$=$}}
    \hfill
    \begin{subfigure}[t]{.3\linewidth}
        \centering
        \includegraphics[width=\linewidth]{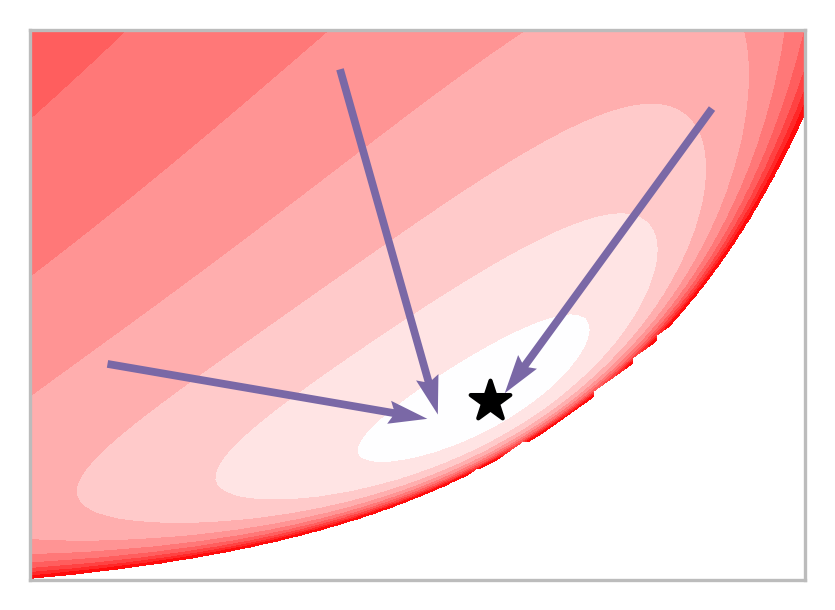}
        \caption{Contours of $\tilde f(\tilde\theta)$. Vectors now show the resulting SNGD step, $[\tilde F(\tilde\theta)]^{-1}\nabla\tilde f(\tilde\theta)$. This is the sum of the previous two vector fields.}
        % \label{fig:negbin_sheeq_decomq_ftilde}
    \end{subfigure}
    \caption{The objective, and SNGD steps for the sheep experiment of Section \ref{subsec:experiments_negbin}, each decomposed according to equation \eqref{eqn:ftilde_decomp}. The star marks the optimum of $\tilde f$.}
    \label{fig:negbin_sheeq_decomp}
\end{figure}

To add some intuition behind $h(\tilde\theta)$, let us assume that the data were generated from the model distribution $q$ with parameters $\theta^*$. Furthermore, take the infinite data limit,\footnote{We do this for convenience; for finite samples, the resulting expressions hold in expectation (over random data sets).} so that
\begin{align}
    \tilde f(\tilde\theta) &= -\mathbb{E}_{q_{\theta^*}(x)}\big[\log q_{g(\tilde\theta)}(x)\big]
\end{align}
and $\theta^* = g(\tilde\theta^*)$. The distortion term $h(\tilde\theta)$ can then be written as
\begin{align}
    \begin{split}
    h(\tilde\theta) &= -\mathbb{E}_{q_{\theta^*}(x)}\big[\log q_{g(\tilde\theta)}(x)\big] - \kl[\big]{\tilde q_{\tilde\theta^*}}{\tilde q_{\tilde\theta}} \\
        &= \kl[\big]{q_{g(\tilde\theta^*)}}{q_{g(\tilde\theta)}} - \kl[\big]{\tilde q_{\tilde\theta^*}}{\tilde q_{\tilde\theta}} + c\end{split}\label{eqn:distortionterm_infinitedata}
\end{align}
where $c$ is constant with respect to $\tilde\theta$.  For $h(\tilde\theta)$ to be roughly constant, we then require 
\begin{align}
    \nabla_{\tilde\theta}\big(\kl[\big]{\tilde q_{\tilde\theta^*}}{\tilde q_{\tilde\theta}}\big) &\approx \nabla_{\tilde\theta}\big(\kl[\big]{q_{g(\tilde\theta^*)}}{q_{g(\tilde\theta)}}\big). \label{eqn:behaveslike}
\end{align}
This is similar to the statement expressed by equation \eqref{eqn:klequality}, concerning the effect of $\tilde\theta$ on \emph{reverse} KL divergences between nearby points, whereas now we have a statement about the effect of $\tilde\theta$ on \emph{forward} KL divergences from the (potentially distant) optimal point.

\section{EF NATURAL GRADIENTS}
\label{app:efnatgrads}

When NGD is applied to objectives involving a KL divergence (or related quantity), it has special properties when it is performed with respect to either the natural or mean parameters of an EF distribution, as the following propositions show.

\begin{proposition}
    \label{thm:natgrad_meanparams_singlestep}
    For EF $q$ with mean parameters $\mu \in \mathcal{M}$, and mean domain $\mathcal{M}$, let
    \begin{align}
        f(\mu) &= -\mathbb{E}_{p^*(x)}[\log q_\mu(x)] + c
    \end{align}
    where $p^*$ is any distribution with $\mathrm{supp}[p^*] \subseteq \mathrm{supp}[q]$, and $c$ is constant with respect to $\mu$. Then, $\forall\ \mu \in \mathcal{M}$,
    \begin{align}
        \mu - [F(\mu)]^{-1} \nabla f(\mu) &= \argmin_{\mu'\in\mathcal{M}}f(\mu').
    \end{align}
\end{proposition}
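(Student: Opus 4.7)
The plan is to combine the EF natural-gradient identity \eqref{eqn:ef_natgrad_meanparams} with the standard EF facts $\nabla A(\eta) = \mu(\eta)$ and that $A$ is strictly convex on the natural domain (since $\nabla^2 A(\eta) = F(\eta)$, which is positive definite for a minimal EF). The key observation is that the right-hand side of the update should collapse to a quantity independent of $\mu$, and that quantity should coincide with the unique stationary point of $f$.

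First I would express $f$ in natural parameters. Writing $f_\eta(\eta) = f(\mu(\eta))$ and using the EF density,
\begin{equation*}
f_\eta(\eta) = -\mathbb{E}_{p^*(x)}\bigl[\log\nu(x) + t(x)^\top\eta - A(\eta)\bigr] + c,
\end{equation*}
so that $\nabla f_\eta(\eta) = -\mathbb{E}_{p^*(x)}[t(x)] + \nabla A(\eta) = -\mathbb{E}_{p^*(x)}[t(x)] + \mu(\eta)$. Now applying the mean-parameter natural-gradient identity \eqref{eqn:ef_natgrad_meanparams},
\begin{equation*}
[F(\mu)]^{-1}\nabla f(\mu) = \nabla f_\eta\bigl(\eta(\mu)\bigr) = \mu - \mathbb{E}_{p^*(x)}[t(x)].
\end{equation*}
Substituting into the proposed one-step update gives
\begin{equation*}
\mu - [F(\mu)]^{-1}\nabla f(\mu) = \mathbb{E}_{p^*(x)}[t(x)],
\end{equation*}
which is manifestly independent of $\mu$.

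It remains to identify this point with $\argmin_{\mu'\in\mathcal{M}} f(\mu')$. Because the Fisher is positive definite on a minimal family, $A$ is strictly convex in $\eta$, hence $f_\eta$ is strictly convex, and any stationary point is the unique global minimiser. Setting $\nabla f_\eta(\eta) = 0$ yields $\mu(\eta) = \mathbb{E}_{p^*(x)}[t(x)]$, so via the bijection $\eta\leftrightarrow\mu$ the unique minimiser in mean parameters is exactly $\mathbb{E}_{p^*(x)}[t(x)]$.

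The one delicate point — and the step I expect to need the most care — is verifying that $\mathbb{E}_{p^*(x)}[t(x)]$ actually lies in $\mathcal{M}$, so that the claimed equality makes sense as an element of $\mathcal{M}$. The hypothesis $\mathrm{supp}[p^*]\subseteq\mathrm{supp}[q]$ places this expectation in the convex hull of $\{t(x)\}$, but minimality and the fact that $\mathcal{M}$ is the (open) interior of that hull are what guarantee the corresponding $\eta$ exists. Once this is in hand, strict convexity of $f_\eta$ closes the argument, and the whole proof reduces to the short chain of identities above.
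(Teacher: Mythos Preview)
Your proposal is correct and follows essentially the same route as the paper: both use the EF natural-gradient identity \eqref{eqn:ef_natgrad_meanparams} together with $\nabla A(\eta)=\mu(\eta)$ to reduce the update to $\mathbb{E}_{p^*(x)}[t(x)]$, and then invoke strict convexity of $A$ (hence of $f_\eta$) to identify this as the unique minimiser. Your additional remark about verifying $\mathbb{E}_{p^*(x)}[t(x)]\in\mathcal{M}$ is a point the paper does not explicitly address.
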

\begin{proof}
    First, let $\eta$ be dually coupled with $\mu$, so that $\mu = \mathbb{E}_{q_\eta(x)}[t(x)]$. From \eqref{eqn:ef_natgrad_meanparams}, we have that
    \begin{align}
    \begin{split}
        \mu - [F(\mu)]^{-1} \nabla f(\mu) &= \mu + \nabla_\eta\big(\mathbb{E}_{p^*(x)}\big[\log q_\eta(x)\big]\big) \\
            &= \mu + \nabla_\eta\big(\mathbb{E}_{p^*(x)}\big[t(x)^\top\eta - A(\eta) + \log\nu(x)\big]\big) \\
            &= \mu - \nabla A(\eta) + \mathbb{E}_{p^*(x)}\big[t(x)\big] \\
            &= \mathbb{E}_{p^*(x)}\big[t(x)\big]
    \end{split}
    \end{align}
    where we have used the standard EF identity $\nabla A(\eta) = \mu(\eta)$. It remains to show that $\mathbb{E}_{p^*(x)}[t(x)] = \argmin_{\mu'\in\mathcal{M}}f(\mu')$. Let us reparameterise $f(\mu)$ as $f_\eta(\eta) = f(\mu(\eta))$, then
    \begin{align}
        \begin{split}\nabla f_\eta(\eta) &= -\nabla_\eta\big(\mathbb{E}_{p^*(x)}\big[t(x)^\top\eta - A(\eta) + \log\nu(x)\big]\big) \\
            &= \mu - \mathbb{E}_{p^*(x)}\big[t(x)\big],
        \end{split}
    \end{align}
    and equating to zero, we find the unique stationary point at $\mu = \mathbb{E}_{p^*(x)}[t(x)]$. To show that this is a minimum, note that 
    \begin{align}
        \nabla^2 f_\eta(\eta) &= \nabla^2 A(\eta).
    \end{align}
    Given that $A$ is strictly convex and twice-differentiable (see \citet{wainwright2008expfam}), then $f_\eta$ is also strictly convex, and so must be minimised when $\mu = \mathbb{E}_{p^*(x)}[t(x)]$.
\end{proof}

Proposition \eqref{thm:natgrad_meanparams_singlestep} says that (undamped) NGD with respect to the mean parameters of EF $q$ will converge in a single step when the objective is a cross entropy from \emph{any} distribution $p^*$ to $q_\mu$ plus a constant that does not depend on $\mu$. Two straightforward corollaries of this are that NGD also has single step convergence when $f(\mu) = \mathrm{KL}\big[p^*(x)\ ||\ q_\mu(x)\big]$ (a forward KL divergence), or when $f(\mu) = -\sum_{i=1}^n \log q_\mu(x_i)$ (a MLE objective).

Typically, when performing MLE of an EF distribution, we simply think of it as `solving' for the parameters, or `moment matching'; Proposition \eqref{thm:natgrad_meanparams_singlestep} shows that actually, this is equivalent to performing (undamped) NGD with respect to $\mu$.

There is an \emph{almost}-symmetry regarding NGD with respect to the natural parameters of $q$, as the following proposition states.

\begin{proposition}
    \label{thm:natgrad_natparams_singlestep}
    For EF $q$ with natural parameters $\eta \in \mathcal{O}$, and natural domain $\mathcal{O}$, let
    \begin{align}
        f(\eta) &= \mathrm{KL}\big[q_\eta(x)\ ||\ q_{\eta^*}(x)\big] + c
    \end{align}
     where $q_{\eta^*}$ is from the same EF as $q$ with natural parameter $\eta^*$. Then, $\forall\ \eta \in \mathcal{O}$,
    \begin{align}
        \eta - [F(\eta)]^{-1} \nabla f(\eta) &= \argmin_{\eta'\in\mathcal{O}}f(\eta').
    \end{align}
\end{proposition}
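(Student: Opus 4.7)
The plan is to follow the same pattern as the proof of Proposition~\ref{thm:natgrad_meanparams_singlestep}, but using the identity \eqref{eqn:ef_natgrad_natparams} in place of \eqref{eqn:ef_natgrad_meanparams}. First I would expand the KL divergence using the EF form of $q_\eta$ and $q_{\eta^*}$, giving
\begin{align*}
    f(\eta) &= \mu(\eta)^\top(\eta - \eta^*) - A(\eta) + A(\eta^*) + c.
\end{align*}
By \eqref{eqn:ef_natgrad_natparams}, the natural gradient of $f$ at $\eta$ equals the ordinary gradient of its mean-parameterised version $f_\mu$ evaluated at $\mu(\eta)$, where $f_\mu(\mu) = f(\eta(\mu))$.

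Next I would compute $\nabla f_\mu(\mu)$. Using the convex conjugate $A^*(\mu) = \mu^\top\eta(\mu) - A(\eta(\mu))$, a standard property of EFs gives $\nabla A^*(\mu) = \eta(\mu)$. Substituting,
\begin{align*}
    f_\mu(\mu) &= A^*(\mu) - \mu^\top\eta^* + A(\eta^*) + c,
\end{align*}
so that $\nabla f_\mu(\mu) = \eta(\mu) - \eta^*$. Plugging into the NGD update,
\begin{align*}
    \eta - [F(\eta)]^{-1}\nabla f(\eta) &= \eta - \nabla f_\mu\bs(\mu(\eta)\bs) = \eta - (\eta - \eta^*) = \eta^*.
\end{align*}

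Finally I would verify that $\eta^*$ is the unique minimiser of $f$ on $\mathcal{O}$. Non-negativity of the KL divergence gives $f(\eta) \ge c$ with equality at $\eta = \eta^*$, so $\eta^*$ is a minimiser. Uniqueness follows from the minimality assumption on the family: $A$ is strictly convex, hence so is $A^*$, so $f_\mu$ (and therefore $f$ through the one-to-one correspondence $\eta \leftrightarrow \mu$) has a unique stationary point.

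I do not anticipate a serious obstacle here; the computation is essentially dual to that of Proposition~\ref{thm:natgrad_meanparams_singlestep}. The only subtlety is recognising the convex-conjugate identity $\nabla A^*(\mu) = \eta(\mu)$, which is what makes the natural-parameter gradient of $f_\mu$ collapse to $\eta(\mu) - \eta^*$ and thus makes the single-step convergence fall out cleanly. Note that this works specifically because the target in the KL is drawn from the same exponential family as $q$; without that, $f_\mu$ would not reduce to $A^*(\mu)$ plus a linear-in-$\mu$ term, and the argument would not go through (which is why the proposition requires this assumption and the analogue in Proposition~\ref{thm:natgrad_meanparams_singlestep} does not).
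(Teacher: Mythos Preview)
Your proposal is correct and follows essentially the same route as the paper: both use identity \eqref{eqn:ef_natgrad_natparams} to replace the natural gradient by $\nabla f_\mu(\mu(\eta))$, rewrite the KL as $A^*(\mu) - \mu^\top\eta^* + A(\eta^*)$ (plus constants), apply $\nabla A^*(\mu) = \eta(\mu)$ to obtain $\eta - (\eta - \eta^*) = \eta^*$, and then argue uniqueness of the minimiser from properties of the KL divergence. Your additional remarks on strict convexity of $A^*$ and on why the same-family assumption is essential are sound and align with the paper's post-proof commentary.
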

\begin{proof}
    First, let $\mu$ be dually coupled with $\eta$, so that $\mu = \mathbb{E}_{q_\eta(x)}[t(x)]$. From \eqref{eqn:ef_natgrad_natparams}, we have that
    \begin{align}
    \begin{split}
        \eta - [F(\eta)]^{-1} \nabla f(\eta) &= \eta - \nabla_\mu\big(\mathrm{KL}\big[q_\mu(x)\ ||\ q_{\eta^*}(x)\big]\big) \\
            &= \eta - \nabla_\mu\big(A^*(\mu) + A(\eta^*) - \mu^\top\eta^*\big) \\
            &= \eta - \nabla A^*(\mu) + \eta^* \\
            &= \eta^*
    \end{split}
    \end{align}
    where: $q_\mu$ denotes the EF distribution with mean parameter $\mu$; $A^*$, the convex dual of A, is the negative entropy of $q$; and we have used the identity $\nabla A^*(\mu) = \eta$. It is clear from the properties of the KL divergence that $\eta^*$ must be the unique minimiser for $f(\eta)$.
\end{proof}
Note that Proposition \ref{thm:natgrad_natparams_singlestep} requires $q_{\eta^*}$ to be from the same family as $q_\eta$, which is not the case for the corresponding result with respect to mean parameters. This is a generalisation of a result found in \citet{hensman2013gp,salimbeni2018natgrad} for sparse Gaussian process VI. It is similar to to a result given by \citet{sato2001vb} for variational Bayes. Note that an analogous result to Proposition \ref{thm:natgrad_natparams_singlestep} can be derived for any Bregman divergence in the same manner.

\section{EF MIXTURE MODELS}
\label{app:efmixture}

An EF mixture with $k$ components has density
\begin{align}
    q(x) &= \sum_{i=1}^k\pi_i\nu(x)\exp\big(t(x)^\top \eta_i - A(\eta_i)\big)
\end{align}
where $\pi_i > 0$ are the mixture probabilities with $\pi \in \Delta^{k-1}$, and $\eta_i$ are the natural parameters of mixture component $i$. We can also define a related \textit{mixture model}, a joint distribution in which we consider the mixture identity $z \in \{1, ..., k\}$ as a latent variable, with density
\begin{align}
    q(z, x) &= \pi_z\nu(x)\exp\big(t(x)^\top \eta_z - A(\eta_z)\big).\label{eqn:efmixturemodel}
\end{align}
A finite EF mixture is not, in general, an EF.  However, the corresponding mixture model (joint distribution) \textit{is} an EF. Consider its density,
\begin{align}
\begin{split}
    q(z, x) &= \pi_z\nu(x)\exp\left(t(x)^\top \eta_z - A(\eta_z)\right) \\
        &= \sum_{i=1}^k\mathbb{I}_i(z)\pi_i\nu(x)\exp\big(t(x)^\top \eta_i - A(\eta_i)\big) \\
        &= \nu(x)\prod_{i=1}^k\left[\pi_i\exp\big(t(x)^\top \eta_i - A(\eta_i)\big)\right]^{\mathbb{I}_i(z)} \\
        &= \nu(x)\exp\Bigg(
    \sum_{i=1}^k\mathbb{I}_i(z)\log\pi_i + \mathbb{I}_i(z)t(x)^\top \eta_i - \mathbb{I}_i(z)A(\eta_i)\Bigg) \\
        &= \nu(x)\exp\Bigg[
    \Bigg(\sum_{i=1}^{k-1}\mathbb{I}_i(z)\Big(\log\frac{\pi_i}{\pi_k} - A(\eta_i) + A(\eta_k)\Big)\Bigg) + \Bigg(\sum_{i=1}^k\mathbb{I}_i(z)t(x)^\top \eta_i\Bigg) + \log\pi_k - A(\eta_k)\Bigg], \label{eqn:efmixture_expfam}
\end{split}
\end{align}
where $\mathbb{I}_i(z)$ is an indicator function for the singleton set $\{i\}$. We can then recognise \eqref{eqn:efmixture_expfam} as having the form of an EF density with: base measure $\nu(.)$; log-partition $A(\eta_k) - \log\pi_k$; and sufficient statistics and natural parameters as listed in Table \ref{tbl:efmixture_statsparams}. 
Note that the statistic functions are linearly independent, and so the EF defined by \eqref{eqn:efmixturemodel} is minimal.

\begin{table}[ht]
    \caption{Sufficient statistics and natural parameters for an EF mixture model. Note that $t$ is a vector-valued function, and each $\eta_i$ is a vector of equal dimension.}
  \label{tbl:efmixture_statsparams}
  \centering
  \begin{tabular}{cc}
    \toprule
    Sufficient statistic & Natural parameter \\
    \midrule
        $\mathbb{I}_1(z)$ & $\log\frac{\pi_1}{\pi_k} - A(\eta_1) + A(\eta_k)$ \\
        ... & ... \\
        $\mathbb{I}_{k-1}(z)$ & $\log\frac{\pi_1}{\pi_k} - A(\eta_{k-1}) + A(\eta_k)$ \\
        $\mathbb{I}_1(z)t(x)$ & $\eta_1$ \\
        ... & ... \\
        $\mathbb{I}_k(z)t(x)$ & $\eta_k$ \\
    \bottomrule
  \end{tabular}
\end{table}

\end{document}